\documentclass[11pt, a4paper, logo]{googlecloud}

\pdftrailerid{redacted}

\makeatletter
\renewcommand\bibentry[1]{\nocite{#1}{\frenchspacing\@nameuse{BR@r@#1\@extra@b@citeb}}}
\makeatother

\usepackage{hyperref}
\usepackage{url}
\usepackage{natbib}
\usepackage{amsmath, amssymb, amsthm}
\usepackage{algorithm}
\usepackage{algpseudocode}
\usepackage{xcolor}
\usepackage[table]{xcolor}
\newcommand{\stderr}[1]{%
    {\scriptsize\color{gray!85}$\pm$ #1}%
}
\usepackage{booktabs} 
\usepackage[T1]{fontenc}
\usepackage{enumitem}
\usepackage{graphicx}
\usepackage{wrapfig}
\usepackage{graphicx}
\usepackage{bbm}
\usepackage{fontawesome5}
\usepackage{twemojis}
\usepackage{subcaption}
\usepackage{circledsteps}
\usepackage{multirow}

\usepackage[most]{tcolorbox}
\usepackage{fancyvrb}
\usepackage{fvextra}
\fvset{
  fontsize=\small,
  breaklines=true,
  breaksymbolleft={},
  breaksymbolright={},
  breakautoindent=false,   
}
\tcbset{
    promptbox/.style={
        enhanced,
        breakable,
        colback   = blue!3,
        colframe  = blue!45!black,
        colbacktitle = blue!45!black,
        coltitle  = white,
        fonttitle = \bfseries,
        boxrule   = 0.6pt,
        arc       = 2mm,
        left      = 3mm, right = 3mm, top = 2mm, bottom = 2mm,
        title     = {#1},
    },
}

\usepackage{pgfplots}
\pgfplotsset{compat=1.18}

\definecolor{mydarkblue}{rgb}{0,0.08,0.45}

\usepackage{minitoc}

\newtheorem{assumption}{Assumption}

\newtheorem{proposition}{Proposition}
\newtheorem{definition}{Definition}

\usepackage[capitalize,noabbrev]{cleveref}

\usepackage[textsize=tiny]{todonotes}

\title{\texttwemoji{bullseye} \textsc{AIM}: Agentic Idea Management for Automated Research}

\correspondingauthor{hyeongkyu.choi@wisc.edu$^\dagger$, bhavana@google.com, jiefengc@google.com\\$^\dagger$ This work was done while Hyeong Kyu Choi was a Student Researcher at Google Cloud AI Research.}

\author[1,2]{Hyeong Kyu Choi}
\author[1]{Bhavana Dalvi Mishra}
\author[1]{Jiefeng Chen}
\author[1]{Mihir Parmar}
\author[1]{Rui Meng}
\author[1]{Chun-Liang Li}
\author[1]{\\Xiangru Tang}
\author[2]{Sharon Li}
\author[1]{Jinsung Yoon}
\author[1]{Tomas Pfister}

\affil[1]{Google Cloud AI Research}
\affil[2]{University of Wisconsin-Madison}

\begin{abstract}
Frontier LLMs are increasingly used to automate scientific research through iterative search.
We distinguish idea-driven search from solution-driven search and identify three core challenges: organizing evolving research ideas, selecting promising directions, and maintaining alignment between ideas and their implementations.
To address these challenges, we introduce the \emph{Agentic Idea Manager}~(\textsc{AIM}), a fully autonomous framework for managing and exploring research directions in idea-driven automated research.
Inspired by Bayesian optimization, \textsc{AIM} uses an Agentic Surrogate and an Agentic Acquisition mechanism to organize discovered ideas and guide their selection.
A Solution Auditor maintains idea--solution integrity, while a Resource Planner adaptively allocates the remaining experimental budget across parallel search branches.
Experiments on 10 AutoLab benchmark tasks show that \textsc{AIM} 
surpasses the strongest baseline by 1.6 percentage points on System Optimization tasks and 4.9 percentage points on long-horizon Model Development \& CUDA tasks.
Notably, \textsc{AIM} reaches the best baseline performance up to 3.1$\times$ faster in wall-clock time.
We further provide a theoretical analysis of \textit{when searching over ideas becomes beneficial}.
Our analysis shows that explicit idea-level allocation makes semantic coverage directly controllable, and that broader coverage becomes increasingly valuable when competitive research directions are sparse among many plausible alternatives.\\
\faGlobe\enspace\textbf{Project Page: } \href{https://imhgchoi.github.io/agentic-idea-manager/}{https://imhgchoi.github.io/agentic-idea-manager/}
\end{abstract}

\begin{document}
\doparttoc 
\faketableofcontents 

\maketitle

\section{Introduction}
\label{sec:introduction}
Large language model (LLM) agents are increasingly used to automate scientific research through iterative experimentation.
Modern research agents can propose candidate approaches, implement solutions, execute experiments, inspect verifier feedback, and use accumulated evidence to determine what to try next~\citep{lu2024ai,jiang2025aide,toledo2026ai,meng2026scientistone}.
Because implementation and evaluation are expensive, effective automated research requires allocating a limited experimental budget across possible approaches.
The resulting search process must support both the discovery of promising directions and the refinement of their implementations.

In this work, we introduce the first categorization of automated research, according to its primary unit of search: \emph{solution-driven} and \emph{idea-driven}. 
Solution-driven approaches~(Figure~\ref{fig:dummy}~(a)) search directly over executable artifacts, using verifier feedback to iteratively modify candidate solutions~\citep{cemri2026adaevolve,liu2026evox,toledo2026ai,jiang2025aide}.
Idea-driven approaches~(Figure~\ref{fig:dummy}~(b)), on the other hand, maintain research ideas as explicit object of reasoning; they select \emph{which idea to investigate} and delegate \emph{how to implement it} to a solver~\citep{meng2026scientistone,jin2026toward,yamada2025ai,weng2026deepscientist}.
Both paradigms ultimately evaluate executable solutions, but organize search at different levels of abstraction.
Working directly with solutions supports fine-grained code refinement, while idea-driven methods facilitate comparison across approaches, helps the transfer of lessons, and makes research trajectories easier to interpret.

\begin{figure}[t!]
    \centering
    \includegraphics[width=0.65\linewidth]{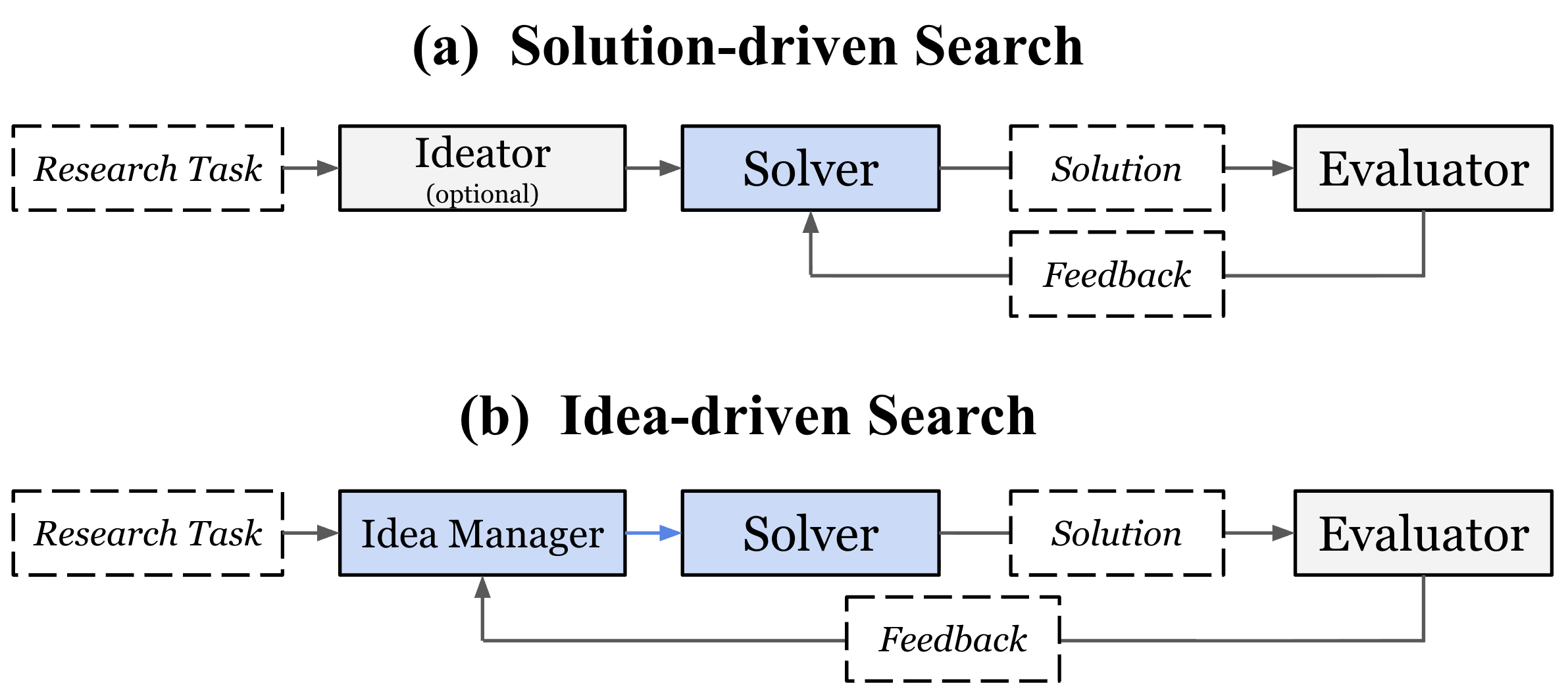}
    \caption{\textbf{Solution-driven vs Idea-driven comparison.} While Solution-driven approaches take the solution code as the target object for optimization, Idea-driven approaches first reason over research ideas, directions, or hypotheses based on evaluator feedback, after which the chosen ideas are implemented by the solver agent.}
    \label{fig:dummy}
\end{figure}
This categorization highlights the management of research ideas as a distinct design problem in automated research, with three central challenges in determining the \emph{idea management structure}, \emph{idea selection strategies}, and ensuring \emph{idea--solution integrity}.
To address these challenges, we introduce the \textbf{{Agentic Idea Manager}~(\textsc{AIM})}, a fully-autonomous research idea management framework for automated research~(Figure~\ref{fig:mainfig}).
Inspired by the surrogate-acquisition structure of Bayesian optimization, its \emph{Agentic Surrogate} maintains a dynamic structure of idea clusters and ranks their promise using experimental evidence, grouping related proposals across generation lineages.
The \emph{Agentic Acquisition} mechanism balances exploration and exploitation at both cluster- and idea-level selection, dispatching chosen candidates to parallel solvers.
After implementation and evaluation, the \emph{Solution Auditor} checks task validity and idea--solution integrity, and the audited evidence guides subsequent direction of research.
Finally, the \emph{Resource Planner} adjusts parallelism across iterations, balancing concurrent trials with longer exploration under a fixed experimental budget.

We evaluate \textsc{AIM} on ten AutoLab tasks spanning System Optimization and Model Development \& CUDA tasks.
\textsc{AIM} achieves the highest average scores in both task groups: {67.0\%} on System Optimization and {55.8\%} on long-horizon Model Development \& CUDA, exceeding the strongest baseline, ScientistOne~\citep{meng2026scientistone}, by {1.6} and {4.9 percentage points}, respectively.
Moreover,  \textsc{AIM} reaches ScientistOne's best score up to 3.1$\times$ faster, demonstrating its efficiency and supporting the value of agentic idea organization, evidence-guided selection, and audited feedback for automated research.
In addition to empirical studies, we further provide theoretical analyses to identify when explicit idea-level search is useful.
Our analysis shows that idea-driven search makes such breadth directly controllable through explicit allocation to distinct ideas.
More importantly, broader coverage becomes increasingly valuable when a task contains many plausible research directions but only a small fraction are competitive.

Our main contributions are:

\begin{itemize}
    \item We introduce a categorization of automated research methods: \textit{solution-driven} and \textit{idea-driven}. We identify \textit{three central challenges} of idea-driven approaches: idea management, idea selection, and idea--solution integrity.
    \item We introduce \textit{Agentic Idea Manager}~(\textsc{AIM}), a fully-agentic idea-driven framework that effectively manages ideas as semantic clusters, selects ideas based on its explicit estimation of performance, and audits the solutions to achieve a reliable research pipeline.
    \item We achieve \textit{strong empirical performance} on 10 Autolab tasks, surpassing the strongest baseline by 1.6pp on System Optimization and 4.9pp on long-horizon Model Development \& CUDA tasks. We also provide \textit{rigorous theoretical analyses} of when idea-driven search can be useful.
\end{itemize}
\section{Related Works}

\paragraph{Solution-driven Search.}
Solution-driven approaches remain close to the verifier and make the executable implementation the primary unit of search.
AIDE~\citep{jiang2025aide} and AIRA~\citep{toledo2026ai} explore candidate solutions through tree search, while AlphaEvolve~\citep{novikov2025alphaevolve}, AdaEvolve~\citep{cemri2026adaevolve}, and EvoX~\citep{liu2026evox} use evolutionary mechanisms.
MLE-Star~\citep{nam2026mle}, DS-Star~\citep{nam2025ds}, and RPM~\citep{foster2026ai} further develop solution-level refinement and selection.
Solution-driven search's tight coupling of the evaluator and solver lets experimental feedback directly guide implementation changes, but can entangle progress on a research direction with engineering decisions about a particular artifact.

\paragraph{Idea-driven Search.}
Idea-driven approaches make research ideas or hypotheses explicit search objects, then delegate their implementation to a solver.
The AI Scientist-v2~\citep{yamada2025ai}, MARS~\citep{chen2026mars}, and Arbor~\citep{jin2026toward} manages a tree of ideas or lessons, and DeepScientist~\citep{weng2026deepscientist} maintains the research ideas and hypotheses as a list.
A subset of ideas is implemented by a solver module.
ScientistOne~\citep{meng2026scientistone}, on the other hand, utilizes a beam-search-like algorithm that keeps the best-performing ideas throughout the research iterations.
These approaches separates idea search from implementation, enabling deliberate exploration of distinct ideas, but requires deciding which should merit costly experiments and evaluations.

This idea-driven search introduces three central challenges:
(1) \textit{Idea management scaffold}: the framework must organize an expanding pool of ideas and accumulated evidence into a persistent, interpretable representation.
(2) \textit{Idea selection}: it must select which ideas to implement and how to allocate the remaining budget; existing methods typically rely on fixed rules such as UCB~\citep{weng2026deepscientist} or MCTS~\citep{yamada2025ai}, or naively use LLM judgments without explicit estimates~\citep{jin2026toward}.
(3) \textit{Idea--Solution integrity}: the separation between ideation and implementation creates an integrity risk.
A solver may produce code that does not faithfully realize the selected idea, causing its score and derived lessons to be misattributed.
These challenges motivate jointly managing the idea space, making evidence-grounded search decisions, and verifying idea--solution alignment, which we address with the our proposed method.
In this work, we propose an idea-driven research framework that addresses these challenges.
A formal discussion on when to search over ideas is provided in Section~\ref{sec:theory}. 
An extended section for Related Works is in Appendix~\ref{apdx:related-works}.
\section{Automated Research: Problem Definition}
\label{sec:setup}

In automated research, it is generally infeasible to evaluate every plausible idea because implementation and verifier calls are expensive.
Effective automated research therefore requires more than sequential idea and code generation.
It requires a structured representation of the ideas and solutions discovered so far, and a principled mechanism to navigate the search process under a limited computational budget.
Accordingly, we formally define the relevant search space and objectives:

\paragraph{Idea and Solution Spaces.}
Let $\mathcal{X}$ denote the space of admissible research ideas for task $\tau$, where each $x\in\mathcal{X}$ is a natural-language description of a candidate approach.
In this work, an \textit{idea} is defined as, but not limited to, a structured text comprising a short title, brief hypothesis/abstract, and experiment plans~(see Appendix~\ref{example1} for examples).
At timestep $t$, the agent has access only to a finite pool
$\mathcal{P}_t\subset\mathcal{X}$ containing the ideas discovered thus far.
Meanwhile, we distinguish the space of executable solutions $\mathcal{Z}$ from the idea space $\mathcal{X}$.
Given an idea $x$, an implementation process $g:\mathcal{X}\rightarrow\mathcal{Z}$ produces an executable solution $z=g(x)$, assuming a single deterministic mapping per idea in this work, which is then scored by a fixed verifier $v:\mathcal{Z}\rightarrow\mathbb{R}$:
\begin{equation}
z=g(x), \qquad y=v(z) \quad \Longrightarrow \quad f(x)\triangleq v(g(x)).
\label{eq}
\end{equation}
Here, $f$ denotes the expensive research process for implementation and evaluation.

\paragraph{Objective.} 
Evaluating $f(x)$ requires first translating the natural-language idea into an executable implementation and then compiling and running that implementation against a fixed verifier. 
Consequently, the dominant cost is not in proposing an idea, but obtaining reliable evidence about its quality through implementation and verification, which can itself even be very expensive and noisy.
Thus, the core objective is to identify the highest-performing idea within a given computational budget measured in the number of experiment executions and/or runtime hours for search.
Let $c(x)$ denote the cost of implementing and verifying idea $x$, and let $N$ be the total computational budget.
For a sequence of evaluated ideas $x_1,\ldots,x_n$, the objective is to maximize the best performance:
\begin{equation}
\max_{x_1,\ldots,x_n\in\mathcal{X}}
\;
\max_{1\leq i\leq n} f(x_i)
\qquad
\text{s.t.}
\qquad
\sum_{i=1}^{n}c(x_i)\leq N.
\label{eq:objective}
\end{equation}
Within this framework, the agent must therefore determine which subset of ideas to explore first.

\section{The \textsc{AIM} Framework}
\label{sec:method}

\noindent\textbf{Overview.}
We introduce the \textbf{Agentic Idea Manager}~(\textsc{AIM}), a fully autonomous framework for managing and searching research ideas under a limited experimental budget. 
For a research task $\tau$, \textsc{AIM} maintains the search state
\begin{equation}
\mathcal{S}_t =
\left(
\mathcal{T},
\mathcal{P}_t,
\mathcal{M}_t,
\mathcal{C}_t,
\mathcal{R}_t,
\mathcal{D}_t
\right),
\label{eq}
\end{equation}
where $\mathcal{T}$ is the fixed task context, $\mathcal{P}_t$ is the current idea pool, $\mathcal{M}_t$ is the memory or lessons that store verified lessons from previous experiments, $\mathcal{C}_t$ is the semantic organization of the pool, $\mathcal{R}_t$ contains ordinal promisingness estimates over clusters and ideas, and $\mathcal{D}_t$ contains past idea--score observations. 
Following \cite{meng2026scientistone}, we adopt its ideator structure and its construction of the task context.
Specifically, $\mathcal{T}$ consists of the original task description and an initial research brief that provides supplementary context for ideation\footnote{In this work, the research brief is generated from the task description using Claude Code~\citep{claude_code}}.

Drawing functional inspiration from Bayesian optimization, \textsc{AIM} comprises an \emph{Agentic Surrogate} that organizes ideator-generated candidates and estimates the relative promise of clusters and ideas~(Section~\ref{sec:agentic-surrogate}), and an \emph{Agentic Acquisition} module that selects ideas and adaptively balances exploration and exploitation~(Section~\ref{sec:agentic-acquisition}). 
Each selected idea is implemented and evaluated by an independent Solver. 
The Solution Auditor then validates the result and reconciles the intended idea with the mechanism realized in code~(Section~\ref{sec:solution-auditor}).
The audited observations and lessons are used to update the search state and expand the idea pool for the next iteration.
Finally, the \emph{Resource Planner} determines how the remaining experiment budget is allocated across subsequent search iterations~(Section~\ref{sec:resource-planner}). 
A visual overview is in Figure~\ref{fig:mainfig}, and the algorithm for \textsc{AIM} is in Algorithm~\ref{alg:aim}.

\begin{figure}[t]
    \centering
    \includegraphics[width=0.9\linewidth]{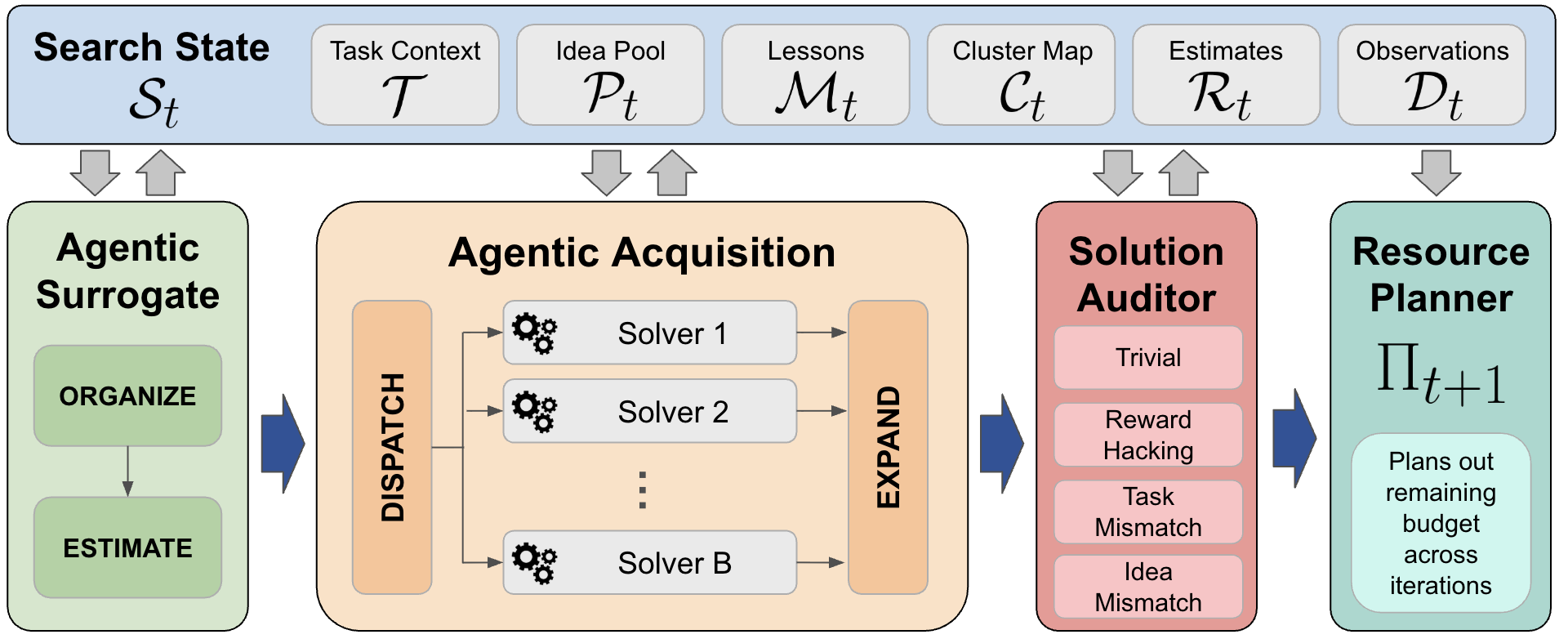}
    \caption{The \textsc{AIM} pipeline: the \textit{Agentic Surrogate} organizes and estimates candidate idea promise and the \textit{Agentic Acquisition} selects ideas for execution and expands the idea pool. Results are audited by the \textit{Solution Auditor}, while the \textit{Resource Planner} dynamically allocates the budget.}
    \label{fig:mainfig}
\end{figure}
\subsection{Agentic Surrogate: Organize and Estimate}
\label{sec:agentic-surrogate}
The Agentic Surrogate module constructs an explicit, evidence-conditioned representation of the discovered idea space through two operators: \emph{Organize} and \emph{Estimate}.

\noindent\textbf{Organize.}
Given the current idea pool $\mathcal{P}_t$ and evaluation history $\mathcal{D}_t$, the Organize operator constructs a cluster map
\begin{equation}
\mathcal{C}_t
=
\phi_{\mathrm{org}}
\left(
\mathcal{T},
\mathcal{P}_t,
\mathcal{D}_t,
\mathcal{C}_{t-1}
\right).
\label{eq}
\end{equation}
Each cluster in $\mathcal{C}_t$ represents a broad research direction and contains semantically related ideas from $\mathcal{P}_t$. 
Evaluated ideas and their scores serve as empirical landmarks for interpreting related but unevaluated candidates. 
Because the map is reconstructed from the complete pool at every timestep $t$, ideas from different generation lineages may be grouped together, and the organization may change as new candidates and evidence become available.
A visualization of how the clusters evolve throughout iterations is provided in Figure~\ref{fig:example}, also demonstrating \textsc{AIM}'s interpretability as an idea-driven approach.

\noindent\textbf{Estimate.}
Conditioned on the organized idea map $\mathcal{C}_t$, and evaluation history $\mathcal{D}_t$, the Estimate operator produces ordinal estimates of promisingness at both the cluster and idea levels:
\begin{equation}
\mathcal{R}_t
=
\phi_{\mathrm{est}}
\left(
\mathcal{C}_t,
\mathcal{D}_t
\right)
=
\left(
\mathcal{R}_t^{\mathrm{cluster}},
\mathcal{R}_t^{\mathrm{idea}}
\right).
\label{eq}
\end{equation}
Here, $\mathcal{R}_t^{\mathrm{cluster}}$ ranks the cluster-level research directions represented in $\mathcal{C}_t$, while $\mathcal{R}_t^{\mathrm{idea}}$ ranks the unevaluated ideas within each cluster.
The estimator considers observed performance, evidence scarcity, semantic novelty, and relevant implementation lessons. We use ordinal estimates because the purpose is to make relative promisingness explicit without requiring the agent to produce calibrated numerical reward predictions.
An analysis on the preciseness of the estimation is in Appendix~\ref{apdx:estimate}.

\begin{figure}[t!]
    \centering
    \includegraphics[width=0.6\linewidth]{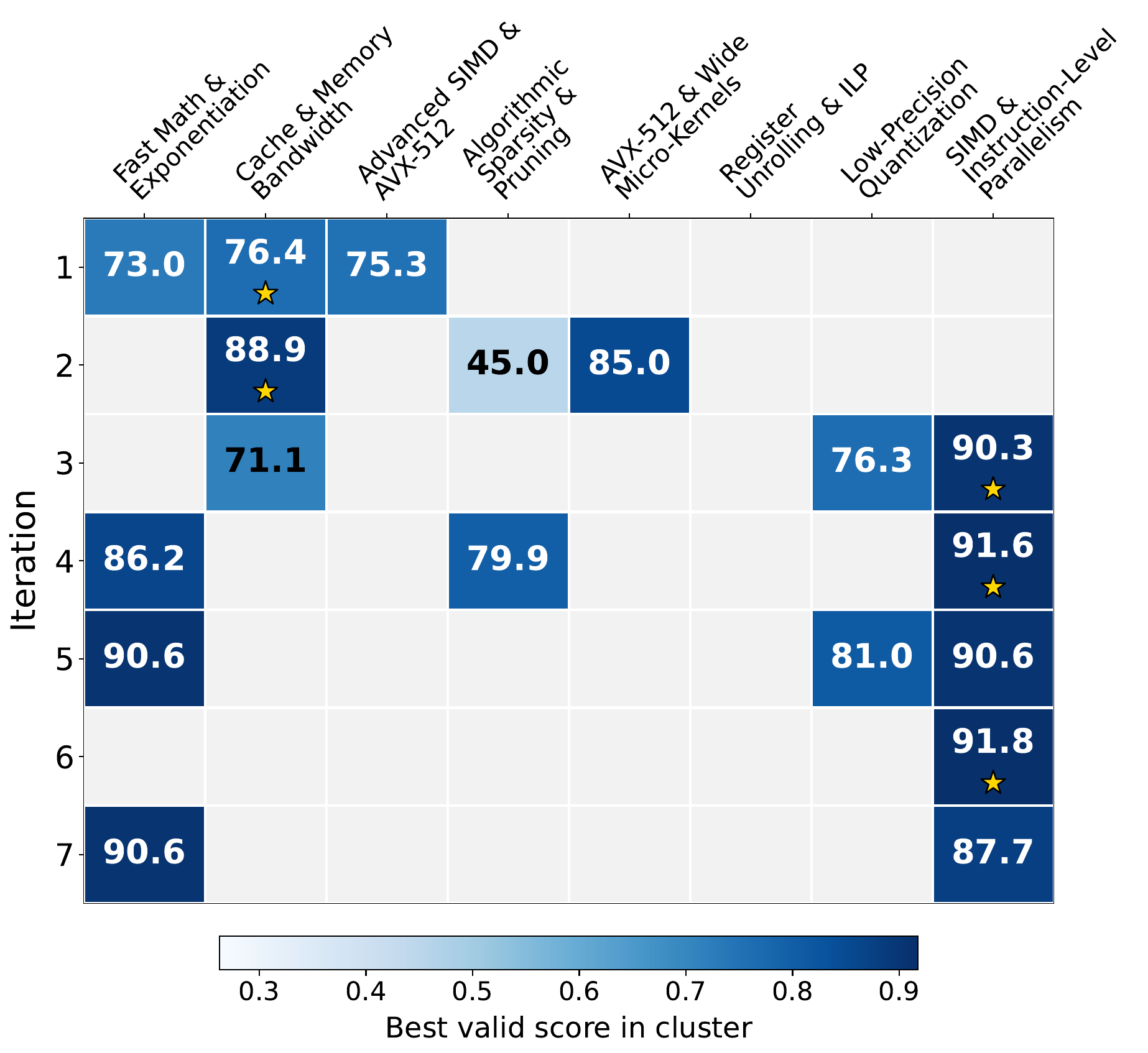}
    \caption{A visualization of a real pipeline example on the Flash Attention task. The x-axis shows all the cluster themes explored, numbers in each cell are the best scores from respective idea clusters at each round, and the stars indicate the trailing best score.}
    \label{fig:example}
\end{figure}

\subsection{Agentic Acquisition: Dispatch, Solve, and Expand}
\label{sec:agentic-acquisition}

The Agentic Acquisition module uses the current map $\mathcal{C}_t$ and promisingness estimates $\mathcal{R}_t$ to determine how to select which ideas to evaluate. 
Rather than relying on fixed selection rules or heuristics, \emph{Dispatch} assigns exploration and exploitation actions using evidence-grounded estimates of the promisingness of each candidate idea.
After the dispatched ideas are implemented and evaluated, \emph{Expand} uses the resulting evidence to generate candidates for subsequent iterations.

\noindent\textbf{Dispatch.}
Given the current idea scaffold $\mathcal{C}_t$, ordinal estimates $\mathcal{R}_t$, and resource plan $\Pi_t$~(Section~\ref{sec:resource-planner}), the Idea Dispatch operator $\phi_{\mathrm{disp}}$ selects a batch of unevaluated ideas:
\begin{equation}
    \mathcal{Q}_t
    =
    \phi_{\mathrm{disp}}
    \bigl(
        \mathcal{P}_t,
        \mathcal{C}_t,
        \mathcal{R}_t,
        \Pi_t
    \bigr)
\label{eq:idea-dispatch}
\end{equation}
where $\mathcal{Q}_t$ contains the ideas assigned to $B_t$ parallel Solver branches, with $B_t$ determined by $\Pi_t$.
In practice, Dispatch is implemented through two sequential LLM calls.
The first call jointly assigns each branch a two-tier action consisting of a cluster-level and an idea-level decision, each selected from $\{\textsc{explore},\textsc{exploit}\}$.
Exploitation prioritizes highly ranked research directions or ideas, whereas exploration favors underexplored or uncertain alternatives. 
The second call instantiates these chosen actions by selecting a target cluster and a corresponding idea for each branch.
For instance, the cells activated in Figure~\ref{fig:example} show which cluster theme was selected at each iteration.
Then, Each $x\in\mathcal{Q}_t$ is passed to an independent Solver module, which produces an executable solution $z=g(x)$, a verifier score $\widetilde{y}=v(z)$, and an execution record.
In this work, we mainly use the Gemini Deep Solver utilized in ScientistOne~\citep{meng2026scientistone}~(Claude Code solver substitution analysis is in Appendix~\ref{apdx:claude-code}).

\noindent\textbf{Expand.}
The Expand operator updates the search space in two stages.
First, it extracts reusable lessons $\Delta\mathcal{M}_{t+1}$ from the audited Solver runs and updates the implementation memory:
\begin{equation}
\mathcal{M}_{t+1}
=
\mathcal{M}_t
\cup
\Delta\mathcal{M}_{t+1}.
\label{eq:memory-update}
\end{equation}
These lessons capture effective implementation choices, unresolved performance bottlenecks, and compilation, execution, or verification failures to repair or avoid.
This design is inspired by evolutionary frameworks that use lessons distilled from experimental outcomes to guide subsequent discovery~\citep{cemri2026adaevolve,liu2026evox}.

Second, conditioned on the updated memory and audited evaluation history, the Expand operator generates new candidates and adds them to the idea pool:
\begin{equation}
\Delta\mathcal{P}_{t+1}
=
\phi_{\mathrm{exp}}
\left(
\mathcal{P}_t,
\mathcal{D}_{t+1},
\mathcal{M}_{t+1}
\right),
\qquad
\mathcal{P}_{t+1}
=
\mathcal{P}_t
\cup
\Delta\mathcal{P}_{t+1}.
\label{eq:idea-push}
\end{equation}
For each candidate, the operator selects relevant source ideas or lessons and applies one of four generation modes:
\begin{itemize}[leftmargin=*]
    \item \textbf{Score-guided refinement} preserves the validated components of a high-performing idea while addressing its remaining bottlenecks;
    \item \textbf{Cross-pollination} combines complementary ideas or lessons within or across clusters;
    \item \textbf{Error-guided repair}revises an unsuccessful idea using verifier feedback; and
    \item \textbf{Novel idea generation} introduces a previously unrepresented research direction without requiring an existing parent.
\end{itemize}
The first three modes develop existing idea lineages using accumulated evidence, whereas \texttt{new\_idea} broadens the search space and prevents concentration on established directions.

\subsection{Solution Auditor}
\label{sec:solution-auditor}
The Solution Auditor prevents invalid or misattributed results from corrupting subsequent search decisions. For each dispatched idea $x$, implementation $z$, reported score $\widetilde{y}$, and execution record $h$, it first audits the resulting solution:
\begin{equation}
    \mathcal{F} = \phi_{\mathrm{audit}}(x,z,\widetilde{y},h)
    \subseteq
    \{
        \mathrm{trivial},
        \mathrm{task\_mismatch},
        \mathrm{idea\_mismatch},
        \mathrm{reward\_hacking}
    \},
\label{eq:audit-decision}
\end{equation}
where $\mathcal{F}=\varnothing$ denotes a valid result.
The Auditor checks whether the implementation follows the intended idea, satisfies the task requirements, and obtains its score without exploiting the verifier.

The audit outcome determines how the result enters the search state.
If trivial, task\_mismatch or reward\_hacking is detected, the result is discarded and excluded from both the evaluation history and lesson extraction, although the execution still consumes the experimental budget. 
If the only issue is idea\_mismatch, the Auditor reconstructs the input idea and updates the evaluation history accordingly.
The score and extracted lessons are then associated with the updated idea.

This audit-and-align procedure addresses the idea--solution integrity challenge by creating a feedback cycle between the two stages:
\text{Idea}
$\rightarrow$
\text{Implementation}
$\rightarrow$
\text{Audit}
$\rightarrow$
Align Idea to Implementation.
Thus, subsequent search decisions are grounded in the mechanism \textit{actually} evaluated rather than the initially misaligned one.

\subsection{Resource Planner}
\label{sec:resource-planner}

The Resource Planner dynamically distributes a fixed total number of Solver branches across search iterations. 
First of all, given an execution budget $N$ and $B_{\mathrm{tot}}$ total branches, each branch receives a solution execution budget of $N_{\mathrm{branch}}=\lfloor N/B_{\mathrm{tot}}\rfloor$.
Then, the planner controls the trade-off between parallel breadth and sequential adaptivity: wider iterations with larger $B_t$ evaluate more ideas concurrently, whereas narrower iterations enable more frequent updates from experimental feedback.

Let $b_t=\sum_{j=1}^{t-1}B_j$ denote the number of branches already dispatched.
Given the current search state $\mathcal{S}_t$, the remaining branches, and maximum parallelism $B_{\max}$, the planner selects
\begin{equation}
    \Pi_t
    \equiv
    B_t
    =
    \phi_{\mathrm{plan}}
    \left(
        \mathcal{S}_t,
        B_{\mathrm{tot}}-b_t,
        B_{\max}
    \right),
    \qquad
    1\leq B_t\leq
    \min\{B_{\max},B_{\mathrm{tot}}-b_t\}.
\label{eq:resource-plan}
\end{equation}
The resulting search iteration is
\begin{equation}
    I
    =
    \min\left\{
        i:
        \sum_{t=1}^{i}B_t=B_{\mathrm{tot}}
    \right\},
    \qquad
    B_{\mathrm{tot}}N_{\mathrm{branch}}\leq N.
\label{eq:adaptive-search-horizon}
\end{equation}
The planner changes the frequency of feedback-driven updates while preserving the total branch and execution budgets.
Thus, the Resource Planner effectively decides when to exhaust the computational budget and terminate the research pipeline, after which the best scoring solution will be chosen as the final output.

\section{Experiments}
\label{sec:experiments}

\noindent\textbf{Baselines.}
We compare \textsc{AIM} with various solution-driven and idea-driven methods.
Solution-driven baselines include evolutionary solution management approaches like EvoX~\citep{liu2026evox}, AdaEvolve~\citep{cemri2026adaevolve}, AIRA~\citep{toledo2026ai}, and MCTS-based search methods like AIRA~(MCTS version).
For the idea-driven baselines, we compare with methods with various idea management scaffolds, including list-based DeepScientist~\citep{weng2026deepscientist}, MCTS-based AI-Scientist-v2~\citep{yamada2025ai}, tree-based Arbor~\citep{jin2026toward}, and elite-preserving beam-search-based ScientistOne~\citep{meng2026scientistone}.
The backbone LLM is Gemini-3.1-Pro-Preview~\citep{team2023gemini} for all methods.

\paragraph{Benchmark Tasks.}
We evaluate our method and baselines on a wide range of tasks from the AutoLab~\citep{xu2026autolab} suite: System Optimization~(Flash Attention, Radix Sort, AES128 Ctr, FFT Rust, and Z-order Range Scan), Model Development~(Moving MNIST World Model, Data Select Ifeval), and CUDA~(Huffman Canonical Decode, NTT Butterfly, and ICP Correspondence Step).
These tasks span different levels of semantic breadth in their viable solution strategies and different degrees of implementation-level optimization depth.
For the system optimization tasks, we impose a budget of at most 300 experiment executions and a hard wall-clock limit of 6 hours; for the model development tasks, we set at most 60 executions and wall-clock limit of 24 hours; for the CUDA tasks, we set at most 60 executions and wall-clock limit of 12 hours.
If either of the budget is exhausted, the process terminates.
Additional details on the baselines, benchmark tasks, and implementation setup are provided in Appendix~\ref{apdx:details}.

\subsection{Results}
\label{sec:results}

\begin{table}[t!]
\centering
\caption{\textbf{System Optimization Task Results}. The mean and standard error of three independent runs are reported. All idea-driven approaches take the same research brief as additional context for a fair comparison.}
\label{tab:optimization_results}
\renewcommand{\arraystretch}{1.45} 
\resizebox{\textwidth}{!}{%
\begin{tabular}{l c | c c c c c | c}
\toprule
\textbf{Methods} 
& \textbf{Scaffold} 
& \textbf{Flash Attention} 
& \textbf{Radix Sort} 
& \textbf{FFT Rust} 
& \textbf{AES128 Ctr} 
& \textbf{Z Range Scan} 
& \textbf{Average} \\
\midrule 

\rowcolor{blue!10} 
\multicolumn{8}{l}{{\textit{Solution-driven Approaches}}} \\

\textbf{EvoX}                   
& Evolutionary 
& 55.9 \stderr{3.3} 
& 55.5 \stderr{1.8} 
& 55.2 \stderr{0.1} 
& 63.1 \stderr{0.1} 
& 43.7 \stderr{1.6}
& 54.7 \\

\textbf{AdaEvolve}              
& Evolutionary 
& 85.3 \stderr{7.6}
& 62.4 \stderr{3.4}
& 55.6 \stderr{0.1}
& 65.9 \stderr{1.1}
& 47.3 \stderr{2.3}
& 63.3 \\

\textbf{AIRA}~(Evolutionary)    
& Evolutionary 
& 76.3 \stderr{1.0}
& 66.7 \stderr{2.5} 
& 56.2 \stderr{0.4}
& 62.8 \stderr{0.5} 
& 51.1 \stderr{0.8} 
& 62.6 \\

\textbf{AIRA}~(MCTS)            
& MCTS         
& 77.9 \stderr{0.2}
& 61.3 \stderr{5.8}
& \underline{56.4} \stderr{0.1}
& 62.6 \stderr{0.3}
& 50.4 \stderr{1.3}
& 61.7 \\

\midrule

\rowcolor{blue!10} 
\multicolumn{8}{l}{{\textit{Idea-driven Approaches}}} \\

\textbf{DeepScientist} 
& List 
& 72.3 \stderr{3.1}
& 66.5 \stderr{0.4}
& 54.0 \stderr{0.7}
& 65.1 \stderr{1.9}
& \underline{51.4} \stderr{1.9}
& 61.9 \\

\textbf{AI-Scientist-v2} 
& MCTS 
& 55.0 \stderr{3.5}
& 65.7 \stderr{0.5}
& 55.1 \stderr{1.1} 
& 66.3 \stderr{1.6}
& 42.0 \stderr{0.5}
& 56.8 \\

\textbf{Arbor}~(max depth = 2)   
& Tree 
& 73.3 \stderr{5.6}
& 58.5 \stderr{2.7}
& 54.5 \stderr{0.5}
& 64.2 \stderr{1.5}
& 41.4 \stderr{1.8}
& 58.4 \\

\textbf{Arbor}~(max depth = 3)  
& Tree 
& 77.2 \stderr{3.8}
& 58.2 \stderr{4.2}
& 53.5 \stderr{1.1}
& \textbf{67.2} \stderr{0.1}
& 46.9 \stderr{2.1}
& 60.6 \\

\textbf{ScientistOne}  
& Beam 
& \underline{86.2} \stderr{3.1}
& \underline{68.7} \stderr{1.6}
& 55.9 \stderr{0.4}
& 65.3 \stderr{0.2}
& 50.9 \stderr{1.2}
& 65.4 \\

\midrule

\textbf{\textsc{AIM}} (Ours)        
& Autonomous 
& \textbf{90.5} \stderr{0.8}
& \textbf{69.5} \stderr{2.3}
& \textbf{56.5} \stderr{0.5}
& \underline{66.4} \stderr{0.4}
& \textbf{52.1} \stderr{1.1}
& \textbf{67.0} \\




\bottomrule
\end{tabular}%
}
\end{table}

\paragraph{Results on System Optimization Tasks.}
Table~\ref{tab:optimization_results} compares \textsc{AIM} with solution-driven and idea-driven baselines across five systems-optimization tasks. 
\textsc{AIM} achieves the highest average score of $67.0\%$, outperforming the strongest idea-driven baseline, ScientistOne ($65.4\%$), by $1.6$ points and the strongest solution-driven baseline, AdaEvolve ($63.3\%$), by $3.7$ points. 
The largest gain occurs on Flash Attention, where \textsc{AIM} exceeds ScientistOne and AdaEvolve by $4.3$ and $5.2$ points, respectively.
Overall, these results show that \textsc{AIM} performs robustly across tasks with different search characteristics.

\paragraph{Results on Model Development \& CUDA Tasks.}
In Table~\ref{tab:gpu_tasks}, we show the results on the long-horizon tasks, comparing with the strongest baseline configuration within each scaffold based on average performance in Table~\ref{tab:optimization_results}.
In the table, \textsc{AIM} achieves the highest mean score on three of the five tasks: Moving MNIST World Model, Huffman Canonical Decode, and NTT Butterfly, improving over ScientistOne
by $4.2$, $0.4$, and $2.1$ points, respectively.
On Data Selection IFEval, \textsc{AIM} achieves $61.8$, outperforming the evaluated idea-driven baselines but falling below AdaEvolve~($82.7$).
This strict advantage of AdaEvolve on the Data Selection IFEval task may reflect the task's nature of a smooth, local-search-friendly landscape where score is dominated by fine-tuning a single heuristic recipe~(keyword filters, length caps, source balance) rather than by exploring qualitatively different strategies. 
Its mutation loop compounds refinements against a persistent best program, which we conjecture gives it an advantage over other baselines.

\begin{table}[t!]
\centering
\caption{\textbf{Model Development \& CUDA Task Results}. All idea-driven approaches take the same research brief as additional context for a fair comparison. Average scores are computed over all five tasks and omitted for methods with missing results. Missing results with `--' is because the baseline methods could not handle the Moving Mnist World Model task that requires multiple file outputs.}
\label{tab:gpu_tasks}
\renewcommand{\arraystretch}{1.7} 
\setlength{\tabcolsep}{2pt}
\resizebox{\textwidth}{!}{%
\begin{tabular}{l c | c c c c c | c}
\toprule
\textbf{Methods} 
& \textbf{Scaffold} 
& \textbf{MM World Model} 
& \textbf{Data Select IE} 
& \textbf{Huffman Dec.} 
& \textbf{NTT Butterfly} 
& \textbf{ICP Corr. Step}
& \textbf{Average} \\
\midrule 

\rowcolor{blue!10} 
\multicolumn{8}{l}{\textit{Solution-driven Approaches}} \\

\textbf{AdaEvolve}
& Evolutionary
& --
& \textbf{82.7} \stderr{7.1}
& 23.3 \stderr{1.5} 
& 54.9 \stderr{0.9}
& 18.4 \stderr{11.8}
& 44.8 \\

\textbf{AIRA}~(MCTS)
& MCTS
& --
& 38.4 \stderr{5.9}
& 24.3 \stderr{10.8}
& 51.6 \stderr{6.8}
& \underline{54.5} \stderr{0.9}
& 42.2 \\

\midrule

\rowcolor{blue!10} 
\multicolumn{8}{l}{\textit{Idea-driven Approaches}} \\

\textbf{DeepScientist}
& List
& 14.6 \stderr{8.7}
& 19.4 \stderr{19.4}
& 35.5 \stderr{3.7}
& 42.2 \stderr{7.1}
& \textbf{55.7} \stderr{0.5}
& 33.5 \\

\textbf{Arbor}~(max depth = 3)
& Tree
& 33.7 \stderr{11.9}
& 47.0 \stderr{16.1}
& 37.0 \stderr{2.3}
& 47.9 \stderr{5.0}
& 40.8 \stderr{10.2}
& 41.3 \\

\textbf{ScientistOne}
& Beam 
& 57.3 \stderr{6.3}
& 45.1 \stderr{1.1}
& \underline{43.0} \stderr{2.6}
& \underline{56.9} \stderr{1.3}
& 52.1 \stderr{1.0}
& 50.9 \\

\midrule

\textbf{\textsc{AIM}} (Ours)
& Autonomous 
& \textbf{61.5} \stderr{0.9}
& \underline{61.8} \stderr{3.2}
& \textbf{43.4} \stderr{1.3} 
& \textbf{59.0} \stderr{1.3}
& 53.5 \stderr{0.5}
& \textbf{55.8} \\

\bottomrule
\end{tabular}%
}
\end{table}

\paragraph{Time Efficiency.}
Figure~\ref{fig:efficiency-fa} compares the mean best-so-far score over wall-clock time on the Flash Attention task. 
\textsc{AIM} reaches the final score levels of all competing baselines within approximately the first 1--2 hours, whereas the baselines require between $2.2$ and $5.5$ hours to attain those scores. 
Moreover, \textsc{AIM} reaches its best score of $90.5\%$ after $3.3$ hours, exceeding  AdaEvolve's $85.3\%$ at $5.5$ hours and ScientistOne's $86.2\%$ at $3.5$ hours.
Thus, \textsc{AIM} not only discovers a better solution, but also reaches competitive performance substantially earlier in the search, up to 3.1$\times$ faster compared to the strongest baseline, ScientistOne.
Plots for the rest of the tasks are in Appendix~\ref{apdx:full-plots}.

\begin{figure}[t]
    \centering
    \begin{minipage}[t]{0.59\linewidth}
        \centering
        \includegraphics[
            width=\linewidth
        ]{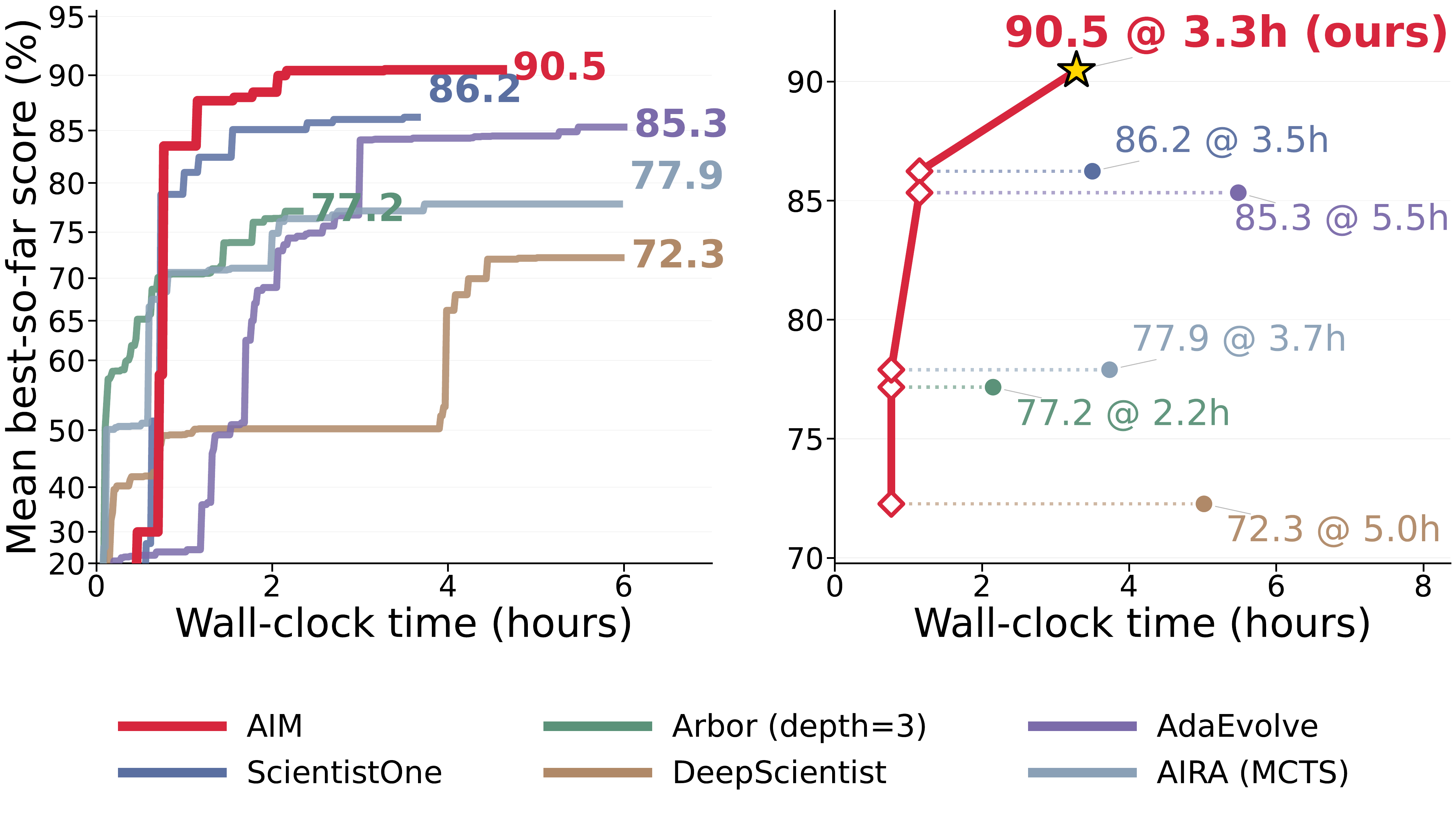}
        \caption{Time Efficiency for Discovery}
        \label{fig:efficiency-fa}
    \end{minipage}
    \hfill
    \begin{minipage}[t]{0.4\linewidth}
        \centering
        \includegraphics[
            width=\linewidth
        ]{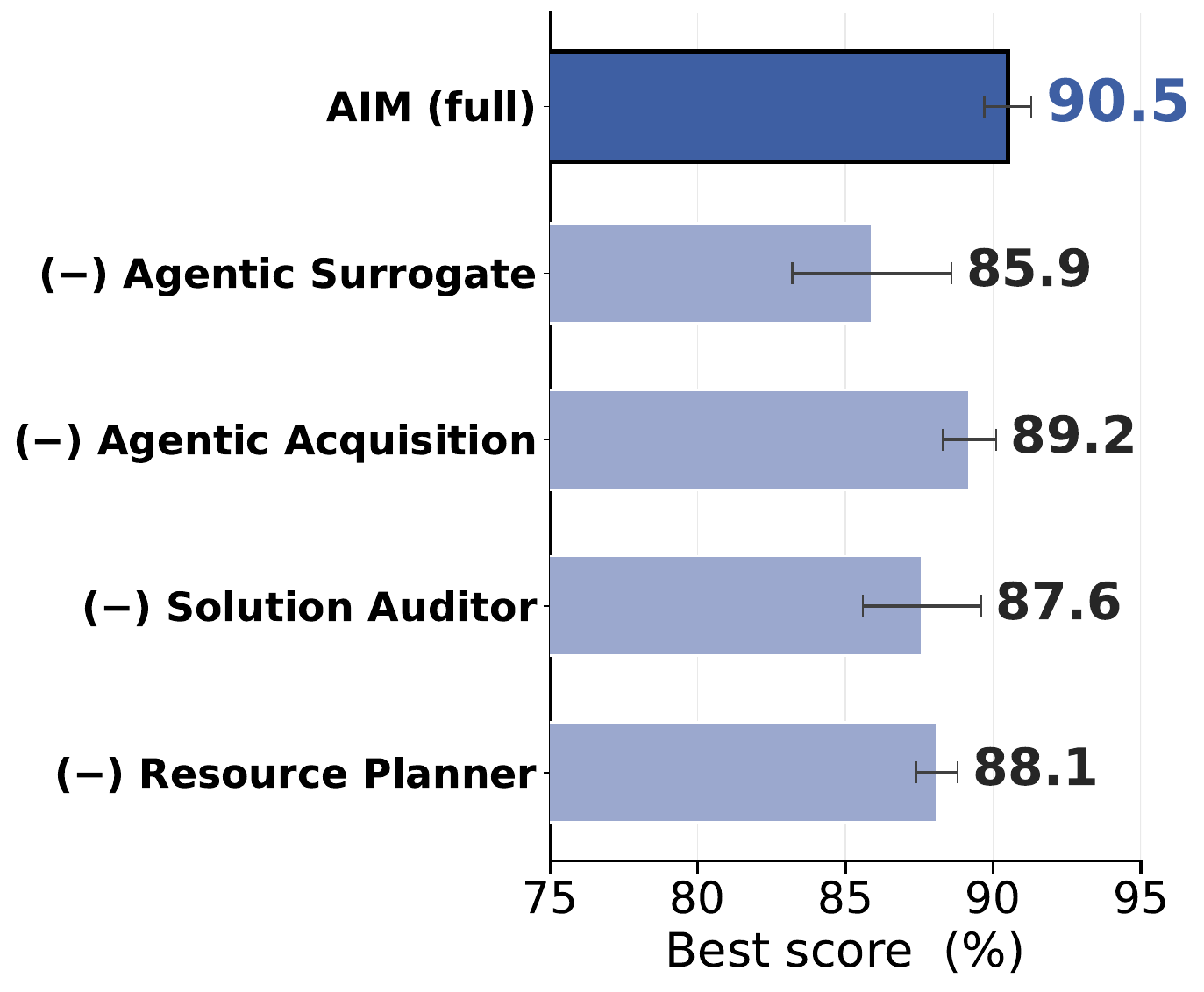}
        \caption{Ablation Studies}
        \label{fig:ablation}
    \end{minipage}
\end{figure}

\subsection{Ablation Studies}
\label{sec:ablation}
We ablate each major component of \textsc{AIM} on Flash Attention, as shown in Figure~\ref{fig:ablation}.
Without the Agentic Surrogate, Organize/Estimate are removed, and Dispatch receives only a shared, unstructured list of ideas.
Without Agentic Acquisition, the LLM directly selects ideas from the clustered and ranked pool without explicitly assigning explore/exploit actions. 
Without the Solution Auditor, all flagged solutions are retained for subsequent lessons and idea generation, idea mismatches are not reconstructed, and audit flags are ignored when computing the final score. 
Finally, removing the Resource Planner replaces dynamic allocation with the fixed $5\times5$ execution schedule.

Removing any component reduces performance from the full model's $90.5\%$. 
The largest degradation occurs without the Agentic Surrogate ($85.9\%$; $-4.6$ points), demonstrating the importance of explicitly organizing the idea space and estimating promisingness. 
The next biggest drop is observed when Solution Auditor is removed, reducing performance to $87.6\%$ ($-2.9$), showing that it is critical to watch out for invalid or misattributed evidence that can propagate through later iterations. 
Also, the fixed resource schedule reaches $88.1\%$ ($-2.4$), while removing explicit acquisition actions yields $89.2\%$ ($-1.3$). 
These results support all the central design choices of \textsc{AIM}: structured idea management, grounded and adaptive idea selection, and reliable idea--solution alignment.

\subsection{Qualitative Examples}

Here, we provide an actual case of \textsc{AIM}'s idea selection process.
In the example below, we show the example from the third iteration of the Data Select IFEval, which is a task about how to select the right data subset for LLM finetuning for a target benchmark, IFEval~\citep{zhou2023instruction}.

First, the Agentic Surrogate's Organizer partitions the 33-idea pool into four clusters, and the Estimator ranks ``metadata stratification'' first, grounding on the strongest performance of its member idea~(0.377), while explicitly demoting ``generative-probing \& LLM-as-a-judge" on the basis of distilled lessons from earlier failures. 
In the second part, the Agentic Acquisition's Dispatch proceeds with determining the \texttt{Explore}/\texttt{Exploit} actions on the cluster-level and idea-level.
For instance, Branch 0 and 1 are both assigned the cluster-level \texttt{Exploit}, but differs in the idea-level actions.
The two branches each choose a high-ranking idea and a low-ranking idea accordingly.
The outcome illustrates why both actions are retained: the exploited selection regressed to 0.119, while the explored pick matched the trailing best score (0.377).
More qualitative examples are in Appendix~\ref{apdx:morequal}
\clearpage
\begin{tcolorbox}[promptbox={Qualitative Example -- Data Select IFEval~(iteration 3)}]
\label{qual}
\footnotesize
\noindent\textbf{(1) Clusters and rank estimates} \quad {\scriptsize pool = 33 ideas}

\smallskip
\setlength{\tabcolsep}{3pt}\renewcommand{\arraystretch}{1.05}
\begin{tabular}{@{}c c l c c c@{}}
\toprule
Rank & Cl. & Label & \#ideas & \#evaluated & best score\\
\midrule
\textbf{1} & [3] & Metadata Stratification \& Representations & 7 & 1 & \textbf{0.377} \\
\textbf{2} & [2] & Deterministic Heuristics \& Lexical Diversity & 9 & 3 & 0.285 \\
\textbf{3} & [0] & Base-Model IFD \& Curriculums & 8 & 2 & 0.193 \\
\textbf{4} & [1] & Generative Probing \& LLM-as-a-Judge & 9 & 2 & 0.137 \\
\bottomrule
\end{tabular}

\smallskip
\noindent\textit{``Cluster 3 achieves the highest evaluated score (0.3770) and aligns with strong lessons advocating for metadata distribution and length proxies. [...] Clusters 0 and 1 have much weaker top scores, with lessons explicitly warning against the generative probing approaches found in Cluster 1.''}

\tcbline

\noindent\textbf{(2) Actions and idea selection}

\smallskip
\noindent\textbf{b0} $\cdot$ cluster [3] \colorbox{orange!20}{\tiny\textsc{exploit}} $\cdot$ idea \colorbox{orange!20}{\tiny\textsc{exploit}} $\rightarrow$ \textbf{Source-Balanced IO-Length Stratification} {\tiny\texttt{rank 1/6,}}\\
\textit{``A direct refinement of the best-performing idea so far: it iterates on the successful source-balancing strategy by incorporating output length to filter out terse responses.''}

\smallskip
\noindent\textbf{b1} $\cdot$ cluster [3] \colorbox{orange!20}{\tiny\textsc{exploit}} $\cdot$ idea \colorbox{teal!15}{\tiny\textsc{explore}} $\rightarrow$ \textbf{Unsupervised TF-IDF + KMeans Stratification} {\tiny\texttt{rank 5/6}}\\
\textit{``To execute an explore action within this top-performing cluster, we select a low-rank idea (5/6) that introduces a completely different mechanism: data-driven semantic boundaries instead of native `source' metadata.''}

\smallskip
\noindent\textbf{Outcome:} \textbf{b0} 0.119 $\cdot$ \textbf{b1} 0.377 (ties best) $\cdot$ \textbf{b2} 0.230
\end{tcolorbox}

\paragraph{Following additional experiments and analyses are deferred to the Appendix:} 
$\Circled{1}$ solver substitution using Claude Code~(Appendix~\ref{apdx:claude-code});
$\Circled{2}$ in-depth ablation study on the Agentic Surrogate component~(Appendix~\ref{apdx:surrogate});
$\Circled{3}$ qualitative analysis of the Organize operator~(Appendix~\ref{apdx:organize});
$\Circled{4}$ preciseness of the Estimate operator's ordinal predictions~(Appendix~\ref{apdx:estimate});
$\Circled{5}$ distribution of the Dispatch operator's exploration--exploitation actions~(Appendix~\ref{apdx:dispatch});
$\Circled{6}$ distribution of the Expand operator's generation modes~(Appendix~\ref{apdx:expand});
$\Circled{7}$ effectiveness of the Solution Auditor's idea reconstruction mechanism~(Appendix~\ref{apdx:auditor}).

\section{When to Search Over Ideas? A Formal Understanding}
\label{sec:theory}

We have distinguished two paradigms for automated research.
Idea-search approaches first search over semantic research directions and then instantiate selected ideas as executable solutions, whereas solution-driven approaches directly propose and refine executable solutions.
We provide a theoretical view of when the idea-level abstraction can be useful through two questions:
(1) \textit{which paradigm can provide broader coverage of the solution space?}, and
(2) \textit{when is this additional coverage more valuable for finding competitive directions?}

\paragraph{\textcolor{mydarkblue}{(1) \textit{Which paradigm has broader coverage of the solution space?}}}
We begin by setting assumptions and defining the Semantic Coverage:
\begin{assumption}[\textbf{Semantic Decomposition}]
\label{assump:semantic-decomposition}
Let each executable solution $z\in\mathcal{Z}$ be associated with an underlying
research direction $\pi(z)\in\mathcal{X}$, where $\mathcal{X}$ is the finite set of ideas.
Then, for each idea $x\in\mathcal{X}$, its corresponding semantic solution
region is $\mathcal{Z}_x = \{z\in\mathcal{Z}:\pi(z)=x\}$.
\end{assumption}

\begin{assumption}[\textbf{Faithful Realization}]
\label{assump:faithful-realization}
For every selected idea $x\in\mathcal{X}$, the implementation process can
produce an executable solution $z=g(x)$ that faithfully realizes the selected
idea: $\pi(g(x))=x$.
\end{assumption}

\begin{definition}[\textbf{Semantic Coverage}]
\label{def:semantic-coverage}
For a set of evaluated solutions $\mathcal{E}\subseteq\mathcal{Z}$, define its Semantic Coverage as the number of distinct research directions represented in it: $C(\mathcal{E}) = \left|\{\pi(z):z\in\mathcal{E}\}\right|$.
\end{definition}

Assumption~\ref{assump:semantic-decomposition} and Definition~\ref{def:semantic-coverage} provides a useful way to view the difference between the two search paradigms.
For instance, consider a search graph whose nodes are executable solutions.
A solution-driven search trajectory may move through $z_1 \rightarrow z_2 \rightarrow z_3 \rightarrow z_4$ $(z_i \in \mathcal{Z})$, while the corresponding semantic directions are $x_1 \rightarrow x_1 \rightarrow x_1 \rightarrow x_2$ $(x_i \in \mathcal{X})$.
In such cases, although four distinct solutions have been evaluated, this trajectory covers only two semantic directions.
More generally, the mapping $\pi:\mathcal{Z}\rightarrow\mathcal{X}$ groups solutions according to their underlying research direction.
Formally, $\pi$ induces the equivalence relation $z \sim z'$ if and only if $\pi(z)=\pi(z')$, thereby contracting solutions that instantiate the same idea into a common semantic class.
This yields a quotient view of the solution space, in which solution-level trajectories are represented by the distinct semantic directions they traverse.

In addition, while Assumption~\ref{assump:faithful-realization} may seem a bit idealistic, we argue that the Solution Auditor ensures  the evidence that the agent observes is faithful and reliable.
Figure~\ref{fig:auditor} in Appendix~\ref{apdx:auditor} supports this view, in that it substantially reduces the idea-mismatch cases after the Solution Auditor's idea reconstruction takes place.
Furthermore, our discussion on the practical considerations in  Appendix~\ref{apdx:practical-considerations} suggests that a hybrid design of solution-driven and idea-driven approaches might be desired to ensure faithful implementation of ideas.
Based on these grounds, we formalize the attainable semantic coverage of the two search paradigms:

\begin{proposition}[\textbf{Attainable Semantic Coverage}]
\label{thm:semantic-coverage}
Let $C^\text{idea}$ be the semantic coverage of an extreme idea-driven search procedure that renders $N$ distinct experiment executions to $N$ distinct research ideas, and $C$ be the semantic coverage of any search procedure that evaluates at most $N$ executable solutions, including solution-driven approaches. Then, $C \leq C^\text{idea}$.
\end{proposition}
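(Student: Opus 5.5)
The proof is a counting argument in two halves: an upper bound valid for every procedure, and an exact value for the extreme idea-driven procedure. Throughout, let $\mathcal{E}\subseteq\mathcal{Z}$ be the set of evaluated solutions, so that by Definition~\ref{def:semantic-coverage} the coverage is $C(\mathcal{E})=|\pi(\mathcal{E})|$.

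\emph{Upper bound.} Take any procedure that evaluates at most $N$ executable solutions, so $|\mathcal{E}|\le N$. Under Assumption~\ref{assump:semantic-decomposition}, $\pi$ is a well-defined map $\mathcal{Z}\to\mathcal{X}$. The image of a finite set under a function has at most as many elements as the set itself, so
\begin{equation*}
C=|\pi(\mathcal{E})|\le|\mathcal{E}|\le N.
\end{equation*}
The bound is also capped by $|\mathcal{X}|$, since $\pi(\mathcal{E})\subseteq\mathcal{X}$. Hence $C\le\min\{N,|\mathcal{X}|\}$. This step uses no property of how the procedure chooses solutions, so it covers solution-driven methods and any hybrid.

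\emph{Exact value for the idea-driven procedure.} The extreme procedure selects $N$ distinct ideas $x_1,\dots,x_N$. This implicitly requires $N\le|\mathcal{X}|$; otherwise it selects $|\mathcal{X}|$ distinct ideas and the argument below still works with $\min\{N,|\mathcal{X}|\}$ in place of $N$. It evaluates $z_i=g(x_i)$. Assumption~\ref{assump:faithful-realization} gives $\pi(z_i)=x_i$ for each $i$, so
\begin{equation*}
\pi(\{z_1,\dots,z_N\})=\{x_1,\dots,x_N\},
\end{equation*}
a set of size $N$. Hence $C^{\text{idea}}=N$, or $\min\{N,|\mathcal{X}|\}$ in general.

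Chaining the two halves gives $C\le\min\{N,|\mathcal{X}|\}=C^{\text{idea}}$.

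The only delicate point is the idea-driven half, where the argument must confirm that the $N$ solutions are distinct and land in distinct semantic regions. Distinct ideas alone do not ensure this. It is faithful realization that makes $\pi\circ g$ the identity on the selected ideas, hence injective, and that injectivity turns "$N$ distinct ideas" into "$N$ distinct directions covered." I would therefore state explicitly that Assumption~\ref{assump:faithful-realization} is where the idea-level abstraction enters, and note that without it, for instance under an idea mismatch, $C^{\text{idea}}$ could collapse and the proposition could fail.
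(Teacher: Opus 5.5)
Your proof is correct and matches the paper's argument essentially step for step. The paper also bounds an arbitrary procedure by noting that taking an image under a function cannot increase cardinality; it phrases this via the quotient map $q:\mathcal{Z}\to\mathcal{Z}/{\sim}$, but $|q(\mathcal{E})|=|\pi(\mathcal{E})|$. It then uses Assumption~\ref{assump:faithful-realization} to show that $N$ distinct ideas yield $N$ distinct classes, so $C\le N=C^{\text{idea}}$. Your $\min\{N,|\mathcal{X}|\}$ refinement is a harmless addition that the paper leaves implicit by assuming $N$ distinct ideas exist.
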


Note that Proposition~\ref{thm:semantic-coverage} does not imply that solution-driven search must have lower semantic coverage.
A solution-driven method can attain the same bound if its code-level search also produces semantically distinct solutions.
The distinction is that idea-driven search makes semantic breadth an explicit and directly controllable property of the search process.
By selecting distinct ideas before implementation, the framework can deliberately allocate its expensive experiment budget across distinct research directions, rather than obtaining semantic diversity only indirectly through solution-level transitions.
Whether maximizing such semantic breadth is beneficial, however, depends on the trait of the research task, which we discuss next.

\paragraph{\textcolor{mydarkblue}{(2) \textit{When is broader semantic coverage useful?}}}
We next ask when the additional coverage is useful for solving the research task.
Intuitively, breadth should matter most when there are many possible research directions but only a small subset of them can achieve near-optimal performance.

\begin{definition}[\textbf{Competitive Semantic Directions}]
\label{def:competitive-directions}
For a target task $\tau$, let there be $K$ materially distinct research ideas $\mathcal{X}_\tau=\{x_1,\ldots,x_K\}\subseteq\mathcal{X}$.
Each direction has an attainable value~(e.g., performance measure), $V(x) = \max_{z\in\mathcal{Z}_x} v(z)$, and let $V^\star = \max_{x\in\mathcal{X}_\tau} V(x)$.
For $\varepsilon\geq0$, define the set of $\varepsilon$-optimal directions as $\mathcal{G}_\varepsilon = \left\{x\in\mathcal{X}_\tau: V(x)\geq V^\star-\varepsilon\right\}$, and $G_\varepsilon = |\mathcal{G}_\varepsilon|$.
\end{definition}

\begin{assumption}[\textbf{Competitive Direction Exchangeability}]
\label{assump:semantic-uncertainty}
Before the search evaluates a direction, the identities of the
$G_\varepsilon$ $\varepsilon$-optimal directions are uniformly distributed among the $K$ admissible directions. 
Conditional on having evaluated only non-$\varepsilon$-optimal directions, the optimal directions remain exchangeable among the unexplored directions.
\end{assumption}

\begin{proposition}[\textbf{Semantic Coverage Sufficient Condition}]
\label{thm:coverage-benefit}
Under Definition~\ref{def:competitive-directions} and
Assumption~\ref{assump:semantic-uncertainty}, a search procedure with semantic coverage $C$ discovers at least one $\varepsilon$-optimal direction with probability $P_\varepsilon(C) = 1 - \binom{K-G_\varepsilon}{C}/\binom{K}{C}.$
Then, a sufficient condition of the semantic coverage $C \geq K/G_\varepsilon \ln \left(1/\delta\right)$ guarantees a success probability of $1-\delta$, if the right-hand side is feasible.
\end{proposition}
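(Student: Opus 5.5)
The proof has two parts: an exact hypergeometric computation for $P_\varepsilon(C)$, then an elementary bound that turns it into the sufficient condition.

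\textbf{Exact success probability.} A procedure with semantic coverage $C$ has evaluated solutions whose images under $\pi$ form a set $S\subseteq\mathcal{X}_\tau$ of $C$ distinct directions. Under Assumption~\ref{assump:semantic-uncertainty}, the identities of the $G_\varepsilon$ $\varepsilon$-optimal directions form a uniformly random $G_\varepsilon$-subset of the $K$ directions. Adaptivity does not break this: the exchangeability clause says that, conditional on having seen only non-$\varepsilon$-optimal directions, the optimal ones stay uniform over the unexplored directions. I would make this precise by sequential conditioning. The $j$-th newly covered direction misses $\mathcal{G}_\varepsilon$ with probability
\[
\frac{K-G_\varepsilon-(j-1)}{K-(j-1)}
\]
given that all previous ones missed. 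Multiplying over $j=1,\ldots,C$ gives
\[
\Pr[S\cap\mathcal{G}_\varepsilon=\varnothing]=\prod_{j=0}^{C-1}\frac{K-G_\varepsilon-j}{K-j}=\binom{K-G_\varepsilon}{C}\Big/\binom{K}{C}.
\]
This is the same answer as the non-adaptive hypergeometric count: fix $S$ and count the $G_\varepsilon$-subsets avoiding it, namely $\binom{K-C}{G_\varepsilon}/\binom{K}{G_\varepsilon}$, which equals the expression above. Taking complements yields $P_\varepsilon(C)$.

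\textbf{Sufficient condition.} Each factor in the product is at most $1-G_\varepsilon/K$, since $(K-G_\varepsilon-j)/(K-j)=1-G_\varepsilon/(K-j)\le 1-G_\varepsilon/K$. Hence the failure probability is at most
\[
(1-G_\varepsilon/K)^C\le \exp(-CG_\varepsilon/K).
\]
If $C\ge (K/G_\varepsilon)\ln(1/\delta)$, this is at most $\delta$, so the success probability is at least $1-\delta$. The phrase ``if the right-hand side is feasible'' covers the requirement $C\le K$ (and $C\le K-G_\varepsilon$ for the binomial to be nonzero). When $C>K-G_\varepsilon$, success is certain and the claim is trivial.

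\textbf{Main obstacle.} The delicate step is justifying that an adaptive search still produces the hypergeometric law. This reduces to reading Assumption~\ref{assump:semantic-uncertainty} as giving the conditional uniformity at each step. Evaluations that reveal a success can be ignored, because once a hit occurs the event of interest has already happened. The inequality part is routine.
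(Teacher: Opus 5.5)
Your proposal is correct and follows essentially the same route as the paper: you use sequential conditioning under the exchangeability assumption to get the failure probability $\prod_{i=0}^{C-1}(K-G_\varepsilon-i)/(K-i)=\binom{K-G_\varepsilon}{C}/\binom{K}{C}$, treat $C>K-G_\varepsilon$ as trivially successful, and bound the failure probability by $(1-G_\varepsilon/K)^C\le\exp(-CG_\varepsilon/K)$. Your cross-check against the non-adaptive count $\binom{K-C}{G_\varepsilon}/\binom{K}{G_\varepsilon}$ is a harmless addition, and the paper's extra monotonicity remarks in $G_\varepsilon$ and $K$ are not needed for the statement.
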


\paragraph{\textcolor{mydarkblue}{\textbf{Implication: Tasks with larger effective semantic breadth favor idea-driven search.}}}
Combining Propositions~\ref{thm:semantic-coverage} and~\ref{thm:coverage-benefit}, tasks with many materially distinct research directions but relatively few competitive ones benefit more from broad semantic coverage. 
In such tasks, idea-driven search can be advantageous because it makes semantic breadth explicit and controllable. 
Conversely, when $K/G_\varepsilon$ is small, \textit{i.e.}, the task admits only a few meaningful directions or many directions are similarly competitive, the value of additional semantic coverage is limited, and solution-driven and idea-driven approaches may perform similarly, leaving implementation-level optimization as a potentially more important determinant of performance.
Proofs are in Appendix~\ref{apdx:proofs}.

\begin{figure}[t!]
    \centering
    \begin{tikzpicture}
        \begin{axis}[
            width=0.72\linewidth,
            height=0.48\linewidth,
            xlabel={Effective Semantic Breadth $B_\varepsilon = K/G_\varepsilon$},
            ylabel={Required Semantic Coverage $C$},
            xmin=1,
            xmax=20,
            ymin=0,
            ymax=65,
            xtick={1,5,10,15,20},
            ytick={0,10,20,30,40,50,60},
            grid=major,
            legend style={
                at={(0.03,0.97)},
                anchor=north west,
                draw=none,
                fill=none,
            },
            legend cell align={left},
        ]

        \addplot[
            very thick,
            domain=1:20,
            samples=100,
        ]
        {x*ln(2)};
        \addlegendentry{$P_\varepsilon \geq 0.50$}

        \addplot[
            very thick,
            dashed,
            domain=1:20,
            samples=100,
        ]
        {x*ln(10)};
        \addlegendentry{$P_\varepsilon \geq 0.90$}

        \addplot[
            very thick,
            dotted,
            domain=1:20,
            samples=100,
        ]
        {x*ln(20)};
        \addlegendentry{$P_\varepsilon \geq 0.95$}

        \end{axis}
    \end{tikzpicture}

    \caption{
    Semantic coverage required to achieve a fixed probability of discovering
    an $\varepsilon$-optimal research direction.
    As competitive directions become sparser, the effective semantic breadth
    $B_\varepsilon=K/G_\varepsilon$ increases, requiring proportionally broader
    semantic coverage.
    The curves show the sufficient condition
    $C \geq B_\varepsilon \log(1/\delta)$ for different target success
    probabilities $1 - \delta$.
    }
    \label{fig:semantic-breadth-coverage}
\end{figure}
\paragraph{\textcolor{mydarkblue}{Theoretical Bound Visualization.}}
Figure~\ref{fig:semantic-breadth-coverage} illustrates the semantic coverage requirement derived in Proposition~\ref{thm:coverage-benefit}. 
Let $B_\varepsilon=K/G_\varepsilon$ denote the effective semantic breadth, where $K$ is the number of admissible semantic directions and $G_\varepsilon$ is the number of $\varepsilon$-optimal directions. 
For a target success probability $1 - \delta$, the sufficient coverage condition can be written as
\begin{equation}
C \geq B_\varepsilon \ln\left(\frac{1}{\delta}\right).
\end{equation}
The required semantic coverage therefore grows linearly with effective semantic breadth.
When competitive directions are sparse, such that $G_\varepsilon$ is small relative to $K$, a search procedure must examine substantially more distinct directions to maintain the same probability of discovering an $\varepsilon$-optimal one. 
The requirement also becomes steeper as the target success probability increases: at $B_\varepsilon=20$, for example, the sufficient coverage is approximately $14$, $46$, and $60$ directions for target probabilities of $0.50$, $0.90$, and $0.95$, respectively. 
Thus, broader semantic coverage is particularly valuable for tasks with sparse competitive directions and when a high probability of success is required.

\subsection{Empirical Support}
Our empirical analysis in Figure~\ref{fig:theory} is consistent with the implication. 
As a rough proxy for the breadth of the explored solution space, we embed all intermediate solutions from each method using Gemini-Embedding-001~\citep{lee2025gemini}, project the embeddings into a two-dimensional PCA space, and measure the area of their convex hull.
The box-and-whisker plots compare the distribution of the hull area, and the PCA hull plots visualize example cases.
For tasks like Flash Attention, which admit diverse algorithmic directions but only a small subset of them yield substantial speedups, idea-driven search approaches explore substantially broader regions than solution-driven approaches, as reflected by larger convex-hull areas. 
In contrast, for tasks like Radix Sort, where the high-level algorithm is largely fixed and progress depends primarily on code-level optimization, the gap in convex-hull area between the two paradigms becomes considerably smaller. 

\begin{figure}
    \centering
    \includegraphics[width=\linewidth]{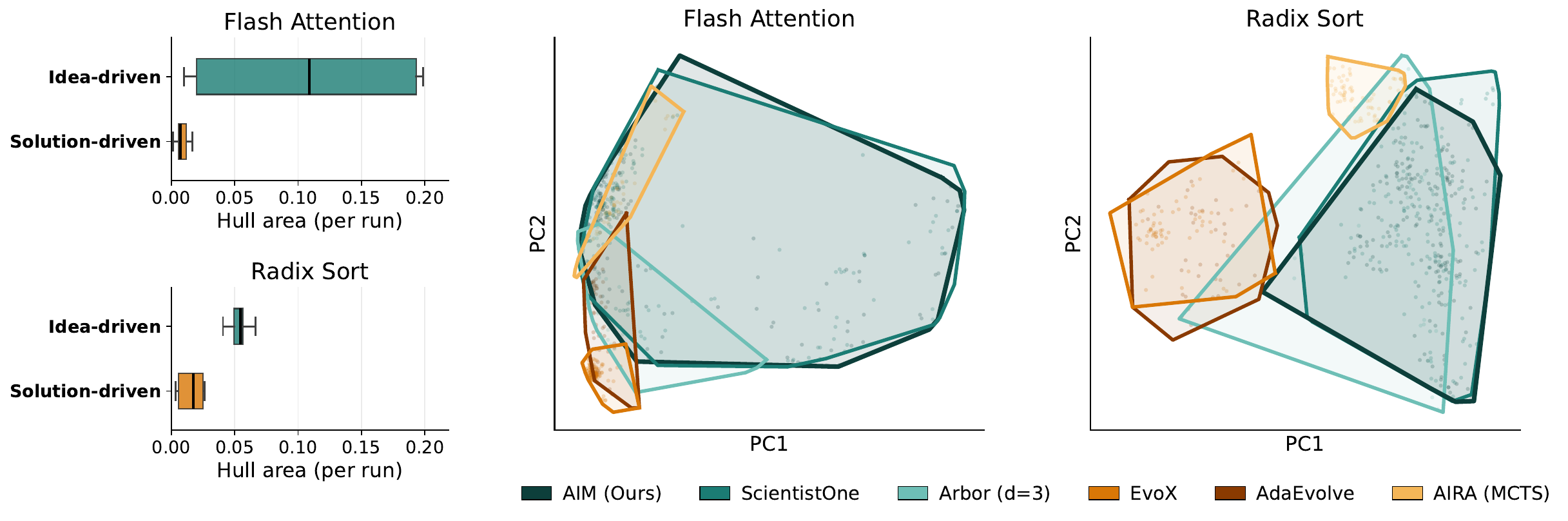}
    \caption{Idea-driven approaches generally show broader coverage of solutions in the embedding space. Embedding cosine similarity-based supplementary analysis is in Appendix~\ref{apdx:cosine}. Plots for all tasks are in Appendix~\ref{apdx:boxplots} and Appendix~\ref{apdx:convex-hull}.}
    \label{fig:theory}
\end{figure}

\subsection{Practical Considerations}
\label{apdx:practical-considerations}

Our theoretical analysis isolates the benefit of semantic breadth by assuming \emph{faithful realization} (Assumption~\ref{assump:faithful-realization}): a selected idea can be instantiated as a solution that faithfully reflects its intended research direction.
This assumption is useful for understanding the coverage advantage of idea-driven search, but it could be slightly idealized.
In practice, solver agents may produce incomplete or incorrect implementations, or realize a solution that only partially matches the intended idea.
Even when the implementation is faithful, some tasks inherently require substantial solution-level refinement before the value of a research direction can be assessed.
For example, machine learning systems may require hyperparameter tuning, while systems optimization tasks may depend on low-level implementation choices that cannot be fully specified at the idea level.

These considerations suggest that practical research agents lie on a continuum between purely idea-driven and purely solution-driven search. 
At one extreme, allocating each experiment execution to a new idea maximizes semantic breadth, but may under-invest in realizing and optimizing each direction.
At the other extreme, repeatedly refining the same implementation can exploit fine-grained execution feedback, but reduces the budget available for exploring alternative research directions.
The appropriate balance therefore depends on the task: problems with many meaningfully different high-level approaches may benefit more from semantic exploration, whereas tasks with relatively fixed strategies but substantial implementation-level optimization may benefit more from solution refinement.

Our framework, to be precise, adopts a hybrid design. 
Although search decisions are made over explicit research ideas, each solver branch is allowed to iteratively refine its implementation using execution and verifier feedback.
Thus, the framework retains the semantic organization of idea-driven search while still incorporating the local optimization behavior characteristic of solution-driven approaches.
This also motivates our idea--solution auditing mechanism, which explicitly checks whether the final implementation remains aligned with the selected idea before its evaluation is attributed back to the idea-level search process.
More broadly, the optimal balance between semantic exploration and implementation refinement is likely task-dependent.
An important direction for future work is therefore to adapt this balance dynamically based on observed implementation difficulty, solver fidelity, and the semantic structure of the task, rather than fixing the degree of idea-driven versus solution-driven behavior in advance.

\section{Conclusion}
\label{sec:conclusion}

In this work, we introduced \textsc{AIM}, a fully autonomous framework for managing and searching research ideas in idea-driven automated research. 
\textsc{AIM} dynamically organizes an evolving idea pool, explicitly estimates the promise of research directions, adaptively balances exploration and exploitation, and audits implementations before incorporating their outcomes into subsequent search. 
Across 10 AutoLab tasks, \textsc{AIM} achieves the best overall performance, while reaching strong solutions substantially faster compared to baselines. 
Our theoretical analysis further clarifies when idea-driven search can be advantageous: explicit idea-level allocation makes semantic coverage controllable, and broader coverage becomes more valuable when competitive directions are sparse among many plausible alternatives. 
Together, these results establish agentic idea management as a transparent and robust approach to budget-constrained automated research.

\section*{AI Use Statement}

In this work, we used generative AI tools for polishing the writings and generating plots.
We have not used generative AI tools to run the experiments for the study outside the agent methods being evaluated. We have reviewed all AI-assisted work. 
We manually checked the writing faithfully conveys our intended content, and if the plots correctly express values.
We take responsibility for the final content of this work,
including text, claims or artifacts produced with the aid of generative AI.

\section*{Ethics Statement}
This work does not involve human subjects, personal data, or user-facing deployment. 
Nevertheless, autonomous research agents generate and execute code, which may introduce risks such as insecure implementations, verifier exploitation, or unintended task behavior. 
We mitigate these risks by conducting all experiments in isolated, containerized benchmark environments with fixed verifiers and by using the Solution Auditor to identify task mismatches and reward-hacking behavior. 
No generated solutions are deployed in real-world systems. 
These safeguards cannot eliminate all risks, and human review remains necessary before applying autonomous research systems to consequential scientific or engineering settings.

\section*{Reproducibility Statement}
We provide pseudocode for the complete \textsc{AIM} pipeline in Algorithm~\ref{alg:aim}. 
Appendix~\ref{apdx:details} describes the evaluated benchmarks, baseline configurations, scoring procedure, execution and wall-clock budgets, branch-wise resource allocation, and implementation hyperparameters. 
The prompt templates for all LLM-driven operators are also included in the appendix. 
We report results over three independent runs using a common backbone model and evaluation environment, with means and standard errors. 


\bibliographystyle{abbrvnat}
\nobibliography*
\bibliography{ref}

\newpage
\appendix
\addcontentsline{toc}{section}{Appendix}
\part{Appendix} 
\parttoc 
\clearpage

\section{Related Works}
\label{apdx:related-works}

\paragraph{Automated Research with LLM Agents.}
Recent work has expanded LLM agents from supporting individual scientific tasks to conducting increasingly complete research workflows.
At the ideation stage, ResearchAgent~\citep{baek2025researchagent} grounds research idea generation in retrieved literature and iterative feedback, while HARPA~\citep{vasu2025harpa} develops literature-grounded, testable hypotheses and refines them using experimental evidence.
Complementarily, \citet{si2025novelideas} conduct a large-scale expert study of the novelty and quality of LLM-generated research ideas.
Broader research systems automate multiple stages of the scientific process: CodeScientist~\citep{jansen2025codescientist} develops a coding-based framework for semi-automated scientific discovery; The AI Scientist~\citep{lu2024ai} spans ideation, experimentation, paper writing, and review; and MLR-Copilot~\citep{li2024mlr}, Agent Laboratory~\citep{schmidgall2025agent}, AI-Researcher~\citep{tang2026ai}, and InternAgent~\citep{team2025internagent} similarly automate multi-stage workflows from hypothesis formation to experimental validation and reporting.
These works establish the broader setting of automated research, including the end-to-end pipeline resulting in a publishable paper, while our focus is specifically on the ``discovery" stage of automated research: \emph{how an agent should search under a limited experimental budget}.

\paragraph{Solution-driven Search over Executable Code.}
A prominent class of research agents directly searches over executable artifacts, keeping implementation and search tightly coupled.
AIDE~\citep{jiang2025aide} searches directly in the space of code, while AlphaEvolve~\citep{novikov2025alphaevolve} uses evolutionary code generation for algorithmic discovery.
MLE-Star~\citep{nam2026mle} and DS-Star~\citep{nam2025ds} iteratively improve machine-learning and data-science solutions through targeted refinement, planning, and verification.
More recent systems further develop solution-level search through evolutionary and reflective mechanisms: AdaEvolve~\citep{cemri2026adaevolve} performs adaptive LLM-driven zeroth-order optimization, EvoX~\citep{liu2026evox} introduces meta-evolution for automated discovery, 
AIDE~\citep{jiang2025aide} utilize an MCTS approach for machine learning tasks,
AIRA~\citep{toledo2026ai} studies evolutionary and tree-search strategies for machine-learning research, and RPM~\citep{foster2026ai} introduces a preference model for candidate solutions.
These methods exemplify the \emph{solution-driven} paradigm considered in our work: the executable solution itself remains the primary search object, allowing verifier feedback to be applied directly to subsequent implementation-level refinement.

\paragraph{Idea-driven Search over Research Ideas and Hypotheses.}
A complementary line of work maintains ideas or hypotheses as explicit intermediate objects before delegating their realization to an implementation agent.
The AI Scientist-v2~\citep{yamada2025ai} uses agentic tree search to progressively develop research directions and experiments.
DeepScientist~\citep{weng2026deepscientist} maintains candidate research directions and progressively selects them using a structured exploration mechanism, while AutoDiscovery~\citep{agarwal2026autodiscovery} conducts data-driven open-ended scientific discovery through search guided by Bayesian surprise.
Arbor~\citep{jin2026toward} explicitly organizes hypotheses in a refinement tree, separating hypothesis development from downstream implementation.
MARS~\citep{chen2026mars} performs modular reflective search over candidate ideas and solutions.
ScientistOne~\citep{meng2026scientistone} similarly makes research ideas explicit and uses an elite-preserving search procedure within its Chain-of-Evidence framework, while additionally preserving traceability between scientific claims, experimental evidence, and implementations.

These \emph{idea-driven} systems reveal several design choices for managing the intermediate idea space.
Existing approaches employ structures such as lists~\citep{weng2026deepscientist}, search trees~\citep{yamada2025ai,jin2026toward,agarwal2026autodiscovery}, or beam-style candidate retention~\citep{meng2026scientistone}, together with predefined selection mechanisms such as UCB or tree search.
In contrast, our work focuses on making \emph{idea management itself} an autonomous, evidence-conditioned component of the research process.
AIM dynamically reorganizes the evolving idea pool, explicitly estimates the relative promise of research directions, and adaptively determines where to explore or exploit based on accumulated experimental evidence.
Moreover, because separating ideation from implementation introduces the possibility that a solver may not faithfully realize its intended idea, AIM explicitly audits and reconciles idea--solution mismatches before incorporating results into subsequent search.
Thus, our contribution is complementary to prior idea-driven research agents: rather than introducing another fixed idea-search scaffold, we study how the idea space can be autonomously organized, searched, and maintained as evidence accumulates.

\paragraph{Testbeds for Automated Research.}
In parallel, a growing body of work has made automated experimentation measurable.
MLAgentBench~\citep{huang2024mlagentbench} evaluates agents that modify machine-learning code and iteratively interpret experimental feedback, while MLE-bench~\citep{chan2025mle} evaluates agents on competition-style machine-learning engineering.
RE-Bench~\citep{wijk2025re} studies frontier AI research-and-development capabilities relative to human experts, and MLGym~\citep{nathani2025mlgym} provides diverse environments for evaluating AI research agents.
More recent benchmarks broaden evaluation toward open-ended research problems, including MLRC-Bench~\citep{zhang2026mlrc}, MLR-Bench~\citep{chen2026mlr}, and ResearchGym~\citep{garikaparthi2026researchgym}.
We primarily evaluate on AutoLab~\citep{xu2026autolab}, which provides long-horizon research and engineering tasks with executable verifiers and enables controlled comparison of search strategies in systems optimization.

Overall, prior work has made substantial progress in both direct solution optimization and end-to-end automated research.
Our work builds on this literature by explicitly distinguishing \emph{solution-driven} search over executable solutions from \emph{idea-driven} search over ideas followed by implementation, and focuses on the latter's distinctive challenges: maintaining an evolving semantic representation of candidate ideas, making evidence-grounded decisions about which ideas deserve expensive implementation, and preserving idea--solution integrity throughout the search process.

\clearpage

\section{Agentic Idea Manager Algorithm}
\label{apdx:algorithm}

\begin{algorithm}[h!]
\caption{\textsc{AIM}: Agentic Idea Manager}
\label{alg:aim}
\begin{algorithmic}[1]
\Require Task context $\mathcal{T}$; initial idea pool $\mathcal{P}_1$;
execution budget $N$; total branches $B_{\mathrm{tot}}$;
maximum parallelism $B_{\max}$; time limit $H$.

\State $\mathcal{D}_1,\mathcal{M}_1\gets\varnothing$;
       $b\gets0$;
       $t\gets1$;
       $N_{\mathrm{branch}}\gets\lfloor N/B_{\mathrm{tot}}\rfloor$

\While{$b<B_{\mathrm{tot}}$ \textbf{and} elapsed time $<H$}
 
    \State $B_t\gets
        \begin{cases}
        5, & t=1,\\
        \phi_{\mathrm{plan}}(\mathcal{S}_t,B_{\mathrm{tot}}-b,B_{\max}),
        & t>1
        \end{cases}$
        \Comment{Resource Planner}

    \State $\mathcal{C}_t\gets
        \phi_{\mathrm{org}}(\mathcal{T},\mathcal{P}_t,\mathcal{D}_t,\mathcal{C}_{t-1})$
        \Comment{Agentic Surrogate: Organize}
    \State $R_t\gets
        \phi_{\mathrm{est}}(\mathcal{C}_t,\mathcal{D}_t)$
        \Comment{Agentic Surrogate: Estimate}
       
    \State $\mathcal{Q}_t\gets
        \phi_{\mathrm{disp}}(\mathcal{P}_t,\mathcal{C}_t,R_t,B_t)$
        \Comment{Agentic Acquisition: Dispatch}

    \ForAll{$x\in\mathcal{Q}_t$ \textbf{in parallel}}
        \State $(z_x,y_x,h_x)\gets
            \phi_{\mathrm{solve}}(\mathcal{T},x;N_{\mathrm{branch}})$
    \EndFor

    \State $\mathcal{V}_t\gets
        \phi_{\mathrm{audit}}
        \bigl(\{(x,z_x,y_x,h_x):x\in\mathcal{Q}_t\}\bigr)$
        \Comment{Solution Auditor}

    \State $\mathcal{D}_{t+1}\gets
        \mathcal{D}_t\cup
        \{(x^{+},y):(x^{+},z,y,h)\in\mathcal{V}_t\}$
    \State $\mathcal{M}_{t+1}\gets
        \mathcal{M}_t\cup\phi_{\mathrm{lesson}}(\mathcal{V}_t)$
    \State $\Delta \mathcal{P}_t \gets \phi_{\mathrm{exp}}(\mathcal{P}_t,\mathcal{D}_{t+1},\mathcal{M}_{t+1})$
        \Comment{Agentic Acquisition: Expand}
    \State $\mathcal{P}_{t+1}\gets
        \mathcal{P}_t\cup
        \{x^{+}:(x^{+},z,y,h)\in\mathcal{V}_t\}\cup \Delta \mathcal{P}_t $

    \State $b\gets b+B_t$;
           $t\gets t+1$
\EndWhile

\State \Return $\quad {\arg\max}_{(x,y)\in\mathcal{D}_t} \quad y$

\end{algorithmic}
\end{algorithm}
\section{Experimental Details}
\label{apdx:details}

\subsection{Further Baseline and Benchmark Details}

\paragraph{Baselines and budgets.}
All baseline agents use \texttt{gemini-3.1-pro-preview}~\citep{team2023gemini} as their underlying language model. 
Each method is allowed at most $N$ experiment executions and a wall-clock budget limit is set per run.
For the system optimization tasks, we impose a budget of at most 300 experiment executions and a hard wall-clock limit of 6 hours; for the model development tasks, we set at most 60 executions and wall-clock limit of 24 hours; for the CUDA tasks, we set at most 60 executions and wall-clock limit of 12 hours.
If either of the budget is exhausted, the process terminates.
For baseline methods that support parallel search or execution, we set the maximum parallelism to five workers, to keep it commensurable with \textsc{AIM} which usually spends at most five branches per iteration. 
All idea-driven search baselines receive the same initial research brief. 
We perform three independent runs for each (method, task) pair and report the mean and standard error.

\paragraph{AutoLab tasks.}
We evaluate on ten tasks from AutoLab~\citep{xu2026autolab} spanning three categories.
\emph{System optimization} (CPU): Flash Attention, Radix Sort, FFT Rust, AES128 Ctr, and Z-order Range Scan, which respectively optimize scaled dot-product attention in C, sorting of 50 million unsigned integers in C, a 32,768-point real-valued DFT in Rust, AES-128-CTR encryption of 256 MiB in C, and two-dimensional range-count queries over a Rust spatial index.
\emph{Model development} (GPU): MM World Model, which trains a video world model from scratch on Moving MNIST and is scored by PSNR on 10-step rollouts under a fixed four-hour training budget, and Data Select IE, which selects 5{,}000 training samples from a 50,000-sample pool drawn from 19 sources for LoRA fine-tuning of Qwen2.5-3B-Instruct and is scored by prompt-level strict accuracy on IFEval.
\emph{CUDA kernels} (GPU): Huffman Canonical Decode, which decodes $K$ independent canonical-Huffman bitstreams to their byte payloads; NTT Butterfly, which applies an in-place forward number-theoretic transform over the Goldilocks prime field to batched 64-bit rows; and ICP Correspondence Step, which performs one Iterative-Closest-Point correspondence step, i.e., nearest-neighbor search against a prebuilt KD-tree together with accumulation of the cross-covariance, residual error, and correspondence count.

Each AutoLab task provides a natural-language instruction, a containerized environment, an editable codebase containing a correct but deliberately weak baseline (an inefficient implementation for the optimization and CUDA tasks, or a simple training or selection recipe for the model-development tasks), and a local evaluation script.
During search, an agent may repeatedly edit the implementation, execute it, inspect its score and correctness feedback, and refine subsequent solutions.
The task additionally contains a human-written reference implementation and a held-out verifier.
The reference solution is used only to calibrate the scoring scale and is not exposed to the research agent.

\paragraph{Evaluation and scoring.}
The verifier first checks functional correctness and task-specific constraints; a solution that fails these checks or does not improve upon the baseline receives zero reward.
Valid solutions are scored on one of two scales, depending on the task category.

\emph{Runtime tasks (system optimization and CUDA).}
Because runtime depends on the execution environment, for the system-optimization tasks we run both the baseline and reference implementations on the same local hardware used to evaluate generated solutions.
Let $t_{\mathcal{B}}$, $t_{\mathcal{R}}$, and $t(z)$ denote the runtimes of the baseline, reference, and generated solution $z$, respectively. Following AutoLab, the normalized score is
\begin{equation}
s(z)
=
\begin{cases}
0,
& \text{if $z$ is invalid or } t(z)\geq t_{\mathcal{B}},\\[2mm]
\displaystyle
\operatorname{clip}\!\left(
\frac{1}{2}
\frac{\log\!\left(t_{\mathcal{B}}/t(z)\right)}
     {\log\!\left(t_{\mathcal{B}}/t_{\mathcal{R}}\right)},
0,1
\right),
& \text{otherwise}.
\end{cases}
\label{eq:autolab-score}
\end{equation}
The baseline therefore receives $s=0$, while matching the reference solution gives $s=0.5$.
Solutions outperforming the reference receive scores above $0.5$, up to a maximum of $1.0$.

\emph{Model-development tasks.}
These tasks are scored on a task-specific quality metric $m(z)$ (higher is better) rather than runtime. Following AutoLab, the score interpolates linearly between a baseline anchor $m_{\mathcal{B}}$ and a reference anchor $m_{\mathcal{R}}$:
\begin{equation}
s(z)
=
\begin{cases}
0,
& \text{if $z$ is invalid},\\[2mm]
\displaystyle
\operatorname{clip}\!\left(
\frac{m(z)-m_{\mathcal{B}}}{m_{\mathcal{R}}-m_{\mathcal{B}}},
0,1
\right),
& \text{otherwise},
\end{cases}
\label{eq:autolab-score-modeldev}
\end{equation}
with $(m_{\mathcal{B}}, m_{\mathcal{R}})=(14.0, 20.0)$~dB PSNR for MM World Model and $(0.38, 0.48)$ IFEval accuracy for Data Select IE, which are provided by the benchmark.

We multiply $s(z)$ by $100$ when reporting percentage scores in the main paper.

\subsection{Implementation Details}

\paragraph{Branch-wise execution budgets.}
We distinguish the total number of Solver branches, $B_{\mathrm{tot}}$, from the number of branches executed concurrently at iteration $t$, $B_t$. 
Before each run, the total execution budget is divided equally among the Solver branches:
\begin{equation}
N_{\mathrm{branch}}
=
\left\lfloor
\frac{N}{B_{\mathrm{tot}}}
\right\rfloor.
\label{eq:branch-budget}
\end{equation}
If we use $N=300$ and $B_{\mathrm{tot}}=25$, it gives each branch a fixed budget of $12$ experiment executions. 
A branch may reason over multiple rounds, modify its intermediate implementation, and invoke the task evaluator up to $12$ times. 
Every attempted execution counts toward this budget, including executions whose outputs are subsequently rejected by the Solution Auditor.

\paragraph{Initial Idea Pool.}
We initialize the idea pool using the Ideator from ScientistOne~\citep{meng2026scientistone}. 
The Ideator generates candidate ideas conditioned on the task context, after which its feasibility critic filters out ideas that are impractical or incompatible with the task requirements. 
This process yields an initial pool of approximately ten feasible ideas.

\paragraph{Agentic Surrogate.}
To avoid producing either a degenerate single cluster or an excessively fragmented map, the Organize operator is constrained to create between $C_{\min}$ and $C_{\max}$ semantic clusters, which we set to $C_{\min}=2$ and  $C_{\max}=5$.
The same bounds are used across all tasks and iterations. 
Within these constraints, the operator autonomously determines the number of clusters, their semantic descriptions, and the assignment of ideas to clusters.

\paragraph{Agentic Acquisition.}
At each round, the Expand operator pushes new ideas into the idea pool based on the four modes described in the main paper: push\_score, cross\_pollinate, fix\_error, and new\_idea.
To avoid an overflow of new ideas, we cap the maximum number of ideas for each branch to 3. 
For instance, if there were 5 branches in an iteration, the maximum number of idea that can be added to the pool in that iteration is 15.

\paragraph{Resource Planner.}
The first search iteration always launches 5 parallel Solver branches, ensuring an initial breadth of exploration before sufficient experimental evidence is available for adaptive planning.
In subsequent iterations, the Resource Planner dynamically plans branches for future iterations.
To ensure breadth of exploration at each iteration, we also set a minimum of 3 branches and a maximum of 10 branches every round.
Thus,
\begin{equation}
B_1=5,
\qquad
3 \leq B_t \leq 10  \;\; (t>1),
\qquad
\sum_{t=1}^{K}B_t=B_{\mathrm{tot}}.
\label{eq:implementation-resource-plan}
\end{equation}
The planner therefore changes how the fixed set of branches is distributed across iterations, while the per-branch execution budget remains fixed at $N_{\mathrm{branch}}$.
Empirically, however, the planner agent prefers to assign smaller number of parallel branches than 5.

\subsection{Prompt Templates}

Here, we show the prompt templates used by LLM-driven stages of AIM.
Each box contains an operator's prompt package: the system prompt followed by the USER PROMPT  with per-call payload fields.
Angle-bracketed placeholders (\texttt{<...>}) name the runtime content that fills each slot at prompt-render time.

\begin{tcolorbox}[promptbox={Agentic Surrogate: Organize Prompt Template}]
\begin{Verbatim}[fontsize=\small,breaklines=true]
You are the meta research agent overseeing a parallel research-discovery loop. Your job at every iteration is to ORGANIZE the current idea pool into K thematic CLUSTERS so the downstream allocator can spread parallel branches across distinct research approaches.

Hard constraints:
  - K within [K_min, K_max] from the user prompt (read every iter; do not assume a default).
  - Every idea id in the pool MUST appear in exactly ONE cluster.
  - Each cluster contains at least one idea.
  - Themes must be meaningful — group by mechanism / assumption / approach; avoid catch-alls.

OUTPUT — single JSON object inside ```json fences:
{
  "K": <int>,
  "clusters": [
    { "label": "Short name (<=6 words)",
      "theme": "1-3 sentences on what unifies these ideas",
      "idea_ids": ["<id>", ...],
      "rationale": "why these ideas belong together" }, ...
  ]
}
\end{Verbatim}

\tcbline

\begin{Verbatim}[fontsize=\small]
USER PROMPT:
## Benchmark task
<benchmark task description>

## Iteration
This is iteration <current iter number>. You have B = <parallel_branches this iter> parallel branches. K_max = <upper bound on K> is a HARD MAX — NOT a target. Pick K in [2, <K_max>] that best matches the pool's thematic structure.

## Prior organization
<previous iter's clustering JSON, or null on iter-1>

## Pool snapshot
<full idea pool: per-idea id, source (ideator/refined/fresh_inject),
 parent_id, latest_iter, latest_score, success, title, abstract>
\end{Verbatim}
\end{tcolorbox}

\begin{tcolorbox}[promptbox={Agentic Surrogate: Estimate Stage 1 Prompt Template}]
\begin{Verbatim}[fontsize=\small,breaklines=true]
You are ranking research-idea CLUSTERS by promising-ness. Your rankings feed a downstream allocator that decides how many parallel research branches to spend on each cluster.

You see: label, theme, n_evaluated / n_ideas, mean_evaluated_score, best_evaluated_score. Nothing more — you are judging the DIRECTION, not individual ideas.

## Ranking scheme — DENSE RANKS
Integers from 1 (most promising). Ties allowed — same integer. Next distinct rank after a tie is +1, not skipped.
  Valid:   [1,2,2,3,4]   [1,1,1,1,1]   [1,2,3,4,5]
  Invalid: [1,2,2,4,5]   (gap after tie)

## Guidance
- High `best_evaluated_score`     → LOW rank (promising).
- Many evaluated, all scored poorly → HIGH rank (exhausted).
- n_evaluated=0 → rank on THEME quality vs task's known bottlenecks.
- Do NOT collapse everything to rank 1.

## Output — JSON, fenced with ```json:
{ "cluster_ranks": [{"cluster_idx": <int>, "rank": <int>}, ...],
  "rationale": "1-3 sentences — cite specific clusters" }
\end{Verbatim}

\tcbline

\begin{Verbatim}[fontsize=\small]
USER PROMPT:
## Benchmark task
<benchmark task description>

## Scoring scale
<score direction (minimize/maximize), units, explicit numeric range if known>

## Clusters to rank (<N> total)
<per-cluster block: cluster_idx, label, theme, n_ideas, n_evaluated,
 mean_evaluated_score, best_evaluated_score>
\end{Verbatim}
\end{tcolorbox}

\begin{tcolorbox}[promptbox={Agentic Surrogate: Estimate Stage 2 Prompt Template}]
\begin{Verbatim}[fontsize=\small,breaklines=true]
You are ranking research IDEAS within a single cluster by promising-ness. Your ranking feeds the downstream allocator that picks which idea each branch attempts.

You see: unevaluated candidates within ONE cluster (title, short hypothesis, abstract). The cluster's LABEL and THEME give you the direction they share.

## Ranking scheme — DENSE RANKS (same rules as Stage 1)
  Valid:   [1,2,2,3]   [1,1,2,3]   [1,2,3,4]
  Invalid: [1,2,2,4]   (gap after tie)

## Guidance
- Rank by how strongly mechanism, novelty, and specificity predict a good score.
- LOW rank for ideas that clearly instantiate the cluster's theme with a concrete, testable optimization.
- HIGH rank for ideas that repeat existing themes or lack mechanism specificity.
- Do NOT collapse everything to rank 1 unless truly indistinguishable.

## Output — JSON, fenced with ```json:
{ "idea_ranks": [{"idea_id": "<id>", "rank": <int>}, ...],
  "rationale": "1-3 sentences" }
\end{Verbatim}

\tcbline

\begin{Verbatim}[fontsize=\small]
USER PROMPT:
## Benchmark task
<benchmark task description>

## Scoring scale
<direction, units, numeric range if known>

## This cluster
label: <cluster's short label>   
cluster_rank: <cluster's rank from Stage 1>
theme: <cluster's 1-3 sentence theme>

## Ideas to rank
<per-idea block: idea_id, title, short hypothesis, abstract>
\end{Verbatim}
\end{tcolorbox}

\begin{tcolorbox}[promptbox={Agentic Acquisition: Dispatch Stage 1 Prompt Template}]
\begin{Verbatim}[fontsize=\small,breaklines=true]
You are the meta-allocator. In THIS step you decide the SHAPE of the search for this iter (which cluster each branch is assigned to, plus cluster- and idea-level actions), NOT the specific ideas. Stage B picks the concrete ideas next.

## Vocabulary
cluster_action = "exploit" → invest branches in an already-producing cluster.
cluster_action = "explore" → invest in an underprobed cluster.
idea_action = "exploit"    → within the cluster, next step picks the most-promising unevaluated idea.
idea_action = "explore"    → within the cluster, next step picks a novel-mechanism unevaluated idea.

## Rank semantics (v1.6)
- cluster_rank=1 → most promising (prefer for exploit).
- Higher rank → less promising. If well-probed, may be exhausted; if under-probed, holds explore value.
- Ties allowed and common — treat tied clusters as comparable.

## Output — JSON:
{
  "cluster_decisions": [
    {"cluster_idx": <int>, "action": "exploit"|"explore", "n_branches": <int>, "rationale": "..."}, ...
  ],
  "branch_action_assignments": [
    {"branch": <int>, "cluster_idx": <int>, "cluster_action": "exploit"|"explore",
     "idea_action": "exploit"|"explore", "rationale": "..."}, ...
  ],
  "rationale_summary": "..."
}

Validator constraints: each cluster_idx appears at most once in cluster_decisions; sum(n_branches) == B; EXACTLY B branch_action_assignments; each branch's cluster_action matches its cluster's action.

A run that puts everything on one cluster, or labels every action "exploit", is failing the loop's purpose. Do NOT output any `idea_id` in this step.
\end{Verbatim}

\tcbline

\begin{Verbatim}[fontsize=\small]
USER PROMPT:
## Benchmark task
<benchmark task description>

## Iteration
iter_idx=<current iter, 0-indexed>, total_iters=<total planned iters>, B=<parallel branches this iter>

## Previous iter's allocation
<the last iter's allocation.json — rationale summary + per-branch picks — or null on iter-1>

## Score history
<per-branch score records so far: iteration, branch, score, success, role, idea_id, audit_flags>

## Clusters
<per-cluster block: cluster_idx, label, theme, member counts,
 actual-score distribution (min/max/mean of evaluated members),
 cluster_rank, top-K unevaluated preview by within_cluster_rank>
\end{Verbatim}
\end{tcolorbox}

\begin{tcolorbox}[promptbox={Agentic Acquisition: Dispatch Stage 2 Prompt Template}]
\begin{Verbatim}[fontsize=\small,breaklines=true]
You are the idea-selection agent for ONE branch. Stage A already decided this branch's cluster and (cluster_action, idea_action). Pick the SPECIFIC unevaluated idea from the assigned cluster that best matches idea_action.

## Action interpretation (v1.6 rank semantics)
Each unevaluated member carries `within_cluster_rank=R/M` (1=most promising in the cluster, M=cluster size) and `cluster_rank=K`.

For idea_action = "exploit":
- Pick the LOWEST within_cluster_rank member.
- Ties are common — break using abstract + cluster context.
- Use evaluated members as evidence — what mechanisms paid off?
- A refined child whose parent scored well is often the right exploit pick even if within_cluster_rank isn't strictly lowest.

For idea_action = "explore":
- Pick a HIGH within_cluster_rank member — evaluating it gives more info gain than another shot at the top.
- Prefer mechanisms NOT YET tried by evaluated members.
- Refined children on NEW trajectories beat refined children on already-explored trajectories.
- DO NOT pick the lowest within_cluster_rank — that is exploit.

## Hard constraints
- `idea_id` MUST be UNEVALUATED and in the assigned cluster.
- `idea_id` MUST NOT be in the "Already picked by other branches" list.
- Evaluated members are shown for REASONING only — never pick one.

## Output — JSON only, no prose:
{ "branch": <int>, "idea_id": "<unevaluated-id>", "rationale": "<2-4 sentences>" }

## Iter-1 suffix (appended only when iter_idx == 0):
No idea has been evaluated yet — every predicted_score is an LLM estimate. For every branch, set idea_action: "exploit" and pick the highest-predicted_score idea. Iter-1 is for validating the estimator's top predictions.
\end{Verbatim}

\tcbline

\begin{Verbatim}[fontsize=\small]
USER PROMPT:
## Benchmark task
<benchmark task description>

## This branch
branch_idx=<branch index within [0,B)>, cluster_idx=<assigned cluster index>, cluster_action=<exploit or explore, from Stage A>, idea_action=<exploit or explore, from Stage A>

## Cluster
label=<cluster label>, theme=<cluster 1-3 sentence theme>

## Cluster members
<per-member block: id, title, abstract, was_assigned,
 for evaluated: latest_score, latest_iter, latest_success,
 for unevaluated: within_cluster_rank, cluster_rank, n_cluster_members, lineage (parent_id, parent_score, trajectory_trend)>

## Already picked by other branches
<sorted list of idea_ids picked by earlier-index branches this iter>
\end{Verbatim}
\end{tcolorbox}

\begin{tcolorbox}[promptbox={Agentic Acquisition: Expand Prompt Template}]
\begin{Verbatim}[fontsize=\small,breaklines=true]
You are the swarm-aware refiner for a parallel research-search loop. Each iteration, multiple branches independently try ideas; your job is to take ONE branch's outcome plus the full swarm context and propose refined ideas that should rejoin the pool.

You see: this branch's parent idea + evaluation; sibling branches' outcomes this iter; cluster context; prior-iter top-3 / bottom-3 across the run; distilled lessons.

You have up to <max proposal slots for this branch> PROPOSAL SLOTS. Emit AT MOST ONE proposal per action kind; skip any that don't apply. Quality over quota.

The 4 actions (menu):

- `fix_error` — branch failed. Propose a targeted fix. Skip if it succeeded.
- `push_score` — branch succeeded with headroom. Propose a tighter implementation.
- `cross_pollinate` — a specific sibling outcome OR lesson gives a compositional improvement.
    HARD: base reasoning ONLY on ## Lessons + ## Sibling outcomes.
    Cite exactly ONE source:
      * sibling_referenced: <sibling_idea_id from Sibling outcomes>, OR
      * lesson_context_idea: <tag or context_idea from Lessons>
    Both empty, both set, or unknown id → proposal dropped.
- `new_idea` — no in-cluster improvement to offer; propose an ORTHOGONAL direction. Parentless.
    Workflow: (1) read Cluster context for the diversity BASELINE;
              (2) name 1-2 orthogonal directions in the Abstract;
              (3) instantiate one.
    If best you can offer is a baseline variant → use cross_pollinate or push_score, NOT inject_fresh.

OUTPUT — YAML:
ideas:
  - Name: <short snake_case>
    Title: <refined idea title>
    Short Hypothesis: <one sentence>
    Abstract: <2-3 sentences, <=200 words>
    Experiments:
      - hypothesis: <what this experiment tests>
        experiment_plan_steps: [<concrete step>, ...]
    Risk Factors and Limitations: <known risks>
    kind: fix_failure | push_score | cross_pollinate | inject_fresh
    parent_id: <omit for default parent; empty for inject_fresh>
    sibling_referenced: <id>   # REQUIRED for cross_pollinate — pick ONE
    lesson_context_idea: <tag or id>  # of these two, not both, not neither

Hard rules: AT MOST ONE proposal per kind; empty list acceptable; do not include the parent idea verbatim.
\end{Verbatim}

\tcbline

\begin{Verbatim}[fontsize=\small]
USER PROMPT:
## Benchmark task
<benchmark task description>

## This branch (<branch index>): parent idea
<parent idea's full YAML: name, title, short hypothesis, abstract, experiments, risk factors — the same the solver received>

## Evaluation
<evaluator output: score, success, one-line eval detail (crash msg, failure mode, or short success summary)>

## Sibling outcomes this iteration
<per-sibling row: branch idx, idea id, idea title, score, success, one-line eval gist — from ALL sibling branches this iter>

## Cluster context
<parent's cluster: label, theme, evaluated-member score distribution, sibling-cluster overview so the model can identify orthogonal directions>

## Prior-iter signals
<top-3 and bottom-3 ideas across the whole run so far, with iteration, branch, score, cluster label - for identifying durable winners / dead ends>

## Lessons
<consolidated lessons pool (capped at <lessons_cap>): tag, lesson text, optional context_idea id>
\end{Verbatim}
\end{tcolorbox}

\begin{tcolorbox}[promptbox={Solution Auditor Prompt Template}]
\begin{Verbatim}[fontsize=\small,breaklines=true]
You are a solution auditor for an automated research pipeline. Your job is to determine whether a solution is LEGITIMATE — i.e., it actually solves the stated task in a meaningful way rather than gaming the evaluation.

You will be given:
1. The TASK INSTRUCTION defining what problem must be solved
2. The IDEA the agent was supposed to implement
3. The SOLUTION CODE the agent produced
4. The EVALUATION METRICS returned by the evaluator

Check for these four failure modes:

**Reward Hacking**: The solution manipulates, monkey-patches, or reverse-engineers the evaluator to inflate its score without genuinely solving the task.
**Idea-Solution Mismatch**: The solution ignores the proposed idea and implements something unrelated.
**Task Mismatch**: The IDEA (and hence the CODE) solves a DIFFERENT problem than the TASK specifies (e.g. approximation instead of exact, weaker guarantee than required). Orthogonal to Idea-Solution Mismatch; both can be flagged simultaneously.
**Trivial Solution**: No-op, constants, verbatim baseline.

Respond with a JSON object (no markdown fences):
{
  "legit": true/false,
  "flags": ["reward_hacking", "idea_mismatch", "task_mismatch", "trivial"],
  "confidence": 0.0-1.0,
  "reasoning": "one paragraph explanation",
  "reconstructed_idea": { ... }   // REQUIRED iff `flags` contains "idea_mismatch"
}

Only include flags that apply. 

`reconstructed_idea` schema (matches ideator's canonical shape):
{ "title", "short_hypothesis", ["twist" iff mode="unconventional"],
  "grounded_in":[...], "addresses", "mode": "conservative"|"unconventional" }
\end{Verbatim}

\tcbline

\begin{Verbatim}[fontsize=\small]
USER PROMPT:
## TASK INSTRUCTION
<benchmark task description>

## IDEA
<the idea assigned to this branch: title, short hypothesis, abstract,
 experiments, risk factors — the same YAML the solver received>

## SOLUTION CODE
<the solver's final produced source code>

## EVALUATION METRICS
<evaluator output JSON>
\end{Verbatim}
\end{tcolorbox}

\begin{tcolorbox}[promptbox={Resource Planner Prompt Template}]
\begin{Verbatim}[fontsize=\small,breaklines=true]
You are the DYNAMIC resource planner for a parallel research-search loop. The total branch-budget is FIXED (= parallel_branches × iterations). Iter-1 is pinned to `parallel_branches` for deterministic bootstrap. Every remaining branch has the same per-branch compute budget.

**Length is your primary lever.** The `iterations` value from the launcher is used only to compute the total branch-budget — NOT a target for len(plan). Do NOT match the launcher's iteration count as a reflex.

Your job: distribute REMAINING branches across as many additional iterations as you think optimal.
  - Spend EXACTLY `remaining` branches (no over- or under-shoot).
  - Each iter you add has BETWEEN 1 AND max_branches_per_iter branches.
  - PREFER AT LEAST <B_min> BRANCHES PER ITERATION.
  - You cannot revise the frozen prefix.

## Preferred shapes (choose length from evidence, not the launcher's iterations knob)
- Long tail of 3-branch iters — refinement > parallelism.
- Iterate-and-refine (5-10 iters, 3-5 branches) — balanced.
- Broad-then-deep — heavy early, tapering to polish.
- Concentrate-and-terminate (2-3 large iters).

## Hard constraints (validated)
- sum(plan) == total_runs      # EXACTLY
- plan[0] == parallel_branches # iter-1 pinned
- 1 <= plan[i] <= max_branches_per_iter
- plan[:iter_idx] == frozen_prefix verbatim
- len(plan) > iter_idx         # at least one more iter beyond frozen prefix

## Output — JSON in ```json fences:
{ "plan": [<int>, <int>, ...],
  "rationale": "2-4 sentences on why THIS LENGTH and this distribution" }


## Benchmark task
<benchmark task description>

## Position
iter_idx=<current iter, 0-indexed>,
total_runs=<total branch budget = parallel_branches × iterations>,
parallel_branches=<launcher's -n value>,
max_branches_per_iter=<hard cap per-iter, from launcher>,
frozen_prefix=<list of per-iter branch counts already spent>

## Evidence
best_so_far=<best clean score across all evaluated branches so far>,
target=<current adaptive target score or null>,
evaluated_scores=<list of evaluated branches' raw scores, in order>,
estimated_scores=<list of unevaluated candidates' predicted scores>

## Cluster summary
<per-cluster: label, theme, n_evaluated, best_evaluated_score>

## Lessons
<consolidated lessons pool (tag + lesson text)>
\end{Verbatim}
\end{tcolorbox}

\section{Further Experiments}

\subsection{Solver Substitution with Claude Code}
\label{apdx:claude-code}

To evaluate whether the effectiveness of \textsc{AIM} is tied to a specific solver implementation and to assess how much an explicit idea management scaffold benefits existing coding agents, we conduct a solver substitution experiment. 
In this setting, we replace the default Gemini Deep Solver with Claude Code~\citep{claude_code}. 
We compare \textsc{AIM} with ScientistOne~\citep{meng2026scientistone} across all five AutoLab tasks. 
ScientistOne represents the strongest idea-driven baseline from our main experiments, configured to delegate implementation to Claude Code.
Similarly, our proposed \textsc{AIM} framework uses Claude Code as the downstream solver agent across parallel execution branches.

Table~\ref{tab:claude-code} summarizes the performance across the benchmark tasks.
Across all evaluated domains, \textsc{AIM} consistently outperforms or matches ScientistOne across all five benchmarks when both utilize the Claude Code solver.
\begin{table}[h!]
\centering
\caption{\textbf{Solver Substitution}. Comparison of \textsc{AIM} and ScientistOne with Claude Code Solvers.}
\label{tab:claude-code}
\renewcommand{\arraystretch}{1.35}
\resizebox{0.8\linewidth}{!}{%
\begin{tabular}{l | c c}
\toprule
\textbf{Task}
& \textbf{ScientistOne (Claude-Code Solver)}
& \textbf{\textsc{AIM} (Claude-Code Solver)} \\
\midrule

Flash Attention
& 81.6 \stderr{3.5}
& \textbf{84.5} \stderr{0.3} \\

Radix Sort
& 67.4 \stderr{0.6}
& \textbf{67.7} \stderr{0.2} \\

FFT Rust
& 54.6 \stderr{0.3}
& \textbf{55.4} \stderr{0.2} \\

AES128 Ctr
& 66.9 \stderr{0.3}
& \textbf{67.6} \stderr{0.6} \\

Z-order Range Scan
& \textbf{52.2} \stderr{0.1}
& \textbf{52.2} \stderr{0.3} \\

\bottomrule
\end{tabular}%
}
\end{table}


\subsection{Further Ablation Studies on the Agentic Surrogate}
\label{apdx:surrogate}

Table~\ref{tab:further_surrogate} presents further in-depth ablation studies on the Agentic Surrogate module.
We evaluate several alternative configurations to isolate the contributions of its sub-components. 
The ``without Organize'' setting bypasses clustering entirely, treating the entire list of ideas as a single unified cluster before passing them to the Estimate operator for ranking. 
The ``without Estimate'' configuration removes the estimation step, passing only the unranked cluster information from the Organize step directly to the Dispatch operator within the Agentic Acquisition module. 
The ``without Agentic Surrogate'' setting removes both the Organize and Estimate components, serving as the same baseline ablation described in the main manuscript. 
Furthermore, we test an ``Embedding-based Organize'' variant that performs K-means clustering (with an automatically determined $K$) based on idea embeddings generated by the Gemini-Embedding-001~\citep{lee2025gemini} model. 
Finally, the ``Direct Score Estimation'' configuration replaces ordinal ranking estimates with direct raw score predictions.

\begin{table}[h!]
    \centering
    \caption{Further ablation studies on the Agentic Surrogate module.}
    \label{tab:further_surrogate}
    \begin{tabular}{l c}
        \toprule
        \textbf{Method} & \textbf{Flash Attention Scores} \\
        \midrule
        Full \textsc{AIM} & 90.5 \stderr{0.8} \\
        \midrule
        \textit{without} Organize & 89.0 \stderr{0.4} \\
        \textit{without} Estimate & 87.9 \stderr{1.2} \\
        \textit{without} Agentic Surrogate (both Organize and Estimate) & 85.9 \stderr{2.7} \\
        \midrule
        Embedding-based Organize & 83.4 \stderr{2.8} \\
        Direct Score Estimation & 89.3 \stderr{1.2} \\
        \bottomrule
    \end{tabular}
\end{table}

The results from the ablation study demonstrate the critical importance of both the LLM-driven semantic organization and the ordinal estimation components.
The Full AIM achieves the highest performance with a Flash Attention score of $90.5 \pm 0.8$. 
Removing either the Organize or Estimate operators degrades performance to $89.0 \pm 0.4$ and $87.9 \pm 1.2$, respectively, while removing the Agentic Surrogate entirely results in a substantial drop to $85.9 \pm 2.7$. 

Interestingly, the Embedding-based Organize approach yields the lowest score of all configurations ($83.4 \pm 2.8$), performing even worse than the complete removal of the surrogate. 
Our interpretation of this result is that standard distance-based clustering on generic semantic embeddings fails to capture the nuanced, task-specific structural relationships that the LLM-based Organize operator can successfully identify and group. 
Additionally, substituting ordinal ranking with Direct Score Estimation ($89.3 \pm 1.2$) underperforms the Full AIM. 
This indicates that language models are generally more reliable at performing relative, ordinal comparisons of research ideas than they are at predicting absolute, uncalibrated performance scores from raw text.

\clearpage
\subsection{Time Efficiency Plots for All Tasks}
\label{apdx:full-plots}

In this section, we provide the time efficiency plot in Figure~\ref{fig:time_to_score} and Figure~\ref{fig:time_to_score_gpu} for all the 10 tasks evaluated.
Overall, our \textsc{AIM} provides an efficient framework, attaining top scores in a shorter amount of time, compared to the idea-driven and solution-driven baselines.

\begin{figure*}[h!]
    \centering

    \begin{subfigure}[t]{0.49\textwidth}
        \centering
        \includegraphics[width=\linewidth]{B_Figures/figs/flash_attention_combined.pdf}
        \caption{Flash Attention}
        \label{fig:plot2}
    \end{subfigure}
    \begin{subfigure}[t]{0.49\textwidth}
        \centering
        \includegraphics[width=\linewidth]{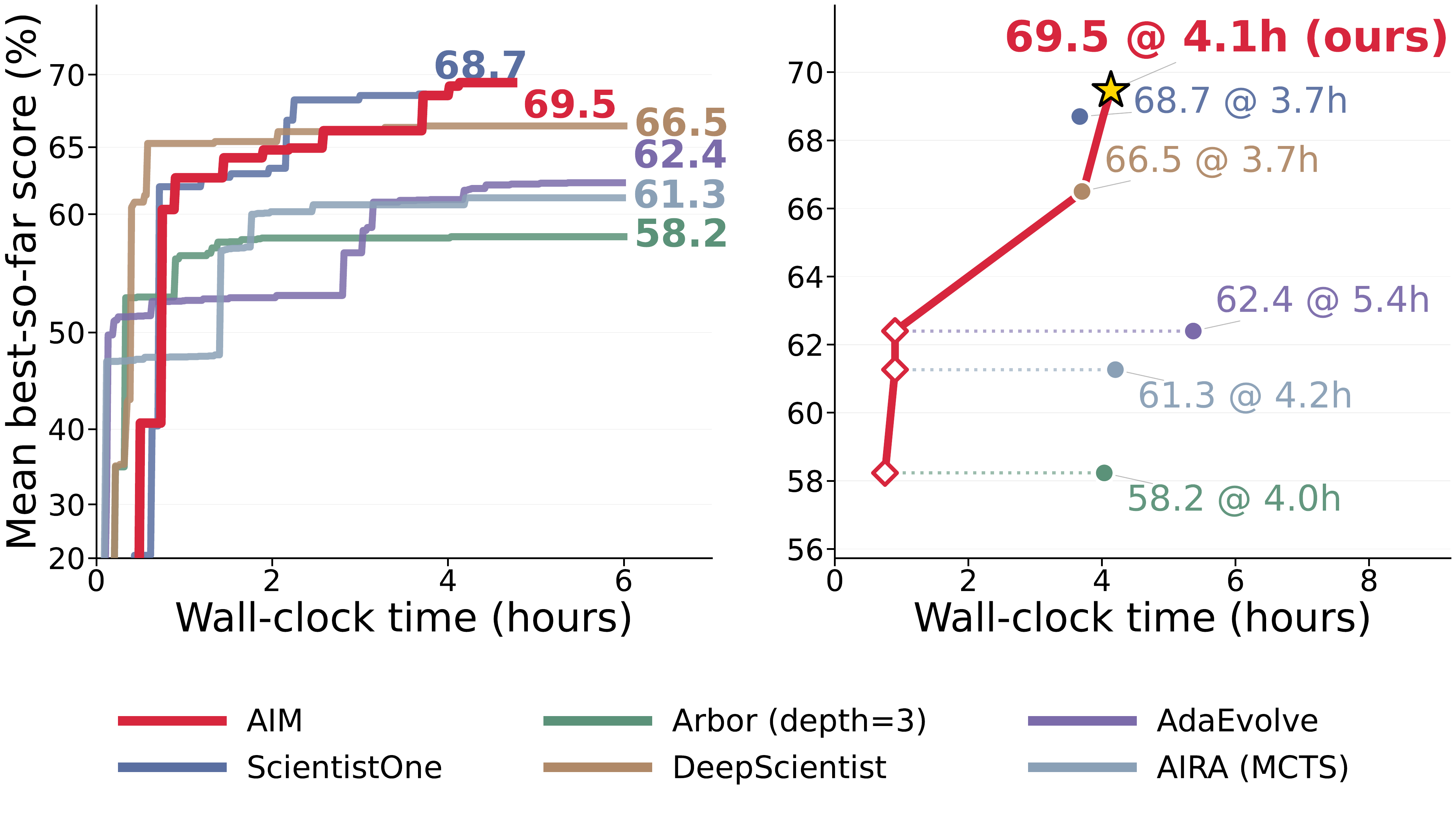}
        \caption{Radix Sort}
        \label{fig:plot2}
    \end{subfigure}

    \vspace{4mm}
    
    \begin{subfigure}[t]{0.49\textwidth}
        \centering
        \includegraphics[width=\linewidth]{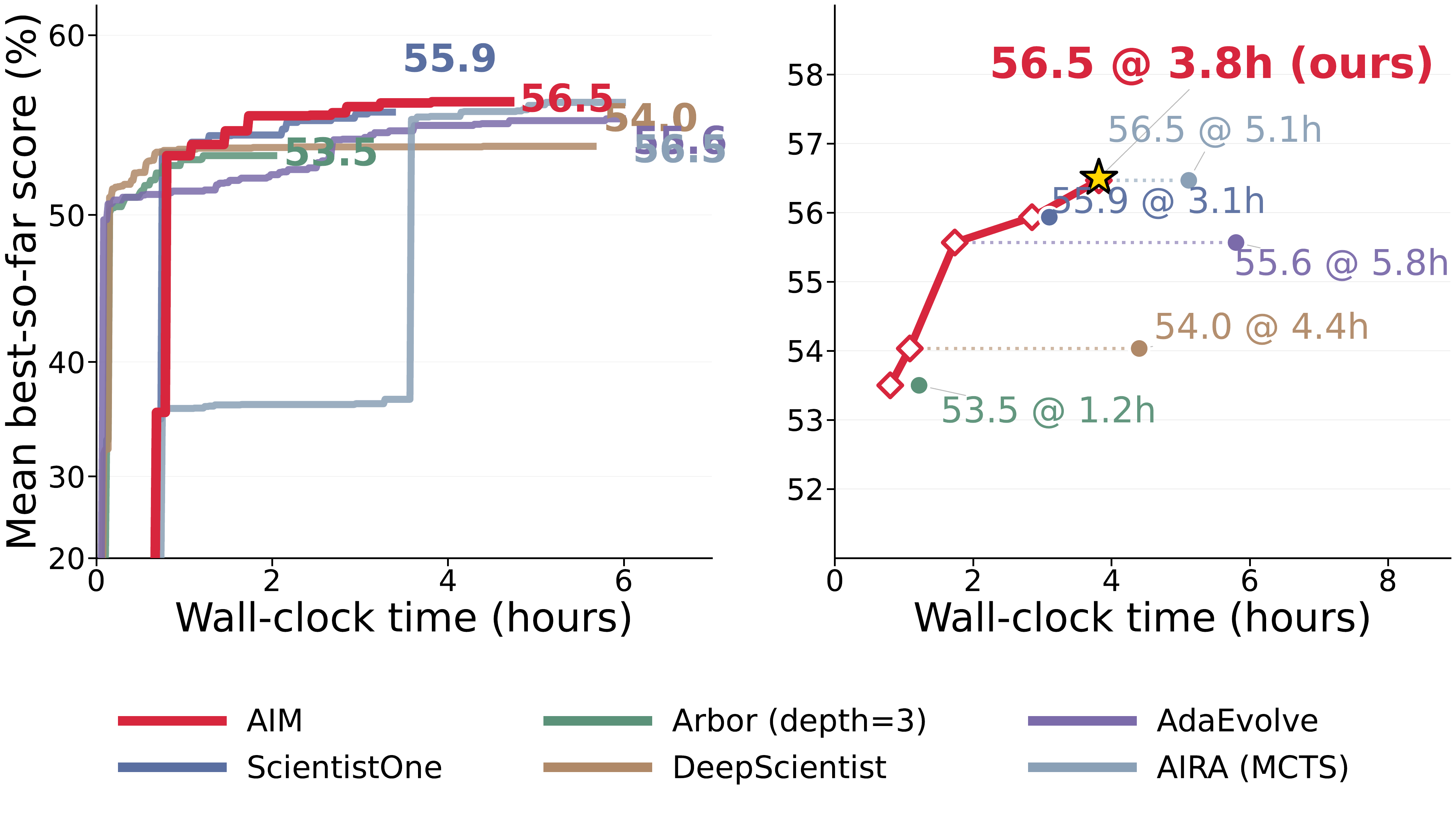}
        \caption{FFT Rust}
        \label{fig:plot3}
    \end{subfigure}
    \hfill
    \begin{subfigure}[t]{0.49\textwidth}
        \centering
        \includegraphics[width=\linewidth]{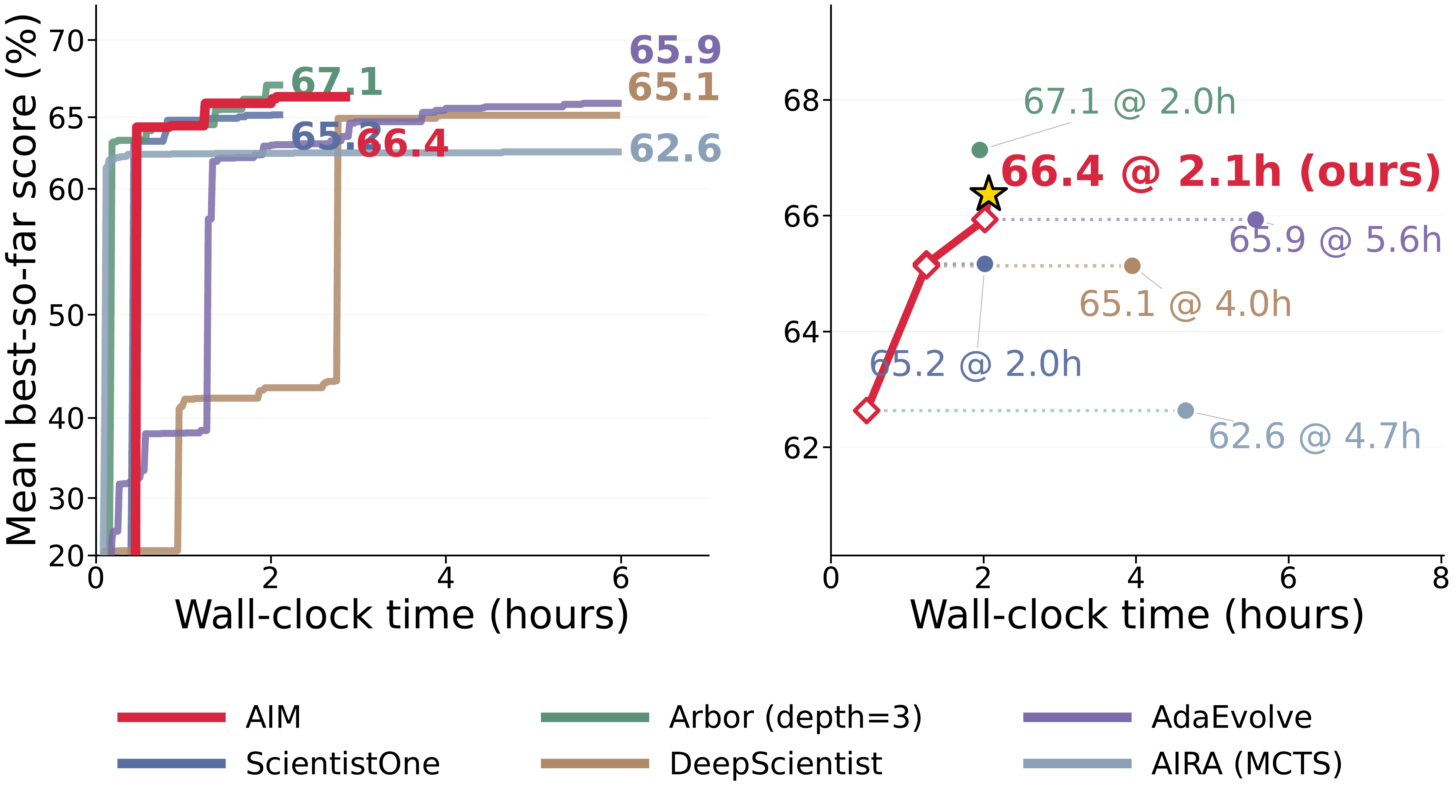}
        \caption{AES128 Ctr}
        \label{fig:plot4}
    \end{subfigure}

    \vspace{4mm}
    
    \begin{subfigure}[t]{0.49\textwidth}
        \centering
        \includegraphics[width=\linewidth]{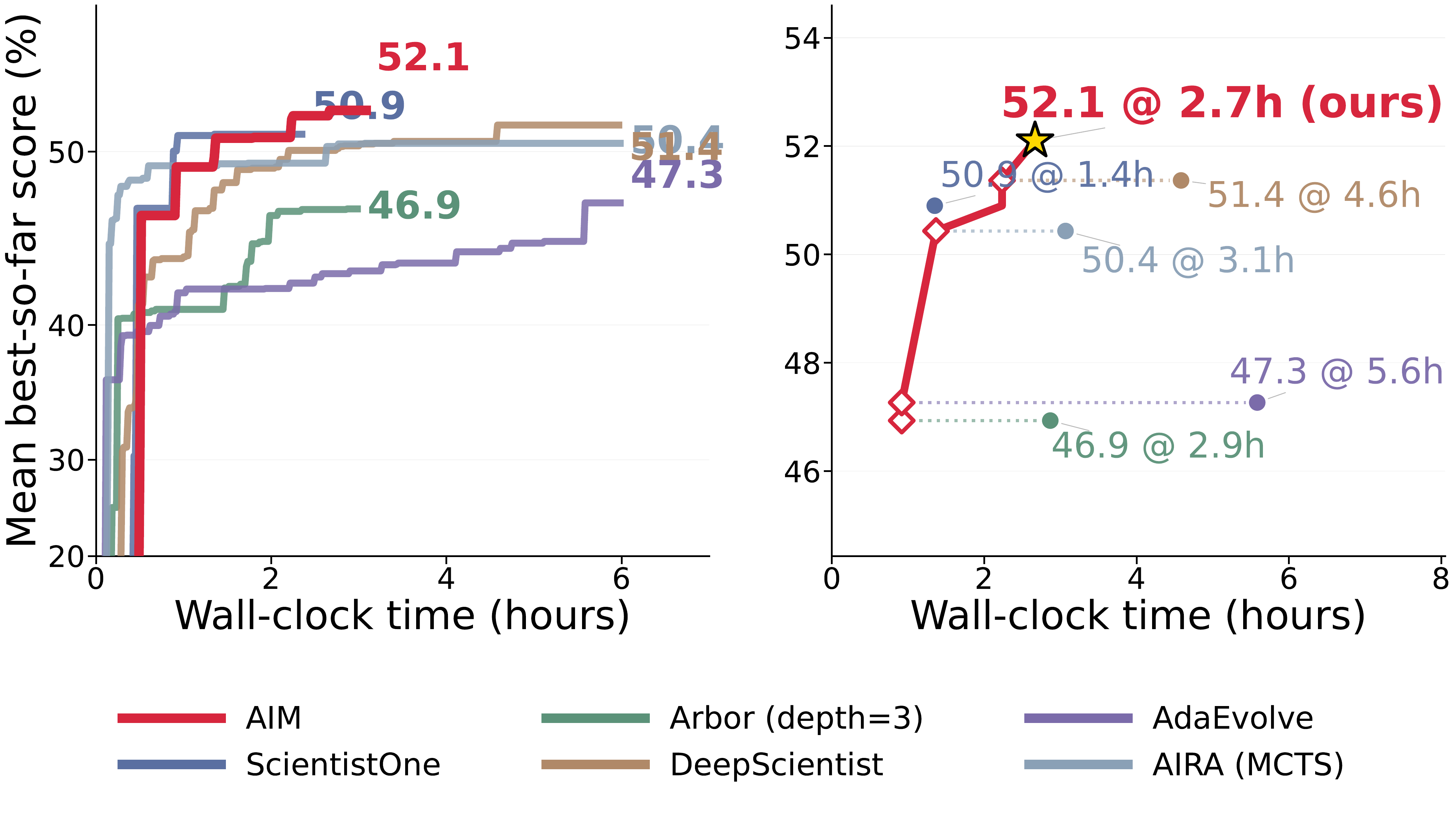}
        \caption{Z-order Range Scan}
        \label{fig:plot5}
    \end{subfigure}

    \caption{Time Efficiency Plots Across AutoLab Tasks.}
    \label{fig:time_to_score}
\end{figure*}
\begin{figure*}[h!]
    \centering

    \begin{subfigure}[t]{0.49\textwidth}
        \centering
        \includegraphics[width=\linewidth]{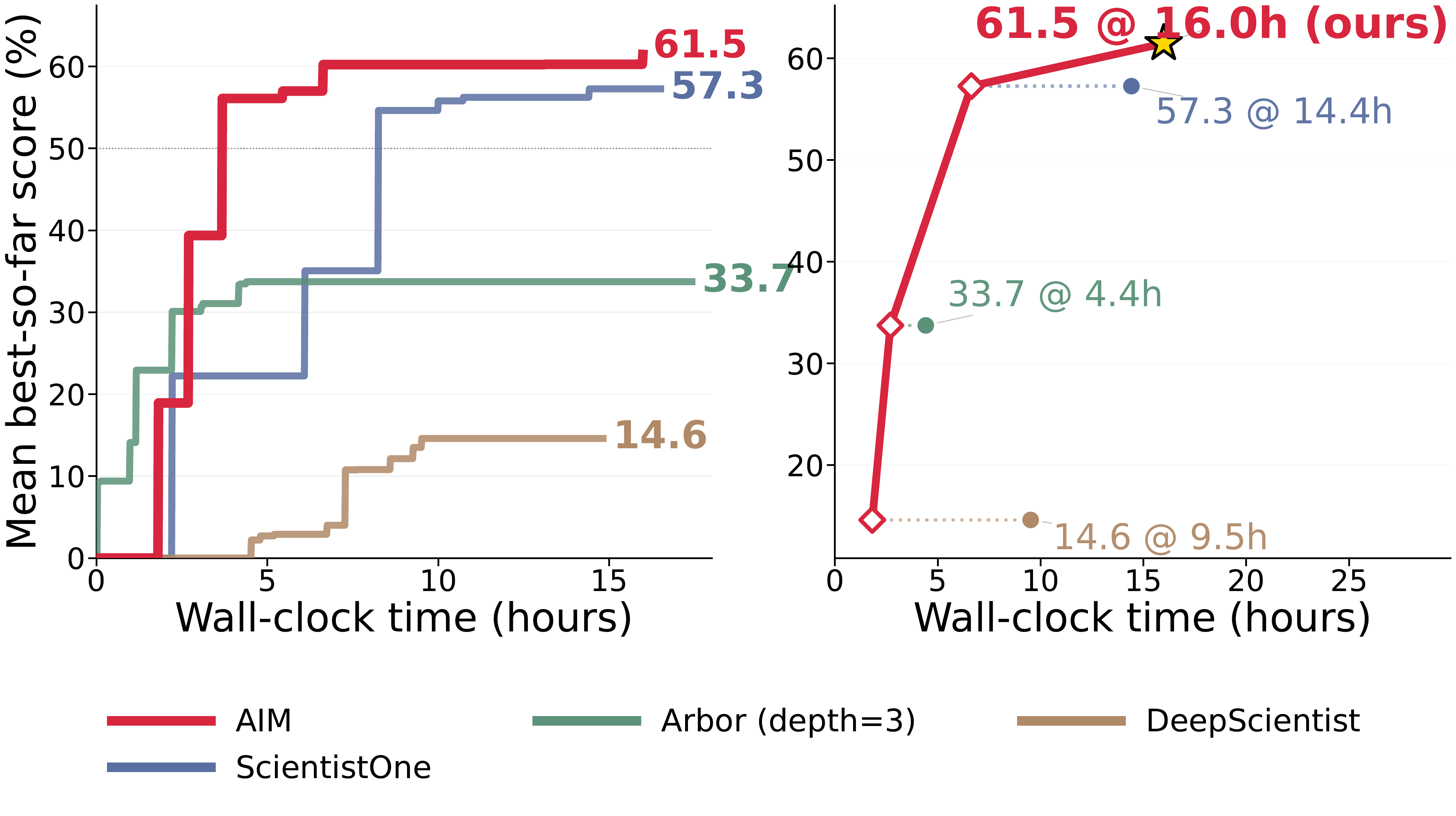}
        \caption{Moving Mnist World Model}
        \label{fig:plot2}
    \end{subfigure}
    \begin{subfigure}[t]{0.49\textwidth}
        \centering
        \includegraphics[width=\linewidth]{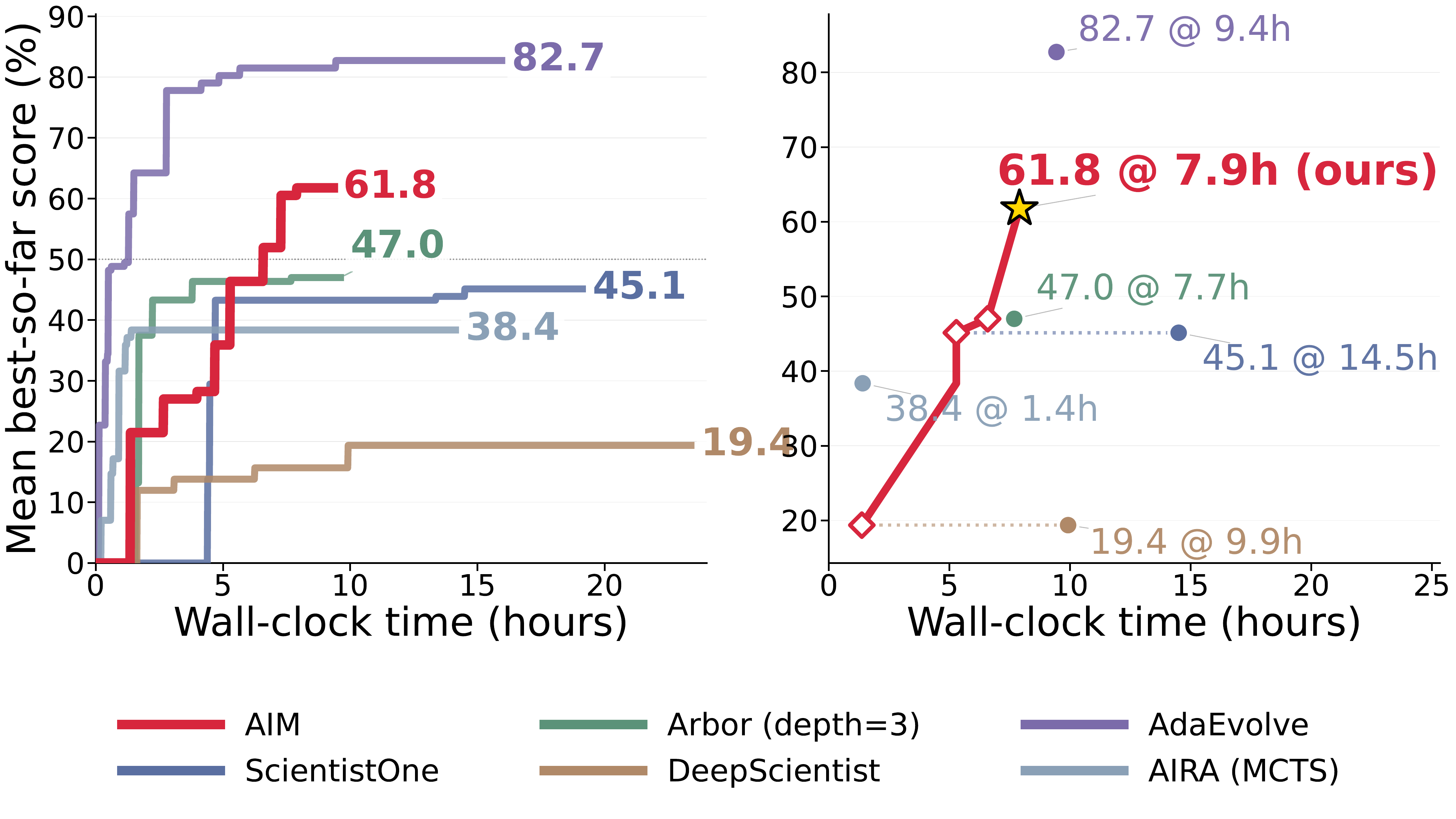}
        \caption{Data Select Ifeval}
        \label{fig:plot2}
    \end{subfigure}

    \vspace{4mm}
    
    \begin{subfigure}[t]{0.49\textwidth}
        \centering
        \includegraphics[width=\linewidth]{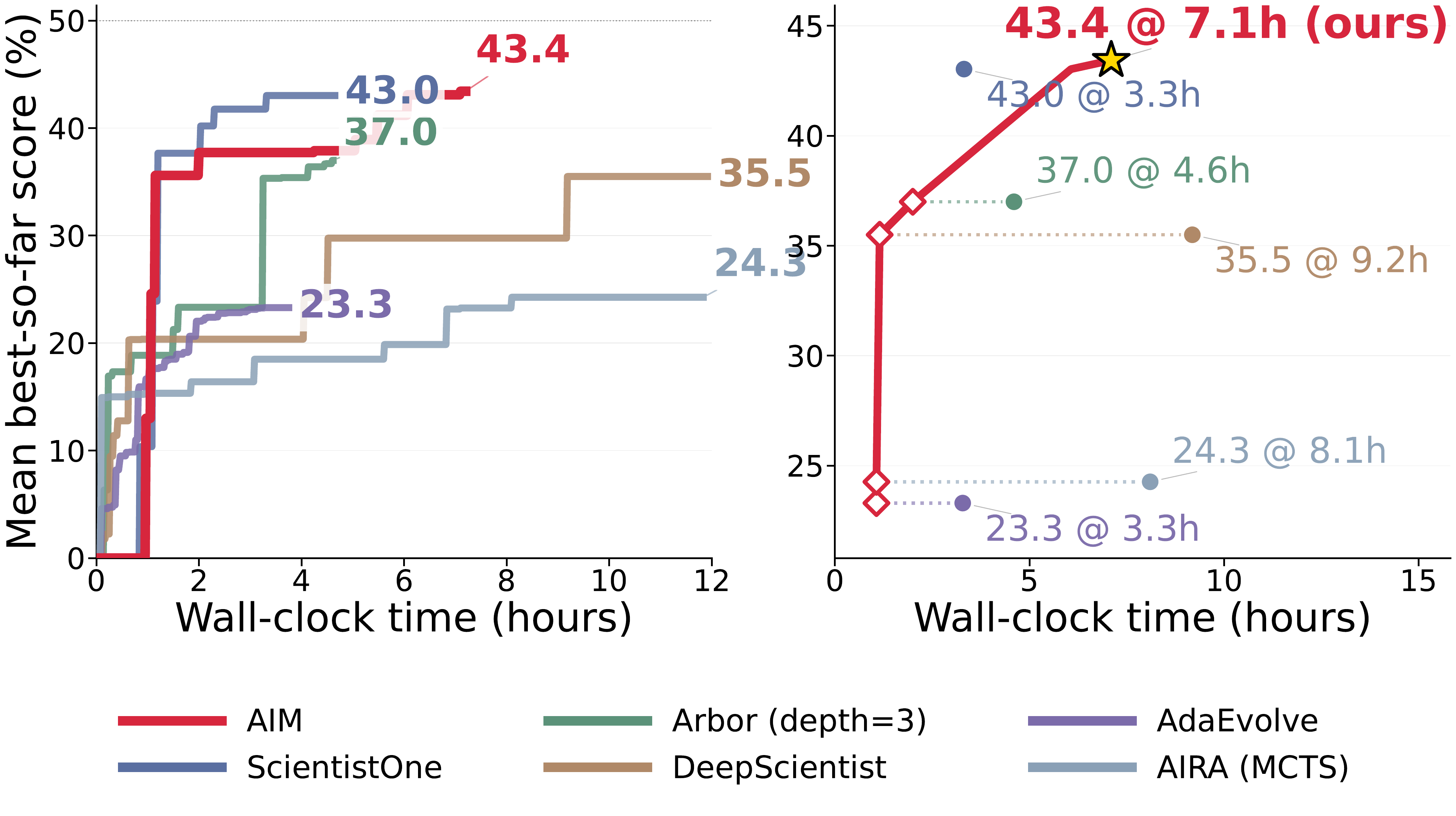}
        \caption{Huffman Canonical Decode}
        \label{fig:plot3}
    \end{subfigure}
    \hfill
    \begin{subfigure}[t]{0.49\textwidth}
        \centering
        \includegraphics[width=\linewidth]{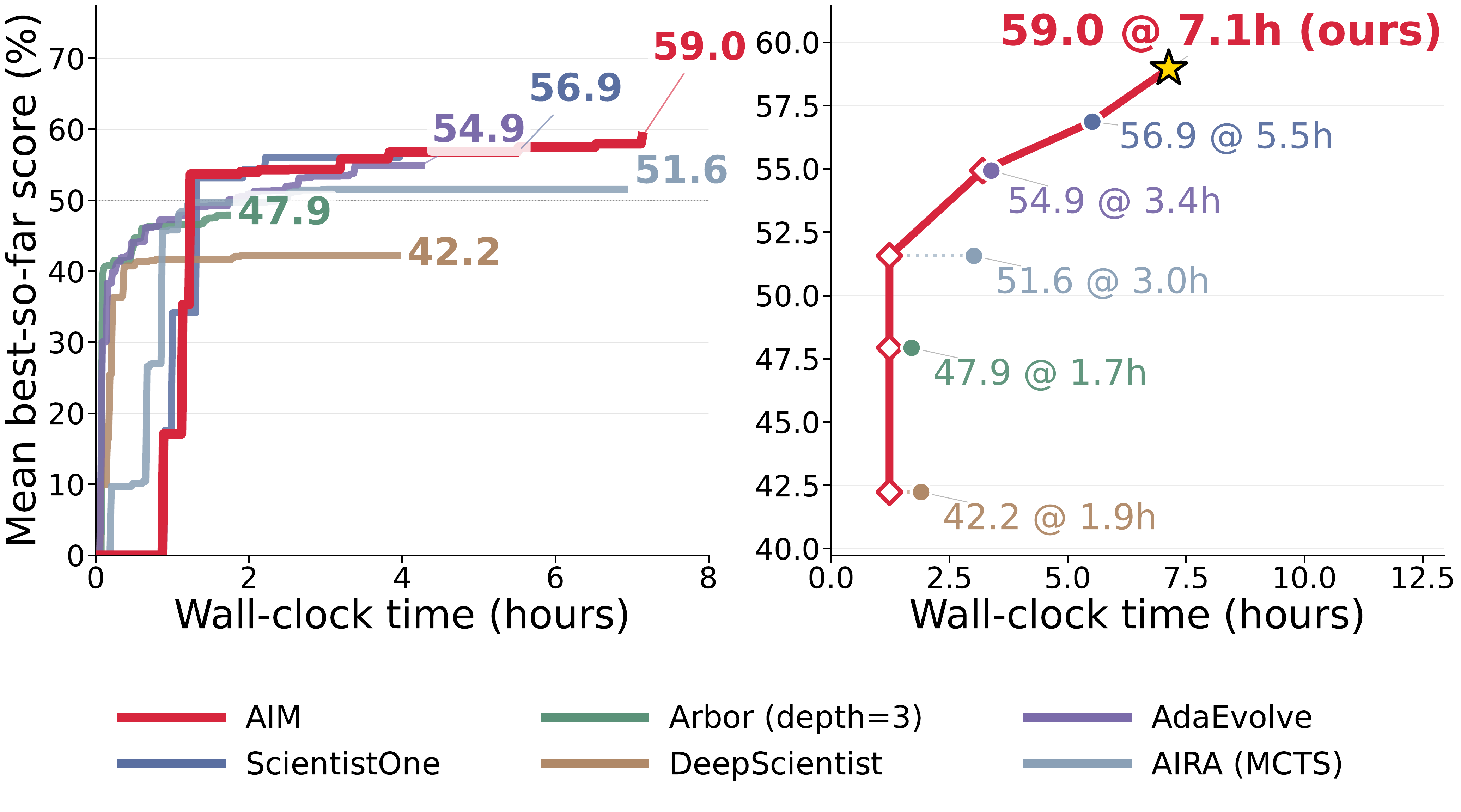}
        \caption{NTT Butterfly}
        \label{fig:plot4}
    \end{subfigure}

    \vspace{4mm}
    
    \begin{subfigure}[t]{0.49\textwidth}
        \centering
        \includegraphics[width=\linewidth]{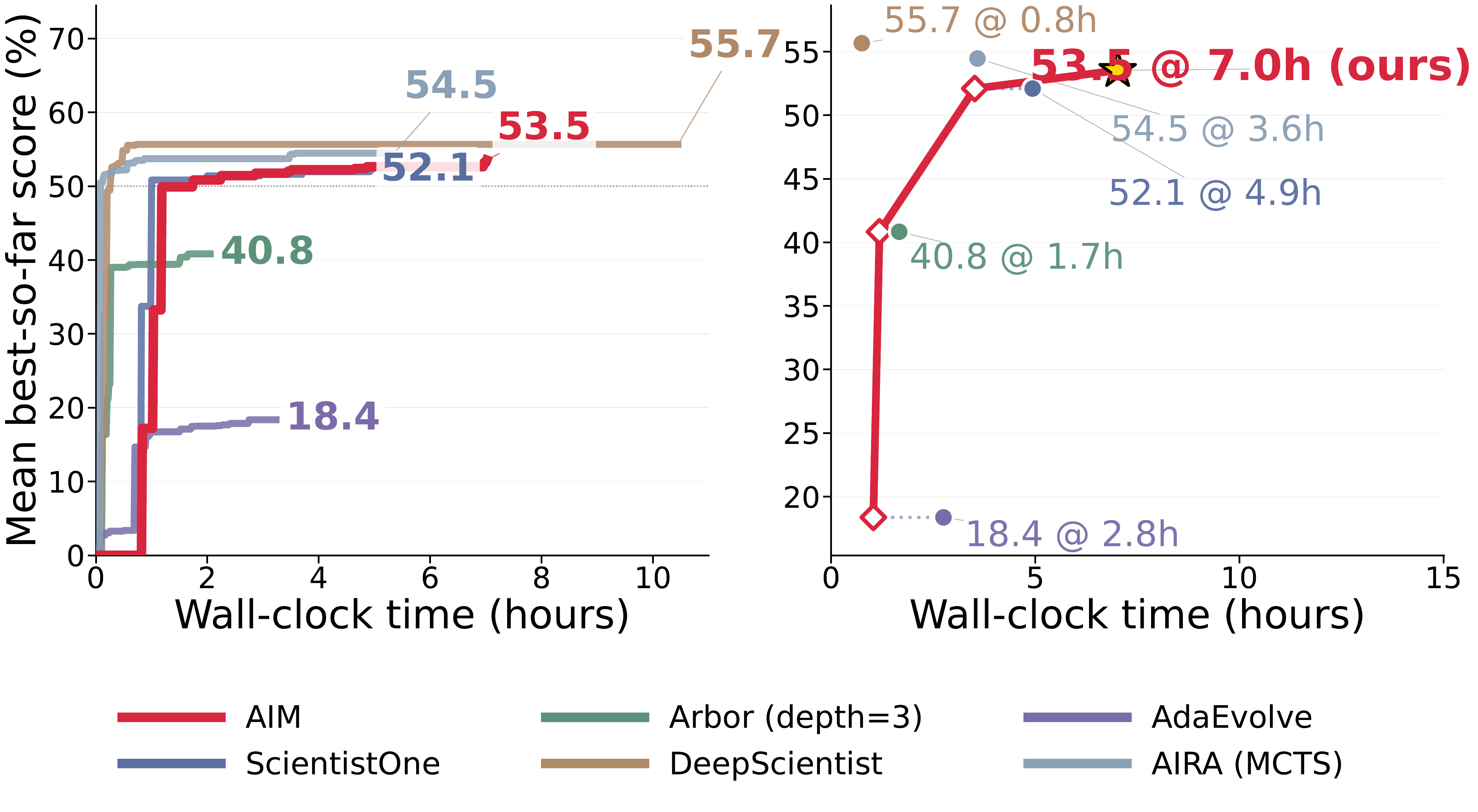}
        \caption{ICP Correspondence Step}
        \label{fig:plot5}
    \end{subfigure}

    \caption{Time Efficiency Plots Across AutoLab Model Development \& CUDA Tasks.}
    \label{fig:time_to_score_gpu}
\end{figure*}

\clearpage

In addition to the wall-clock time-to-score plots, we also provide the number of executions-to-score plots in Figure~\ref{fig:exec_to_score} and Figure~\ref{fig:exec_to_score_gpu}.
Note that we allowed five parallel workers for the baselines that enable parallel executions, and each task is budgeted to a maximum of 6 hours for System Optimization; 24 hours for Model Development; and 12 hours for CUDA tasks.
Overall, our approach tends to require more executions to reach its best score.
We conjecture this is due to AIM's tendency to first explore various directions in the first iterations.

\begin{figure*}[h!]
    \centering

    \begin{subfigure}[t]{0.49\textwidth}
        \centering
        \includegraphics[width=\linewidth]{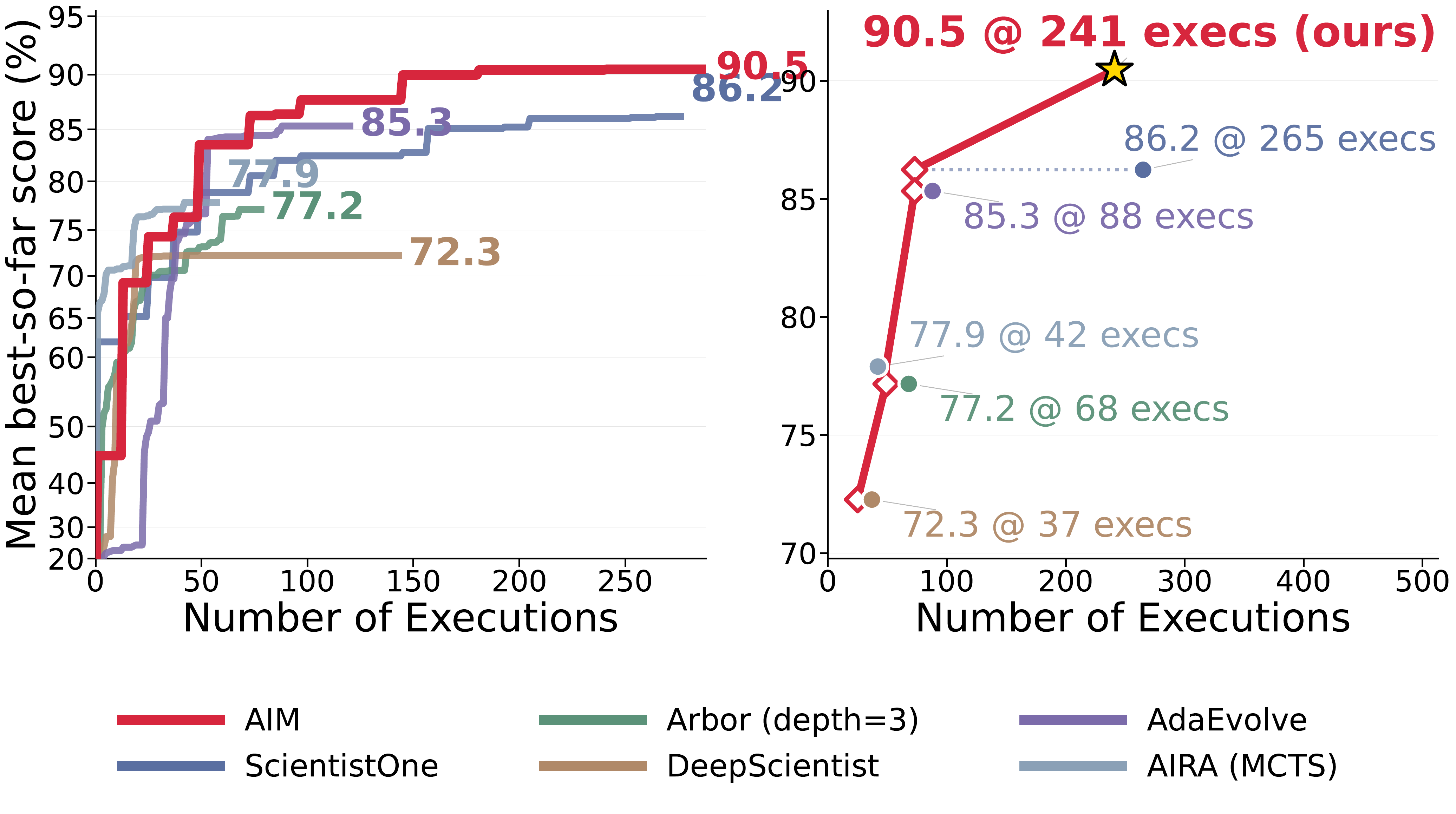}
        \caption{Flash Attention}
        \label{fig:plot2}
    \end{subfigure}
    \begin{subfigure}[t]{0.49\textwidth}
        \centering
        \includegraphics[width=\linewidth]{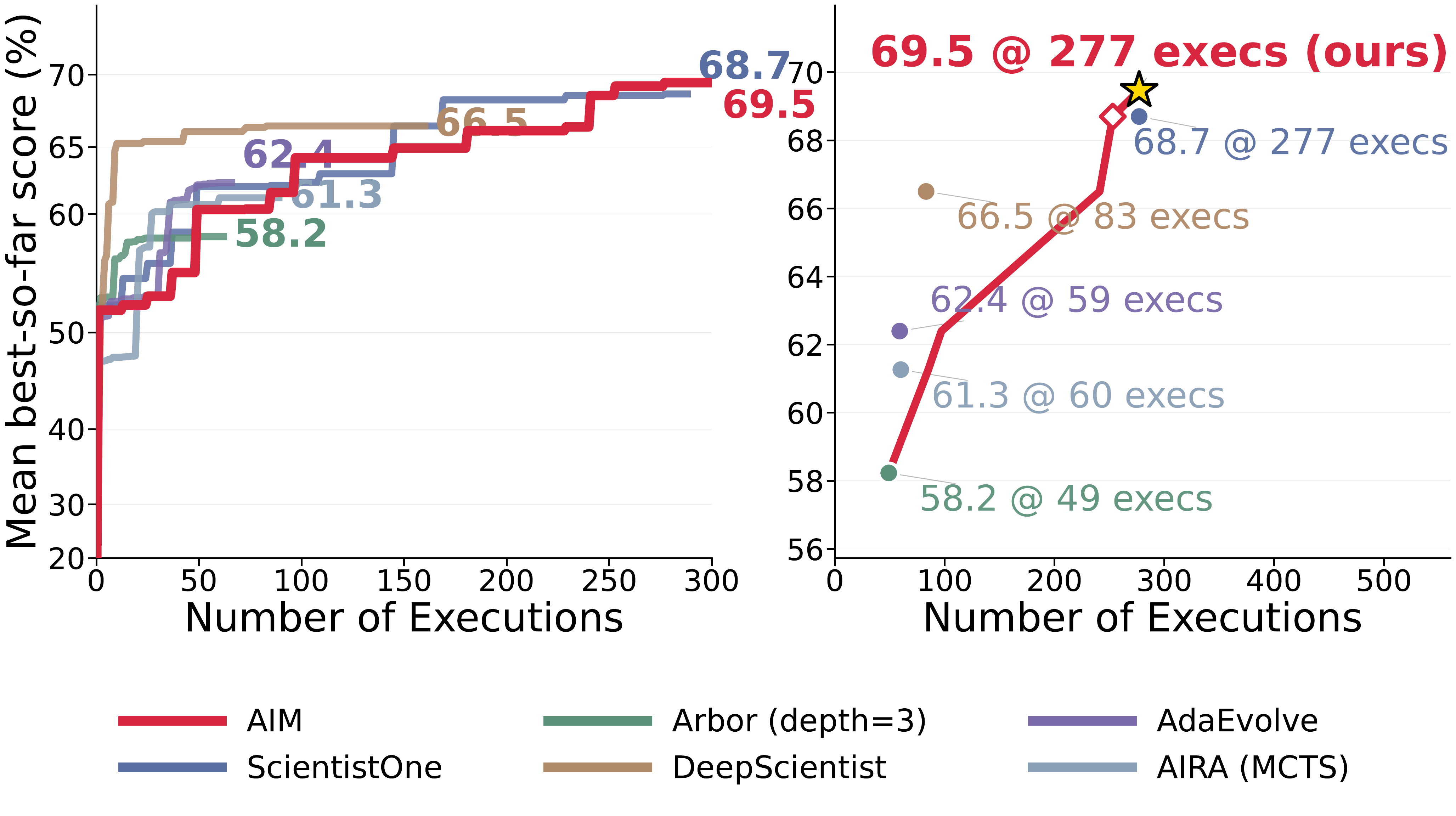}
        \caption{Radix Sort}
        \label{fig:plot2}
    \end{subfigure}

    \vspace{4mm}
    
    \begin{subfigure}[t]{0.49\textwidth}
        \centering
        \includegraphics[width=\linewidth]{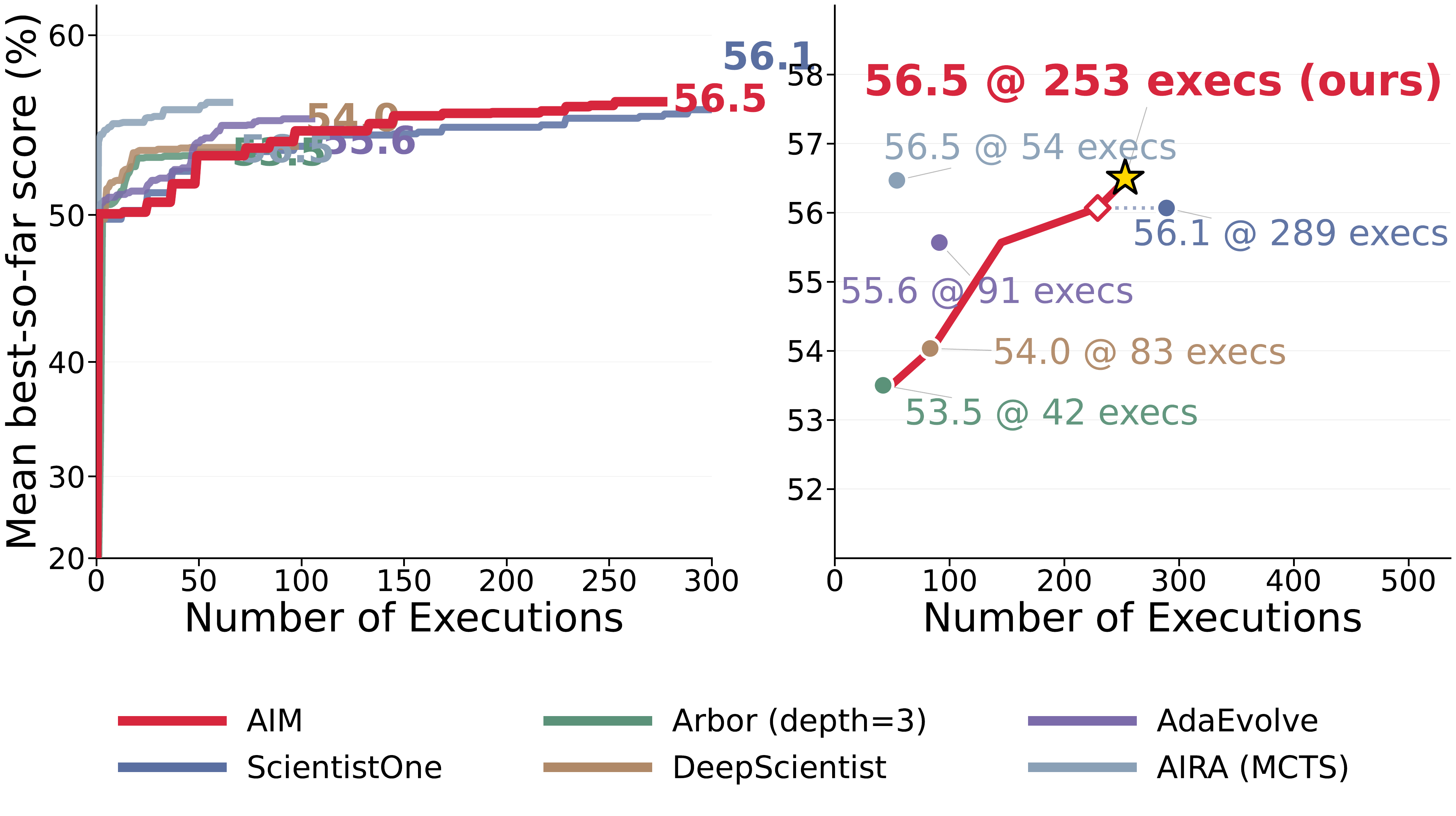}
        \caption{FFT Rust}
        \label{fig:plot3}
    \end{subfigure}
    \hfill
    \begin{subfigure}[t]{0.49\textwidth}
        \centering
        \includegraphics[width=\linewidth]{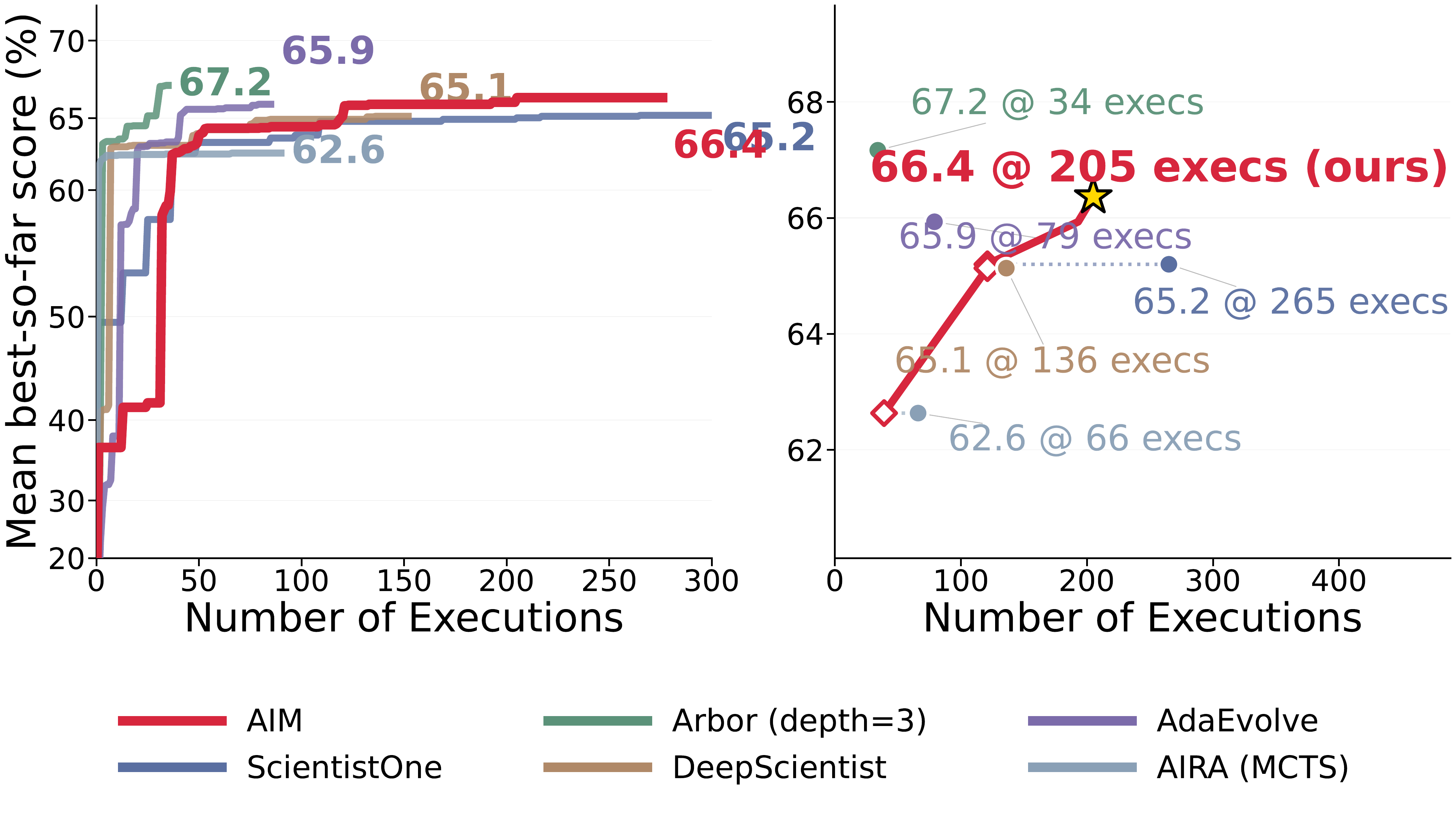}
        \caption{AES128 Ctr}
        \label{fig:plot4}
    \end{subfigure}

    \vspace{4mm}
    
    \begin{subfigure}[t]{0.49\textwidth}
        \centering
        \includegraphics[width=\linewidth]{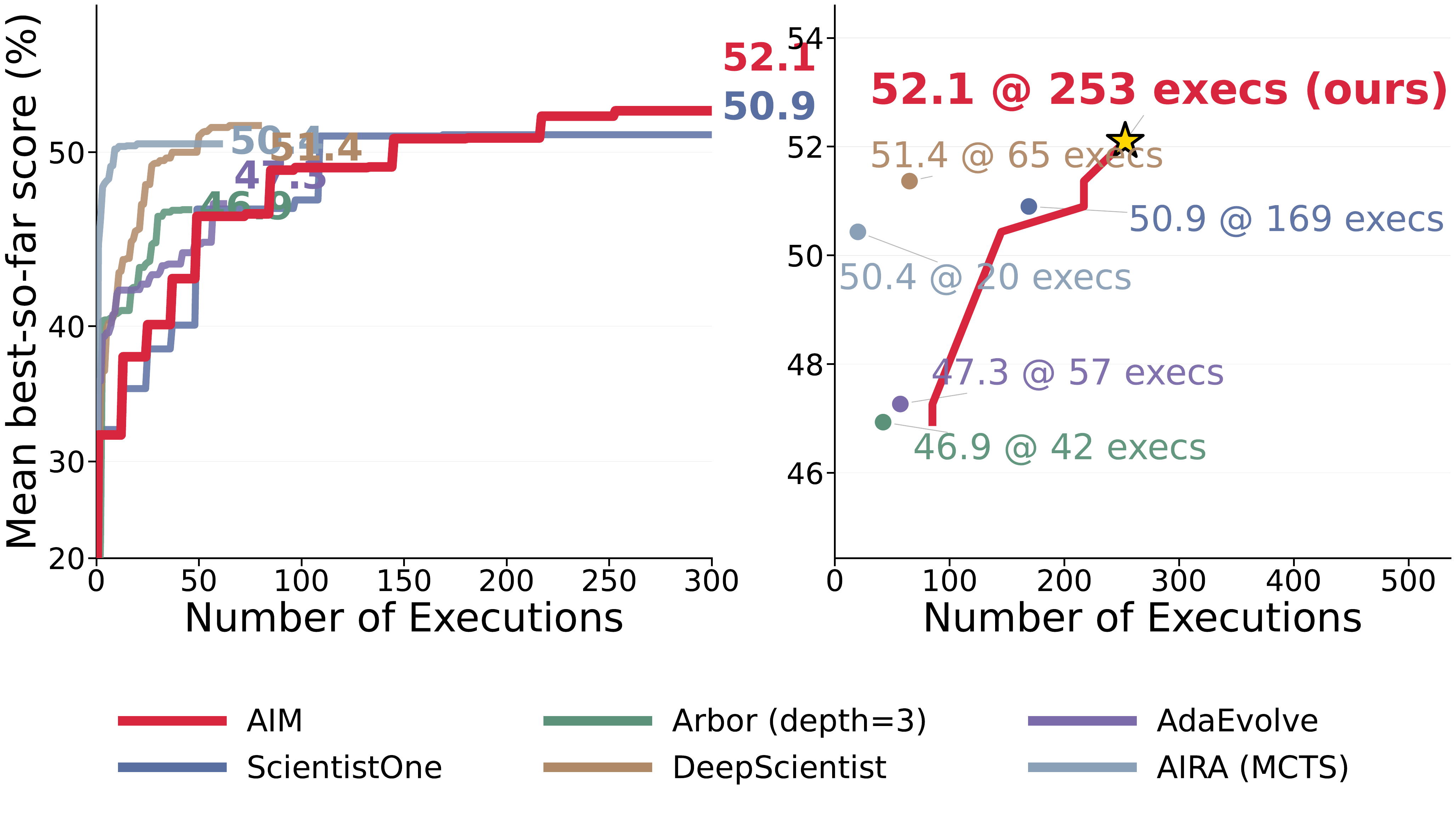}
        \caption{Z-order Range Scan}
        \label{fig:plot5}
    \end{subfigure}

    \caption{Execution-to-Score Plots Across AutoLab Tasks.}
    \label{fig:exec_to_score}
\end{figure*}

\clearpage
\begin{figure*}[h!]
    \centering

    \begin{subfigure}[t]{0.49\textwidth}
        \centering
        \includegraphics[width=\linewidth]{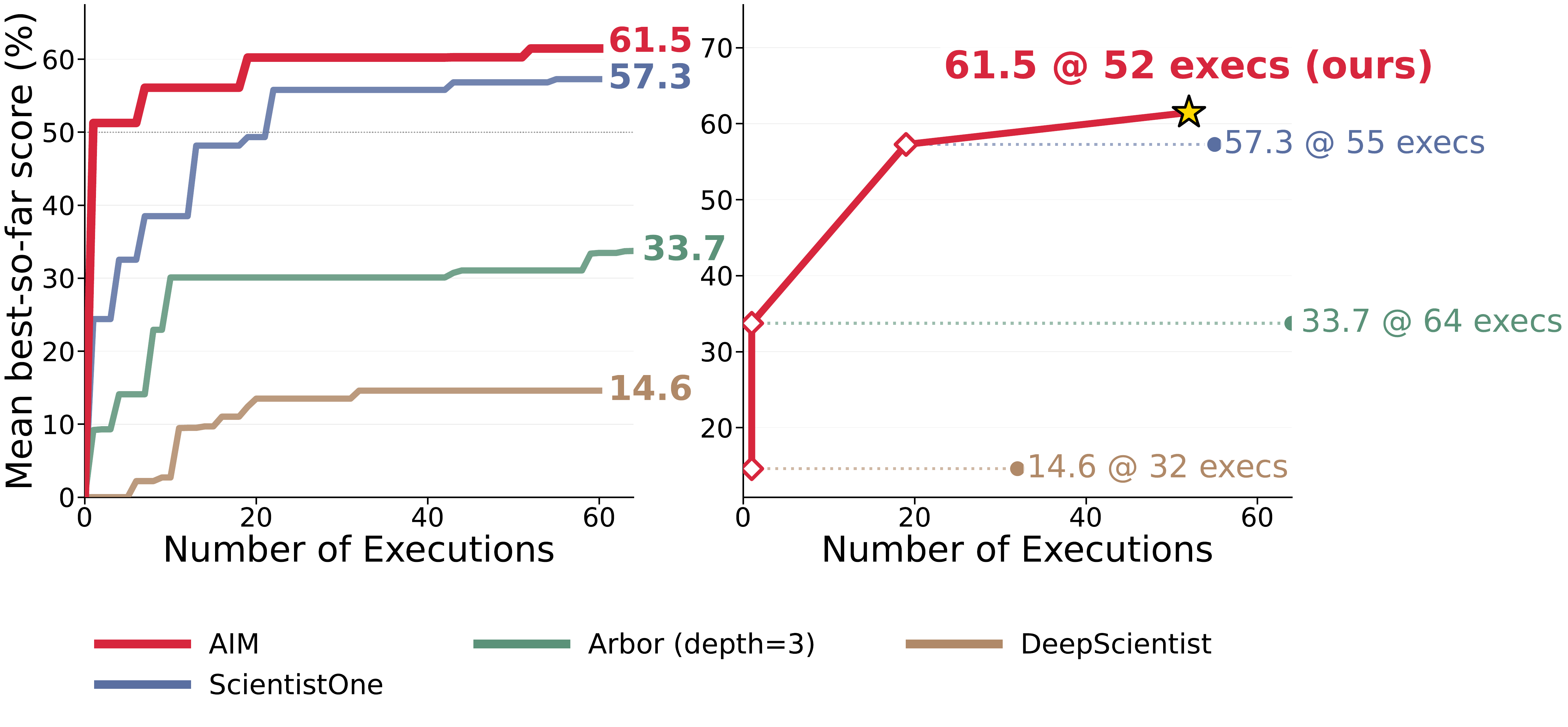}
        \caption{Moving Mnist World Model}
        \label{fig:plot2}
    \end{subfigure}
    \begin{subfigure}[t]{0.49\textwidth}
        \centering
        \includegraphics[width=\linewidth]{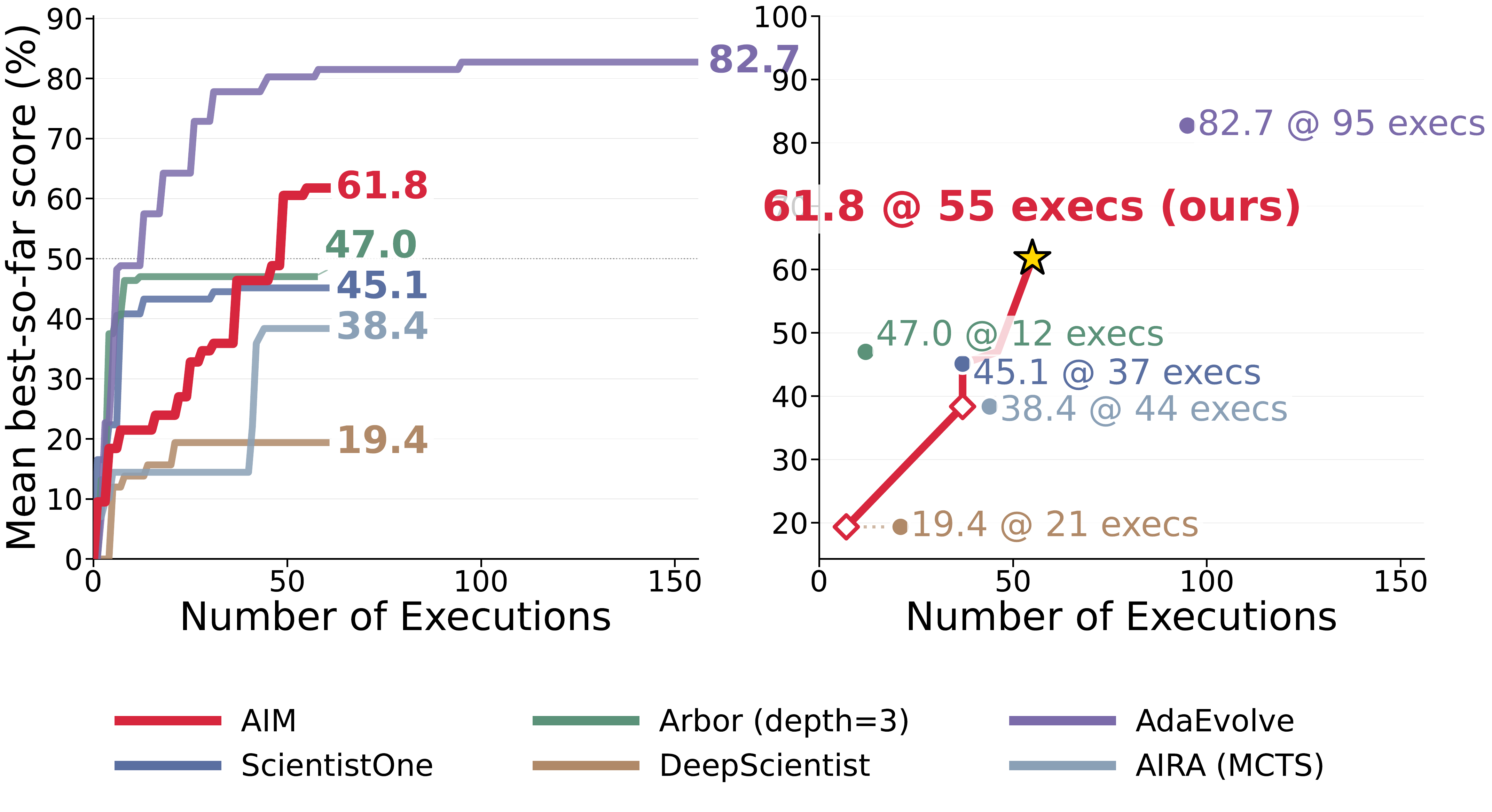}
        \caption{Data Select Ifeval}
        \label{fig:plot2}
    \end{subfigure}

    \vspace{4mm}
    
    \begin{subfigure}[t]{0.49\textwidth}
        \centering
        \includegraphics[width=\linewidth]{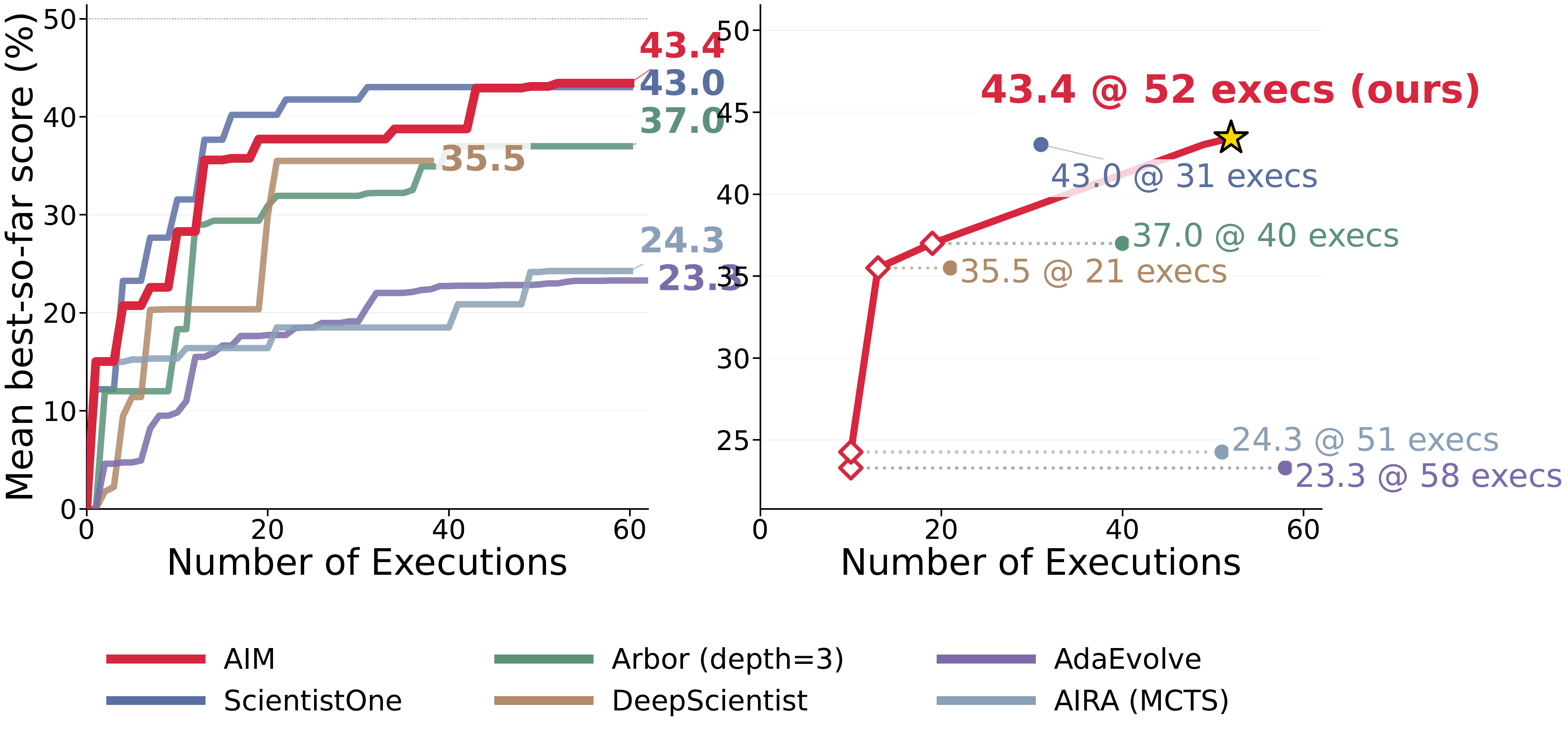}
        \caption{Huffman Canonical Decode}
        \label{fig:plot3}
    \end{subfigure}
    \hfill
    \begin{subfigure}[t]{0.49\textwidth}
        \centering
        \includegraphics[width=\linewidth]{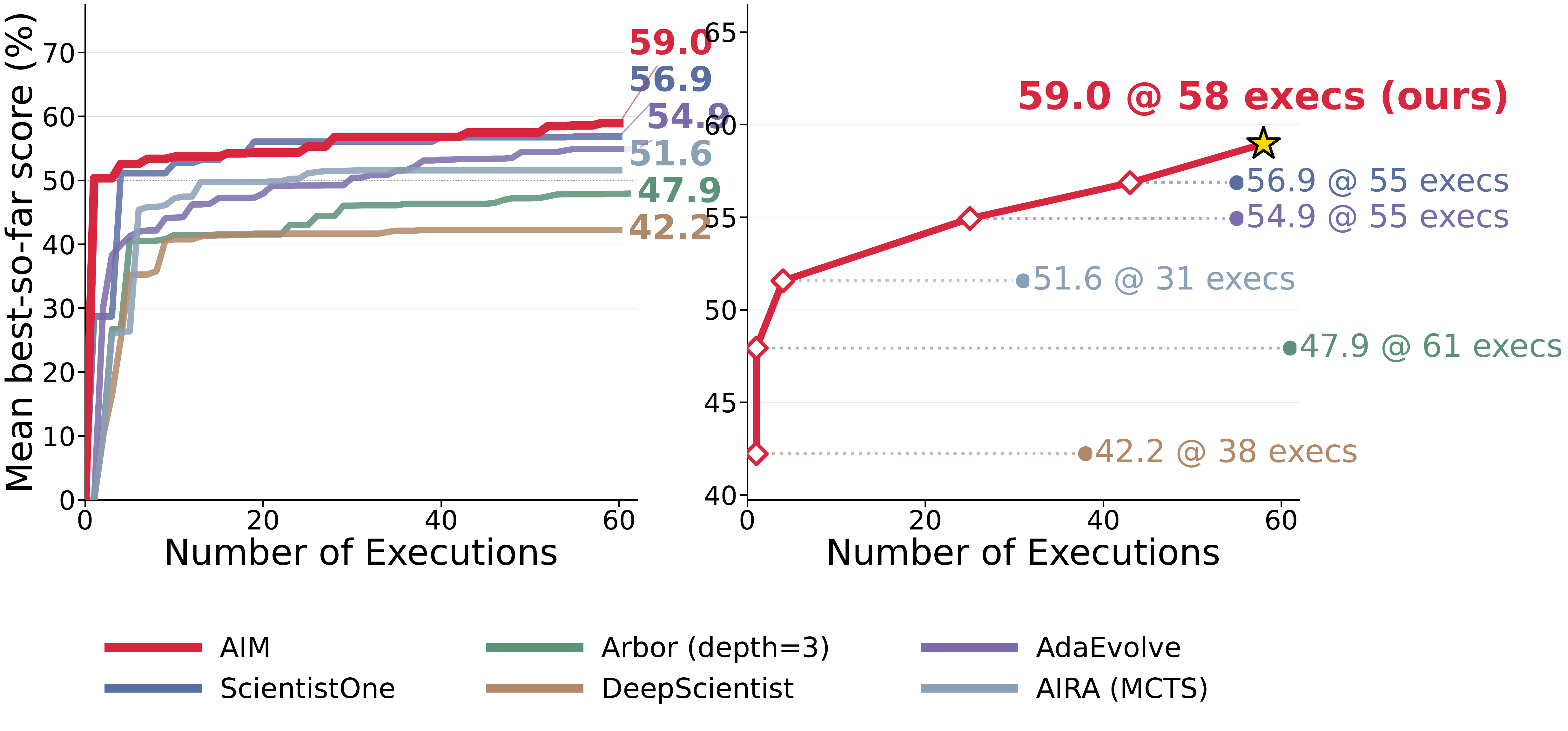}
        \caption{NTT Butterfly}
        \label{fig:plot4}
    \end{subfigure}

    \vspace{4mm}
    
    \begin{subfigure}[t]{0.49\textwidth}
        \centering
        \includegraphics[width=\linewidth]{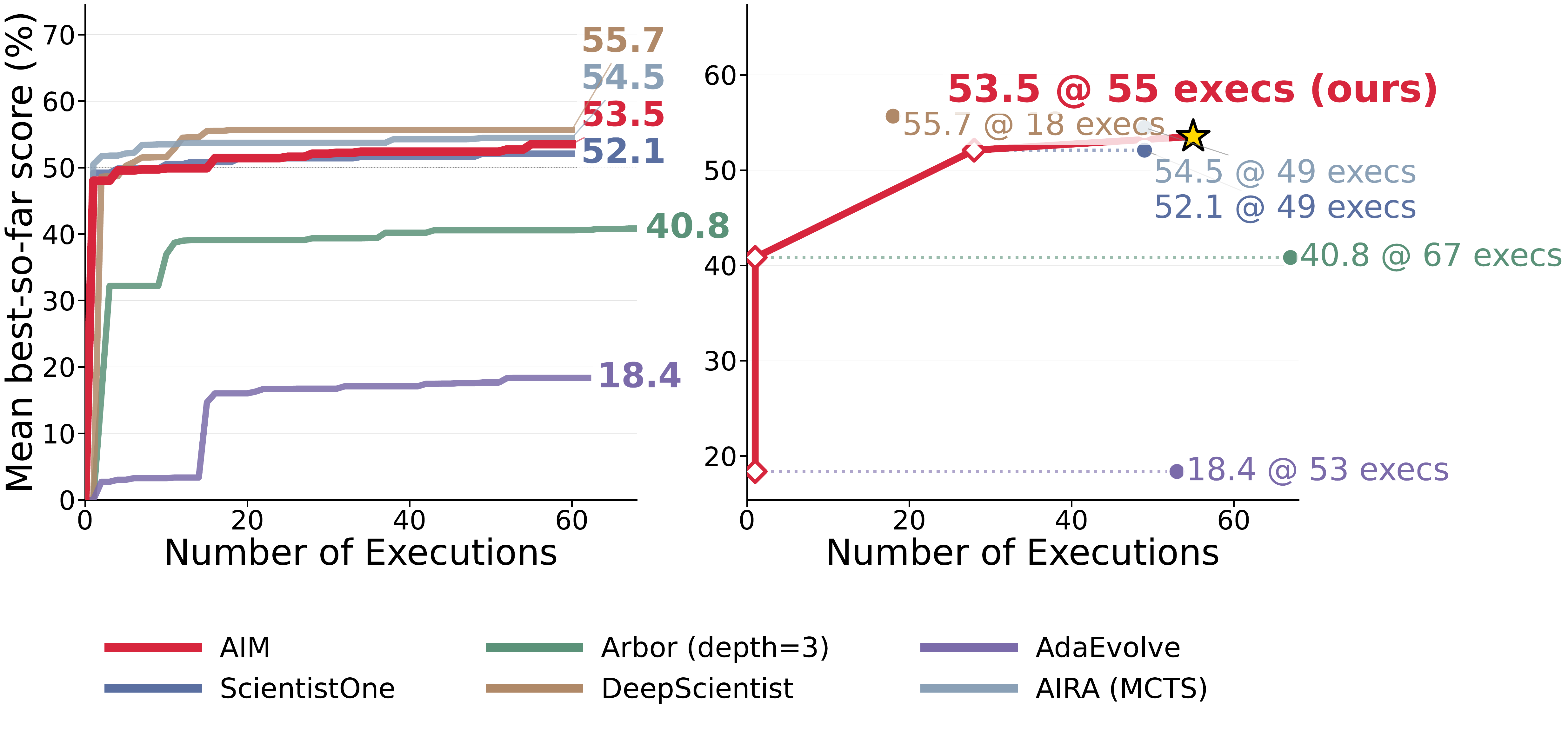}
        \caption{ICP Correspondence Step}
        \label{fig:plot5}
    \end{subfigure}

    \caption{Execution-to-Score Plots Across AutoLab Model Development \& CUDA Tasks.}
    \label{fig:exec_to_score_gpu}
\end{figure*}

\subsection{Token Cost Analysis}
Beyond wall-clock time and execution counts, we report token consumption (total prompt and generation tokens) across all LLM queries. 
Table~\ref{tab:token-cost} reports the token costs, measured by the number of LLM calls, input \& output tokens, along with the best score reached by each method.
While \textsc{AIM} generally requires a larger token budget, a great portion of it is from the large volume of context utilized in the method.
Also, considering the gains in scores, this demonstrates a trade-off between token cost and final performance.

In Figure~\ref{fig:token_cost}, we further demonstrate the score trend with respect to the token cost, comparing \textsc{AIM} with the strongest baseline, ScientistOne.
In 9 out of 10 tasks, \textsc{AIM} takes less tokens to reach the best score of ScientistOne, reducing the token cost up to 3.4$\times$ on the Data Select IFEval task.

\begin{table*}[h!]
\centering
\caption{LLM usage and cost per run on all ten AutoLab tasks (mean $\pm$ std over three runs). Output tokens include thinking token.}
\label{tab:token-cost}
\resizebox{\textwidth}{!}{%
\small\setlength{\tabcolsep}{5pt}%
\begin{tabular}{llccccc}
\toprule
Task & Metric & \textbf{AIM} (ours) & ScientistOne & Arbor & AIRA-MCTS & AIRA-EVO \\
\midrule
\multirow{4}{*}{Flash Attention} & \# LLM calls & $1740 \pm 86$ & $1550 \pm 40$ & $432 \pm 73$ & $121 \pm 19$ & $83 \pm 13$ \\
 & \# Input tokens ($\times10^{6}$) & $90.1 \pm 6.0$ & $70.3 \pm 8.8$ & $14.4 \pm 1.8$ & $0.9 \pm 0.2$ & $0.5 \pm 0.1$ \\
 & \# Output tokens ($\times10^{6}$) & $2.25 \pm 0.12$ & $1.77 \pm 0.15$ & $0.29 \pm 0.00$ & $3.17 \pm 0.06$ & $2.20 \pm 0.09$ \\
 & Best score (\%) & $\mathbf{90.5 \pm 0.8}$ & $86.2 \pm 3.1$ & $77.2 \pm 3.8$ & $77.9 \pm 0.2$ & $76.3 \pm 1.0$ \\
\midrule
\multirow{4}{*}{Radix Sort} & \# LLM calls & $1899 \pm 14$ & $1558 \pm 159$ & $286 \pm 64$ & $177 \pm 5$ & $178 \pm 0$ \\
 & \# Input tokens ($\times10^{6}$) & $121.1 \pm 6.6$ & $79.2 \pm 16.1$ & $7.3 \pm 3.3$ & $0.7 \pm 0.0$ & $0.6 \pm 0.0$ \\
 & \# Output tokens ($\times10^{6}$) & $1.34 \pm 0.05$ & $1.13 \pm 0.12$ & $0.16 \pm 0.05$ & $2.68 \pm 0.01$ & $2.23 \pm 0.05$ \\
 & Best score (\%) & $\mathbf{69.5 \pm 2.3}$ & $68.7 \pm 1.6$ & $58.2 \pm 4.2$ & $61.3 \pm 5.8$ & $66.7 \pm 2.5$ \\
\midrule
\multirow{4}{*}{AES128 Ctr} & \# LLM calls & $1844 \pm 70$ & $1762 \pm 95$ & $386 \pm 46$ & $165 \pm 33$ & $134 \pm 0$ \\
 & \# Input tokens ($\times10^{6}$) & $142.2 \pm 8.2$ & $123.5 \pm 9.6$ & $12.3 \pm 1.8$ & $1.0 \pm 0.2$ & $0.8 \pm 0.0$ \\
 & \# Output tokens ($\times10^{6}$) & $2.08 \pm 0.09$ & $1.78 \pm 0.25$ & $0.23 \pm 0.03$ & $2.91 \pm 0.16$ & $2.39 \pm 0.07$ \\
 & Best score (\%) & $66.4 \pm 0.4$ & $65.3 \pm 0.2$ & $\mathbf{67.2 \pm 0.1}$ & $62.6 \pm 0.3$ & $62.8 \pm 0.5$ \\
\midrule
\multirow{4}{*}{FFT Rust} & \# LLM calls & $1639 \pm 320$ & $1642 \pm 121$ & $365 \pm 50$ & $119 \pm 32$ & $90 \pm 0$ \\
 & \# Input tokens ($\times10^{6}$) & $70.6 \pm 29.7$ & $77.0 \pm 18.8$ & $8.2 \pm 1.8$ & $0.7 \pm 0.2$ & $0.5 \pm 0.0$ \\
 & \# Output tokens ($\times10^{6}$) & $1.64 \pm 0.02$ & $1.55 \pm 0.18$ & $0.22 \pm 0.04$ & $2.94 \pm 0.04$ & $2.26 \pm 0.27$ \\
 & Best score (\%) & $\mathbf{56.5 \pm 0.5}$ & $55.9 \pm 0.4$ & $53.5 \pm 1.1$ & $56.4 \pm 0.1$ & $56.2 \pm 0.4$ \\
\midrule
\multirow{4}{*}{Z-order Range Scan} & \# LLM calls & $1696 \pm 90$ & $1689 \pm 109$ & $275 \pm 99$ & $133 \pm 17$ & $97 \pm 25$ \\
 & \# Input tokens ($\times10^{6}$) & $103.1 \pm 9.1$ & $108.2 \pm 14.5$ & $7.4 \pm 1.9$ & $1.2 \pm 0.1$ & $0.8 \pm 0.2$ \\
 & \# Output tokens ($\times10^{6}$) & $1.90 \pm 0.03$ & $1.71 \pm 0.12$ & $0.18 \pm 0.04$ & $2.67 \pm 0.11$ & $2.15 \pm 0.18$ \\
 & Best score (\%) & $\mathbf{52.1 \pm 1.1}$ & $50.9 \pm 1.2$ & $46.9 \pm 2.1$ & $50.4 \pm 1.3$ & $51.1 \pm 0.8$ \\
\midrule
\multirow{4}{*}{MM World Model} & \# LLM calls & $4335 \pm 175$ & $3057 \pm 1880$ & $678 \pm 366$ & -- & -- \\
 & \# Input tokens ($\times10^{6}$) & $370.3 \pm 42.3$ & $281.7 \pm 195.3$ & $18.9 \pm 10.0$ & -- & -- \\
 & \# Output tokens ($\times10^{6}$) & $1.00 \pm 0.04$ & $0.60 \pm 0.36$ & $0.25 \pm 0.13$ & -- & -- \\
 & Best score (\%) & $\mathbf{61.5 \pm 0.9}$ & $57.3 \pm 6.3$ & $33.7 \pm 11.9$ & -- & -- \\
\midrule
\multirow{4}{*}{Data Select IFEval} & \# LLM calls & $6222 \pm 389$ & $12784 \pm 57$ & $447 \pm 321$ & $120 \pm 0$ & -- \\
 & \# Input tokens ($\times10^{6}$) & $631.6 \pm 126.2$ & $1524.0 \pm 18.8$ & $9.3 \pm 7.9$ & $0.4 \pm 0.1$ & -- \\
 & \# Output tokens ($\times10^{6}$) & $1.38 \pm 0.20$ & $2.74 \pm 0.06$ & $0.17 \pm 0.10$ & $0.81 \pm 0.51$ & -- \\
 & Best score (\%) & $\mathbf{61.8 \pm 3.2}$ & $45.1 \pm 1.1$ & $47.0 \pm 16.1$ & $38.4 \pm 5.9$ & -- \\
\midrule
\multirow{4}{*}{Huffman Decode} & \# LLM calls & $4102 \pm 261$ & $3906 \pm 88$ & $711 \pm 196$ & $119 \pm 2$ & -- \\
 & \# Input tokens ($\times10^{6}$) & $360.3 \pm 30.1$ & $339.8 \pm 9.9$ & $28.0 \pm 9.1$ & $0.7 \pm 0.0$ & -- \\
 & \# Output tokens ($\times10^{6}$) & $2.83 \pm 0.24$ & $2.60 \pm 0.16$ & $0.61 \pm 0.12$ & $4.04 \pm 0.69$ & -- \\
 & Best score (\%) & $\mathbf{43.4 \pm 1.3}$ & $43.0 \pm 2.6$ & $37.0 \pm 2.3$ & $24.3 \pm 10.8$ & -- \\
\midrule
\multirow{4}{*}{NTT Butterfly} & \# LLM calls & $4033 \pm 141$ & $3905 \pm 141$ & $361 \pm 34$ & $120 \pm 0$ & -- \\
 & \# Input tokens ($\times10^{6}$) & $386.0 \pm 17.1$ & $390.5 \pm 6.8$ & $10.6 \pm 1.8$ & $0.8 \pm 0.0$ & -- \\
 & \# Output tokens ($\times10^{6}$) & $2.87 \pm 0.25$ & $2.76 \pm 0.29$ & $0.27 \pm 0.08$ & $2.86 \pm 0.56$ & -- \\
 & Best score (\%) & $\mathbf{59.0 \pm 1.3}$ & $56.9 \pm 1.3$ & $47.9 \pm 5.0$ & $51.6 \pm 6.8$ & -- \\
\midrule
\multirow{4}{*}{ICP Corr.\ Step} & \# LLM calls & $4298 \pm 209$ & $3931 \pm 161$ & $423 \pm 71$ & $120 \pm 0$ & -- \\
 & \# Input tokens ($\times10^{6}$) & $371.8 \pm 16.9$ & $351.2 \pm 7.3$ & $11.5 \pm 1.1$ & $0.9 \pm 0.0$ & -- \\
 & \# Output tokens ($\times10^{6}$) & $2.51 \pm 0.22$ & $2.15 \pm 0.18$ & $0.24 \pm 0.02$ & $2.37 \pm 0.24$ & -- \\
 & Best score (\%) & $53.5 \pm 0.5$ & $52.1 \pm 1.0$ & $40.8 \pm 10.2$ & $\mathbf{54.5 \pm 0.9}$ & -- \\
\bottomrule
\end{tabular}%
}
\end{table*}

\begin{figure}[h!]
    \centering
    \includegraphics[width=\linewidth]{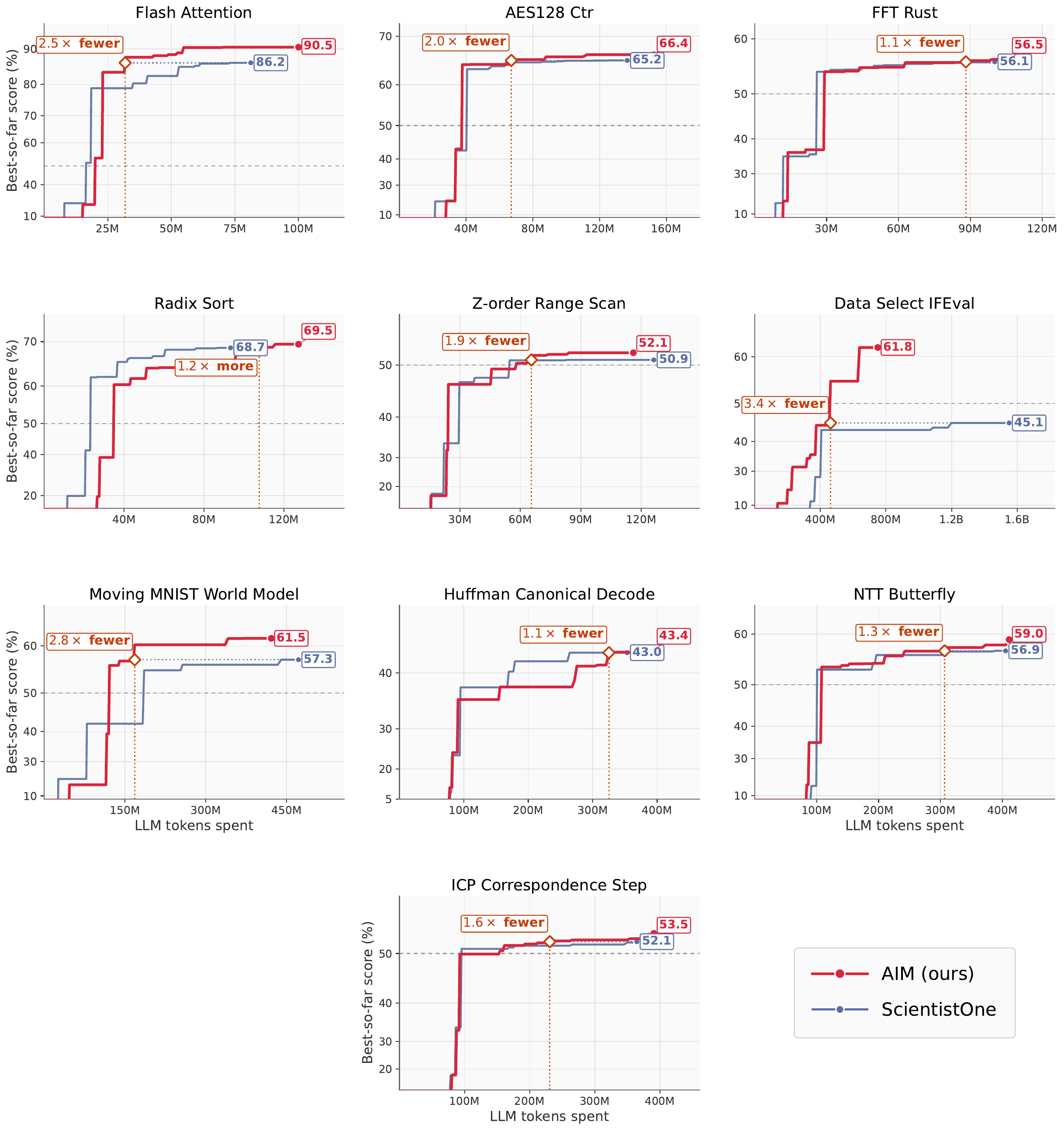}
    \caption{Token Cost Comparison between \textsc{AIM} and ScientistOne.}
    \label{fig:token_cost}
\end{figure}

\clearpage
\section{Further Analyses}
\label{apdx:experiments}

\subsection{More Qualitative Examples}
\label{apdx:morequal}


\begin{tcolorbox}[promptbox={Example 1 -- Flash Attention, iteration 1: cold start, nothing evaluated yet}]
\footnotesize
\noindent\textbf{(1) Clusters and rank estimates} \quad {\scriptsize pool = 10 ideas}

\smallskip
\setlength{\tabcolsep}{3pt}\renewcommand{\arraystretch}{1.05}
\begin{tabular}{@{}c c l c c c@{}}
\toprule
Rank & Cl. & Label & \#ideas & \#evaluated & best score\\
\midrule
\textbf{1} & [1] & Memory, Tiling \& Data Layout & 4 & 0 & -- \\
\textbf{2} & [2] & Advanced SIMD \& AVX-512 & 3 & 0 & -- \\
\textbf{3} & [0] & Exponentiation \& Math Optimization & 3 & 0 & -- \\
\bottomrule
\end{tabular}

\smallskip
\noindent\textit{``The primary bottleneck is memory bandwidth due to the $O(n^2)$ intermediate matrix; Tiling (Cluster 1) is mandatory [...]. SIMD (Cluster 2) handles the inner-loop dot products and is the next priority, while scalar math optimizations (Cluster 0) provide smaller marginal gains.''}

\tcbline

\noindent\textbf{(2) Actions and idea selection}

\smallskip
\noindent\textbf{b0} $\cdot$ cluster [1] \colorbox{teal!15}{\tiny\textsc{explore}} $\cdot$ idea \colorbox{orange!20}{\tiny\textsc{exploit}} $\rightarrow$ \textbf{Asymmetric Register-Only Tiling} {\tiny\texttt{rank 1/4}}\\
\textit{``We pick one of the tied top-ranked ideas: pinning Q and the accumulators in AVX registers while streaming K and V directly targets the fundamental memory-bandwidth bottleneck for $d{=}64$.''}

\smallskip
\noindent\textbf{b3} $\cdot$ cluster [2] \colorbox{teal!15}{\tiny\textsc{explore}} $\cdot$ idea \colorbox{teal!15}{\tiny\textsc{explore}} $\rightarrow$ \textbf{AVX-512 Micro-Kernel Dispatch} {\tiny\texttt{rank 1/3}}\\
\textit{``Although the prior step assigned an explore action, the strict Iter-1 constraint requires exploiting the highest-ranked predictions since no empirical scores exist yet.''}

\smallskip
\noindent\textbf{Outcome:} \textbf{b0} 0.764 (best) $\cdot$ \textbf{b1} 0.720 $\cdot$ \textbf{b2} 0.753 $\cdot$ \textbf{b3} 0.741 $\cdot$ \textbf{b4} 0.730
\end{tcolorbox}

\begin{tcolorbox}[promptbox={Example 2 -- Flash Attention, iteration 2: scores and lineage drive the choice}]
\footnotesize
\noindent\textbf{(1) Clusters and rank estimates} \quad {\scriptsize pool = 22 ideas}

\smallskip
\setlength{\tabcolsep}{3pt}\renewcommand{\arraystretch}{1.05}
\begin{tabular}{@{}c c l c c c@{}}
\toprule
Rank & Cl. & Label & \#ideas & \#evaluated & best score\\
\midrule
\textbf{1} & [2] & Memory Layout \& Cache Prefetching & 6 & 1 & \textbf{0.764} \\
\textbf{1} & [3] & AVX-512 \& Wide Micro-Kernels & 6 & 2 & \textbf{0.764} \\
\textbf{2} & [0] & Fast Math \& Exp Approximations & 5 & 2 & 0.753 \\
\textbf{2} & [4] & Register Unrolling \& ILP & 5 & 1 & 0.720 \\
\textbf{3} & [1] & Dynamic Sparsity \& Early Exit & 4 & 0 & -- \\
\bottomrule
\end{tabular}

\smallskip
\noindent\textit{``Clusters 2 and 3 share the highest evaluation scores (0.7639) [...]. Cluster 1 is unproven and ranked last, as dynamic sparsity usually introduces branching that undermines dense CPU SIMD performance.''}

\tcbline

\noindent\textbf{(2) Actions and idea selection}

\smallskip
\noindent\textbf{b0} $\cdot$ cluster [3] \colorbox{orange!20}{\tiny\textsc{exploit}} $\cdot$ idea \colorbox{orange!20}{\tiny\textsc{exploit}} $\rightarrow$ \textbf{Fully Unrolled AVX-512 Dot Products} {\tiny\texttt{rank 2/4}}\\
\textit{``We select this idea despite it being rank 2/4. Its parent achieved the highest overall score by introducing block tiling; this child fully unrolls the $D{=}64$ dot products into $4\times$ AVX-512 FMAs, applying a proven `simd' lesson from a sibling.''}

\smallskip
\noindent\textbf{b3} $\cdot$ cluster [1] \colorbox{teal!15}{\tiny\textsc{explore}} $\cdot$ idea \colorbox{teal!15}{\tiny\textsc{explore}} $\rightarrow$ \textbf{Dynamic-Threshold Sparsity} {\tiny\texttt{rank 3/4}}\\
\textit{``For an explore action we avoid the most promising member (rank 1/4) and pick a lower-confidence idea, validating the core premise of data-dependent sparsity before investing in exact underflow-bound checks.''}

\smallskip
\noindent\textbf{Outcome:} \textbf{b0} 0.846 $\cdot$ \textbf{b1} 0.850 $\cdot$ \textbf{b2} 0.889 (new best) $\cdot$ \textbf{b3} 0.450
\end{tcolorbox}

\clearpage
\begin{tcolorbox}[promptbox={Example 3 -- NTT Butterfly, iteration 3: exploit two proven clusters, probe one untouched}]
\footnotesize
\noindent\textbf{(1) Clusters and rank estimates} \quad {\scriptsize pool = 31 ideas}

\smallskip
\setlength{\tabcolsep}{3pt}\renewcommand{\arraystretch}{1.05}
\begin{tabular}{@{}c c l c c c@{}}
\toprule
Rank & Cl. & Label & \#ideas & \#evaluated & best score\\
\midrule
\textbf{1} & [0] & Global Topology \& Matrix Restructuring & 10 & 4 & \textbf{0.557} \\
\textbf{2} & [1] & Field Arithmetic \& PTX & 11 & 2 & 0.547 \\
\textbf{3} & [3] & Inner-Block Memory \& Warp Shuffles & 6 & 1 & 0.538 \\
\textbf{4} & [2] & Twiddle Factor Management & 4 & 0 & -- \\
\bottomrule
\end{tabular}

\smallskip
\noindent\textit{``Cluster 0 is ranked first because the distilled lessons strongly emphasize 2D matrix factorization and Bailey's 6-step as the critical bottleneck [...]. Cluster 2 has no evaluated signal yet and is generally a secondary optimization.''}

\tcbline

\noindent\textbf{(2) Actions and idea selection}

\smallskip
\noindent\textbf{b1} $\cdot$ cluster [1] \colorbox{orange!20}{\tiny\textsc{exploit}} $\cdot$ idea \colorbox{orange!20}{\tiny\textsc{exploit}} $\rightarrow$ \textbf{PTX Arithmetic Fusion} {\tiny\texttt{rank 1/9}}\\
\textit{``A rank-1 idea that combines the best arithmetic optimization (PTX lazy reduction) with the highest-scoring memory topology from the entire run (coalesced 2D transpose, 0.557).''}

\smallskip
\noindent\textbf{b2} $\cdot$ cluster [2] \colorbox{teal!15}{\tiny\textsc{explore}} $\cdot$ idea \colorbox{teal!15}{\tiny\textsc{explore}} $\rightarrow$ \textbf{Twiddle Cache in Constant/Texture Memory} {\tiny\texttt{rank 3/4}}\\
\textit{``This idea has the highest within-cluster rank number (3/4, lower confidence from the ranker). Testing constant/texture caching provides orthogonal information gain compared to compute-based regeneration or workspace precomputation.''}

\smallskip
\noindent\textbf{Outcome:} \textbf{b0} 0.557 $\cdot$ \textbf{b1} 0.559 (new best) $\cdot$ \textbf{b2} 0.460
\end{tcolorbox}

\begin{tcolorbox}[promptbox={Example 4 -- Data Select IFEval, iteration 3: double down on the leading cluster}]
\footnotesize
\noindent\textbf{(1) Clusters and rank estimates} \quad {\scriptsize pool = 33 ideas}

\smallskip
\setlength{\tabcolsep}{3pt}\renewcommand{\arraystretch}{1.05}
\begin{tabular}{@{}c c l c c c@{}}
\toprule
Rank & Cl. & Label & \#ideas & \#evaluated & best score\\
\midrule
\textbf{1} & [3] & Metadata Stratification \& Representations & 7 & 1 & \textbf{0.377} \\
\textbf{2} & [2] & Deterministic Heuristics \& Lexical Diversity & 9 & 3 & 0.285 \\
\textbf{3} & [0] & Base-Model IFD \& Curriculums & 8 & 2 & 0.193 \\
\textbf{4} & [1] & Generative Probing \& LLM-as-a-Judge & 9 & 2 & 0.137 \\
\bottomrule
\end{tabular}

\smallskip
\noindent\textit{``Cluster 3 achieves the highest evaluated score (0.3770) and aligns with strong lessons advocating for metadata distribution and length proxies. [...] Clusters 0 and 1 have much weaker top scores, with lessons explicitly warning against the generative probing approaches found in Cluster 1.''}

\tcbline

\noindent\textbf{(2) Actions and idea selection}

\smallskip
\noindent\textbf{b0} $\cdot$ cluster [3] \colorbox{orange!20}{\tiny\textsc{exploit}} $\cdot$ idea \colorbox{orange!20}{\tiny\textsc{exploit}} $\rightarrow$ \textbf{Source-Balanced IO-Length Stratification} {\tiny\texttt{rank 1/6}}\\
\textit{``A direct refinement of the best-performing idea so far: it iterates on the successful source-balancing strategy by incorporating output length to filter out terse responses.''}

\smallskip
\noindent\textbf{b1} $\cdot$ cluster [3] \colorbox{orange!20}{\tiny\textsc{exploit}} $\cdot$ idea \colorbox{teal!15}{\tiny\textsc{explore}} $\rightarrow$ \textbf{Unsupervised TF-IDF + KMeans Stratification} {\tiny\texttt{rank 5/6}}\\
\textit{``To execute an explore action within this top-performing cluster, we select a high-rank idea (5/6) that introduces a completely different mechanism: data-driven semantic boundaries instead of native `source' metadata.''}

\smallskip
\noindent\textbf{Outcome:} \textbf{b0} 0.119 $\cdot$ \textbf{b1} 0.377 (ties best) $\cdot$ \textbf{b2} 0.230
\end{tcolorbox}

\clearpage
\begin{tcolorbox}[promptbox={Example 5 -- Radix Sort, iteration 3: tie-breaking by lineage}]
\footnotesize
\noindent\textbf{(1) Clusters and rank estimates} \quad {\scriptsize pool = 37 ideas}

\smallskip
\setlength{\tabcolsep}{3pt}\renewcommand{\arraystretch}{1.05}
\begin{tabular}{@{}c c l c c c@{}}
\toprule
Rank & Cl. & Label & \#ideas & \#evaluated & best score\\
\midrule
\textbf{1} & [0] & LSD Pass \& Fused Histograms & 13 & 7 & \textbf{0.644} \\
\textbf{1} & [2] & Alternative Paradigms \& Adaptive Sorts & 10 & 2 & \textbf{0.644} \\
\textbf{2} & [1] & Memory Hierarchy \& Write-Combining & 10 & 3 & 0.568 \\
\textbf{3} & [3] & Scatter Buffering \& Pipelining & 4 & 1 & 0.544 \\
\bottomrule
\end{tabular}

\smallskip
\noindent\textit{``Clusters 0 and 2 are tied for Rank 1 as they share the highest best-evaluated score (0.6443) [...]. Cluster 3 ranks last due to its lowest peak score, although with only 1 evaluated idea it retains some exploration value.''}

\tcbline

\noindent\textbf{(2) Actions and idea selection}

\smallskip
\noindent\textbf{b0} $\cdot$ cluster [0] \colorbox{orange!20}{\tiny\textsc{exploit}} $\cdot$ idea \colorbox{orange!20}{\tiny\textsc{exploit}} $\rightarrow$ \textbf{Fused 11-Bit Radix with Loop-Unrolled Prefetching} {\tiny\texttt{rank 1/6}}\\
\textit{``There is a tie at rank 1/6 between two prefetch-based refinements. I chose this one because its parent achieved the highest overall score in the run (0.6443), making it the most promising trajectory to aggressively exploit.''}

\smallskip
\noindent\textbf{b1} $\cdot$ cluster [0] \colorbox{orange!20}{\tiny\textsc{exploit}} $\cdot$ idea \colorbox{teal!15}{\tiny\textsc{explore}} $\rightarrow$ \textbf{Restricted Pointers \& Unrolled 12-12-8 Scatter} {\tiny\texttt{rank 5/6}}\\
\textit{``While evaluated members have focused on prefetching and fused histograms, this idea tests whether compiler-level optimizations like \texttt{restrict} pointers and manual unrolling can significantly accelerate the 12-12-8 radix scatter.''}

\smallskip
\noindent\textbf{Outcome:} \textbf{b0} 0.661 (new best) $\cdot$ \textbf{b1} 0.536 $\cdot$ \textbf{b2} 0.570 $\cdot$ \textbf{b3} 0.546
\end{tcolorbox}

\subsection{Qualitative Mechanisms of the Organize Operator}
\label{apdx:organize}

One strong advantage of idea-driven approaches is its interpretability in the research trajectory; it is easy to follow the search trajectory summarized by the idea traces.
As a qualitative analysis, we examine how the Organize operator restructures the discovered idea pool as new candidates and experimental evidence become available. 
In Figure~\ref{fig:organize-heatmap}, we provide example heatmaps of the best score from each cluster explored, and show how the exploration evolves throughout iterations.

\begin{figure}[h!]
    \centering

    \begin{subfigure}{\linewidth}
        \centering
        \includegraphics[width=\linewidth]{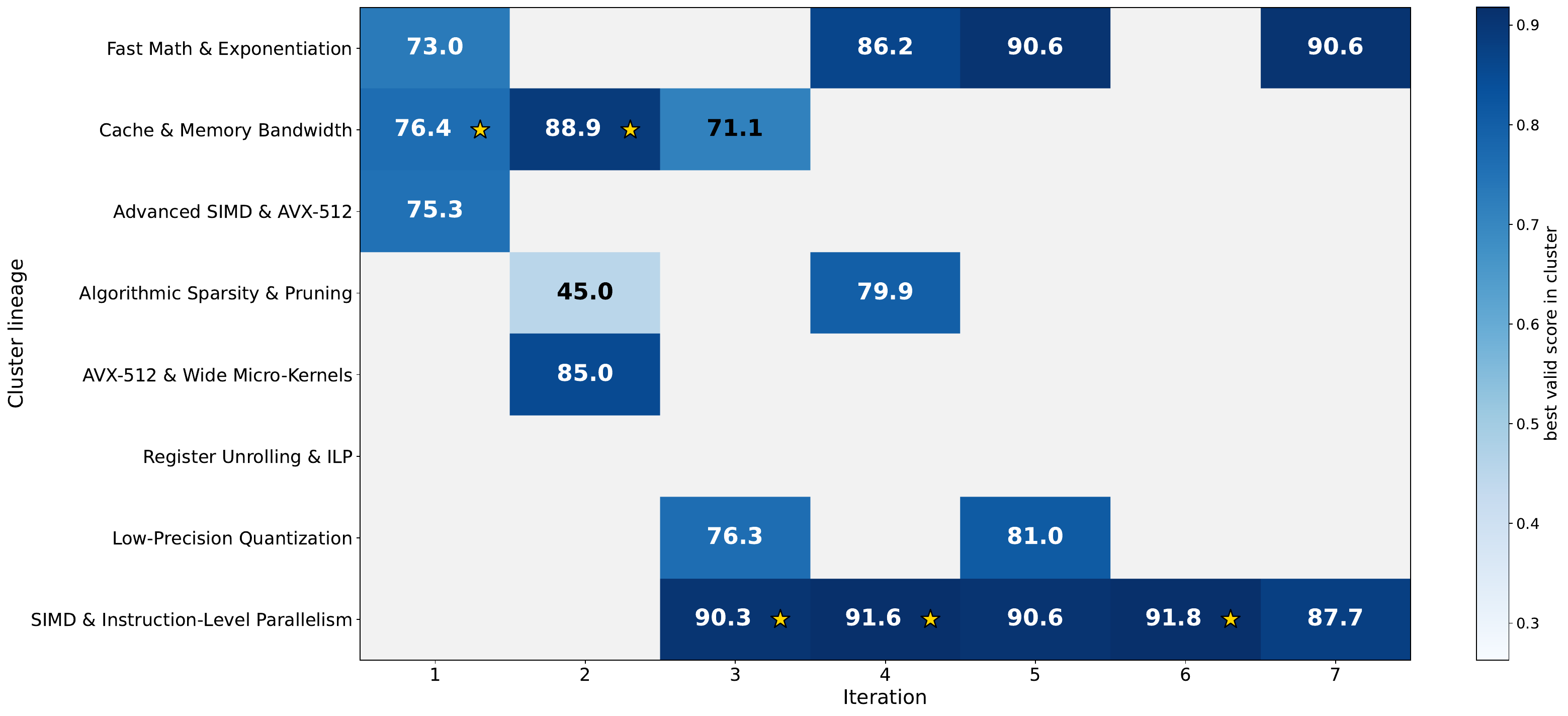}
        \caption{Flash Attention: Cluster-wise Best Score Progression.}
        \label{fig:flash_attention}
    \end{subfigure}

    \vspace{2mm}

    \begin{subfigure}{\linewidth}
        \centering
        \includegraphics[width=\linewidth]{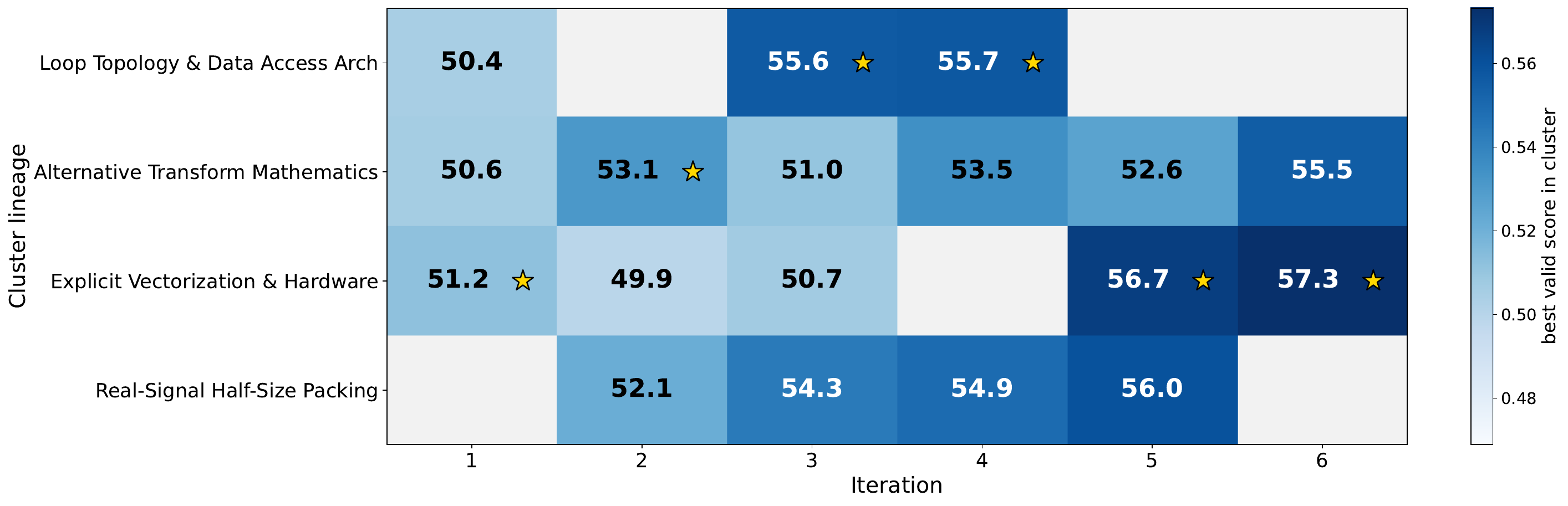}
        \caption{FFT Rust: Cluster-wise Best Score Progression.}
        \label{fig:fft_rust}
    \end{subfigure}

    \caption{\textbf{Agentic Surrogate: ORGANIZE}. Heatmap of the best score of each cluster explored in each iteration. The star symbol marks the point when and where the new best scoring solution was discovered.}
    \label{fig:organize-heatmap}
\end{figure}

On one example run on Flash Attention~(Figure~\ref{fig:organize-heatmap} (a)), the search initially explores several directions, including memory optimization, mathematical approximation, and vectorization. 
As evidence accumulates, SIMD and instruction-level parallelism emerges as a consistently strong direction, while fast exponentiation remains competitive in later iterations. 
Overall, 8 different cluster themes were proposed, while 7 of them were explored throughout the iterations.
Note that the ideas comprising each cluster theme is not mutually exclusive; the list of ideas are re-clustered every iteration, so the same idea could have been regrouped into a different cluster in the subsequent rounds.
On FFT Rust~(Figure~\ref{fig:organize-heatmap} (b)), on the other hand, the leading direction changes repeatedly among transform reformulation, loop and cache topology, real-signal packing, and explicit vectorization. 
Compared to the Flash Attention task that spanned 8 semantic clusters, FFT Rust had a narrower breadth of search with 4 different clusters explored. 
This difference in trend largely depends on the trait of the task in question.

\begin{figure}[h!]
    \centering

    \begin{subfigure}{\linewidth}
        \centering
        \includegraphics[width=\linewidth]{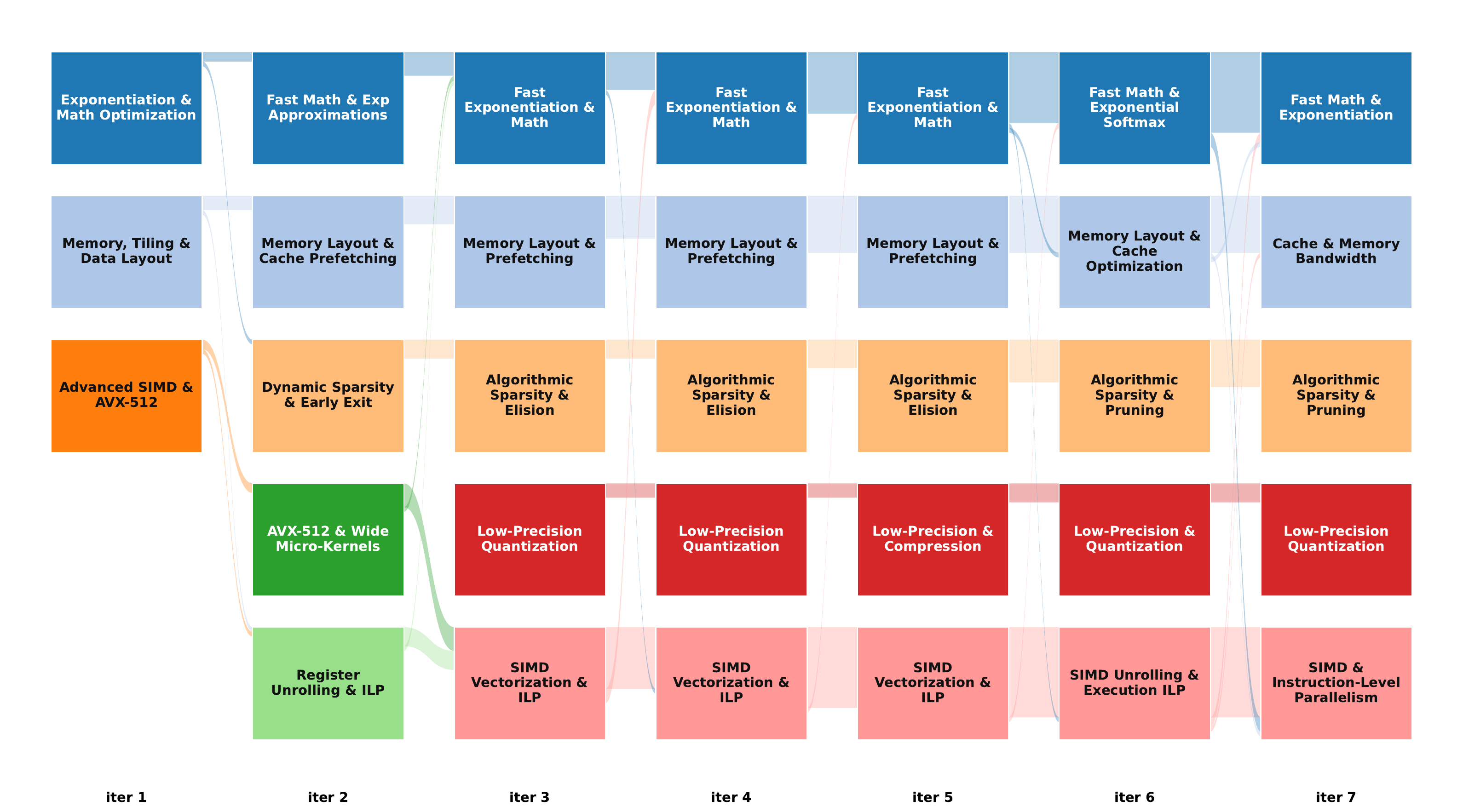}
        \caption{Flash Attention: Alluvial Plot.}
        \label{fig:flash_attention}
    \end{subfigure}

    \vspace{2mm}

    \begin{subfigure}{\linewidth}
        \centering
        \includegraphics[width=\linewidth]{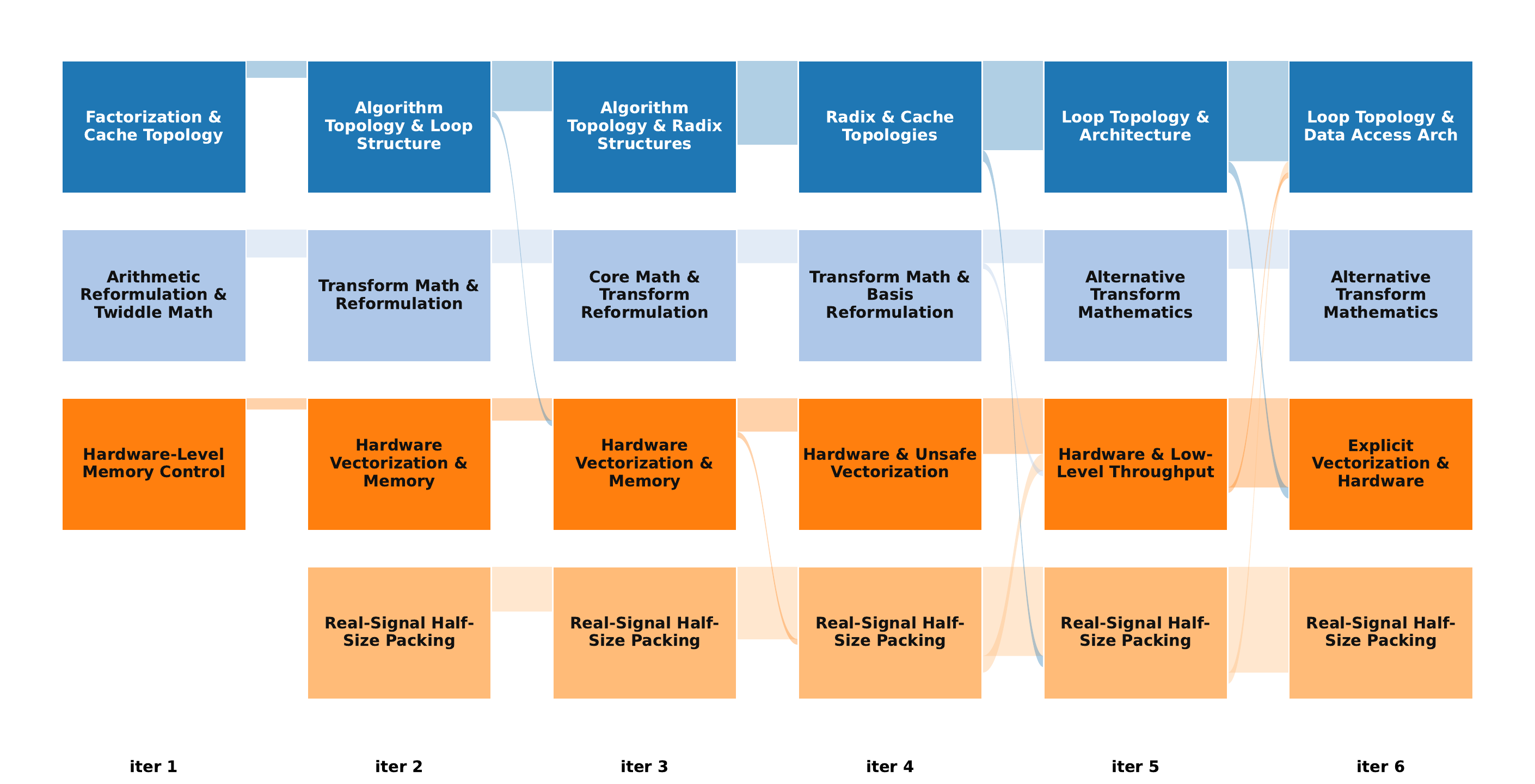}
        \caption{FFT Rust: Alluvial Plot.}
        \label{fig:fft_rust}
    \end{subfigure}

    \caption{\textbf{Agentic Surrogate: ORGANIZE}. Alluvial plots showing how the ideas flow and how they are re-organized throughout iterations.}
    \label{fig:alluvial}
\end{figure}

The alluvial diagrams in Figure~\ref{fig:alluvial} further visualize how clusters persist, split, merge, and change names across, showing the fine-grained flow and structure of ideas evolving across iterations.
Despite these structural revisions at every iteration, coherent semantic trajectories remain visible. 
For example, the broad SIMD direction in Flash Attention develops into more specialized directions involving wide micro-kernels, register unrolling, low-precision computation, and instruction-level parallelism. 
Similarly, the FFT Rust task progressively refines broad hardware- and topology-oriented clusters into explicit vectorization and data-access strategies. 
\textit{Thus, the Organize operator preserves semantic continuity while allowing the representation to adapt beyond a fixed list, tree, or generation lineage}.

\clearpage
\subsection{Examining the Preciseness of the Estimate Operator}
\label{apdx:estimate}

We next examine whether the ordinal promisingness estimates produced by the Estimate operator agree with subsequently observed verifier scores. 
For each cluster, we measure the Spearman correlation between the cluster-level rank estimates and the actual rank returned by the executions, and average them across iterations.

As seen in Figure~\ref{fig:spearman}, the estimates are positively correlated with observed performance across all five tasks, with mean cluster-wise correlations ranging from $0.56$ to $0.91$ and an overall average of approximately $0.71$. 
The strongest agreement occurs on Z-order Range Scan and Flash Attention, while the remaining tasks retain moderate positive correlations. 
These results indicate that the agent can extract useful ranking signals from the organized idea map, evaluation history, and implementation lessons without predicting calibrated reward values.

\begin{figure}[h!]
    \centering
    \includegraphics[width=0.8\linewidth]{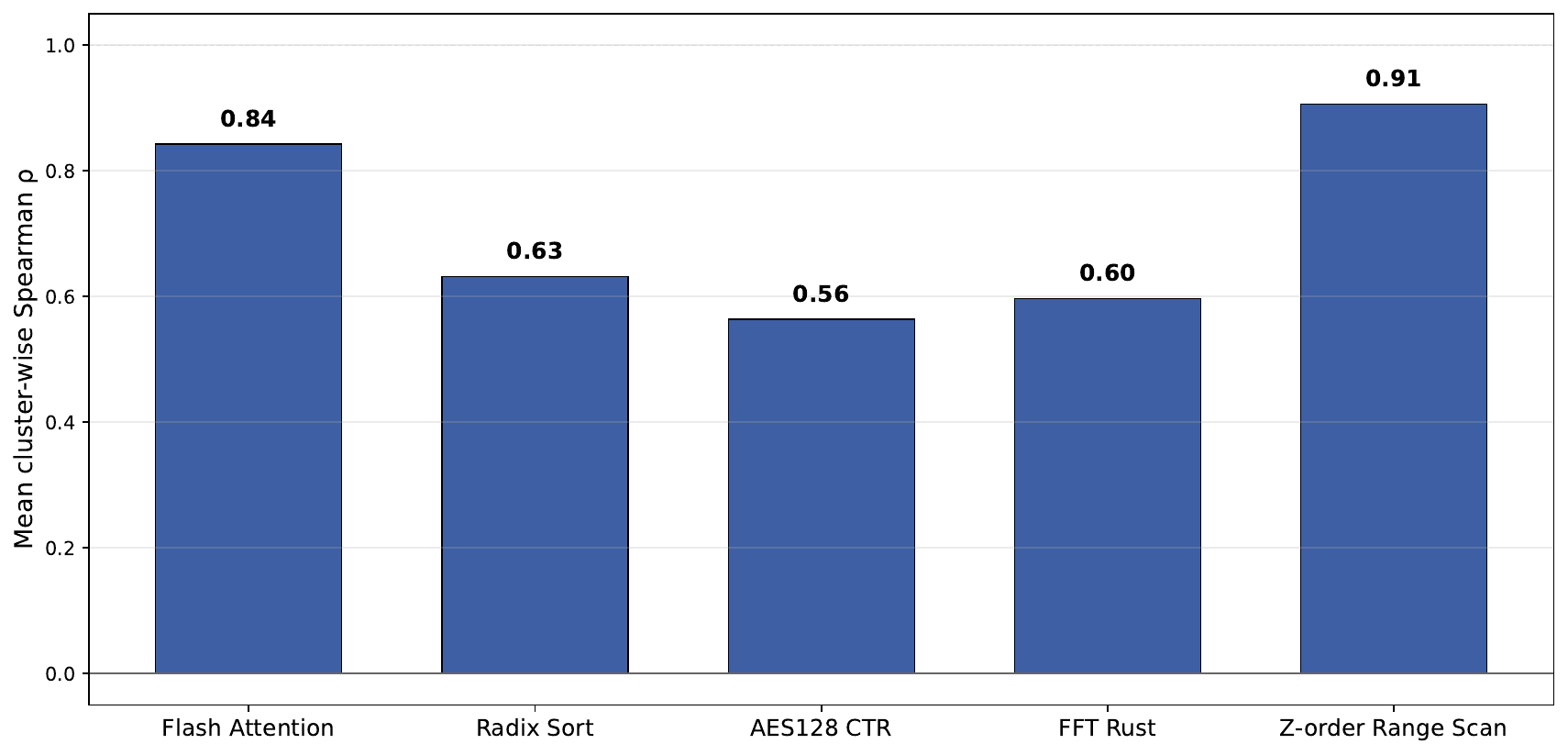}
    \caption{Cluster-level Ordinal Estimate's Average Spearman Correlation.}
    \label{fig:spearman}
\end{figure}

The iteration-wise results in Figure~\ref{fig:estimate_trend} show how these estimates change as evidence accumulates. 
Correlation generally improves after the initial iterations: for example, the estimate on Radix Sort recovers from a poor initial ranking to nearly perfect agreement in later iterations, while Flash Attention and Z-order Range Scan maintain strong agreement over much of the search. 
The trajectories are not strictly monotonic, however, particularly for AES128-CTR and FFT.
We conjecture this might be because (1) there are multiple competing clusters that have similar expected performance, and/or (2) there might be implementation noise~(i.e., minor implementation details can change the measured performance).
Nevertheless, the estimates remain informative enough to ground subsequent exploration and exploitation decisions in observed research progress.

\begin{figure*}[h!]
    \centering

    \begin{subfigure}[t]{0.32\textwidth}
        \centering
        \includegraphics[width=\linewidth]{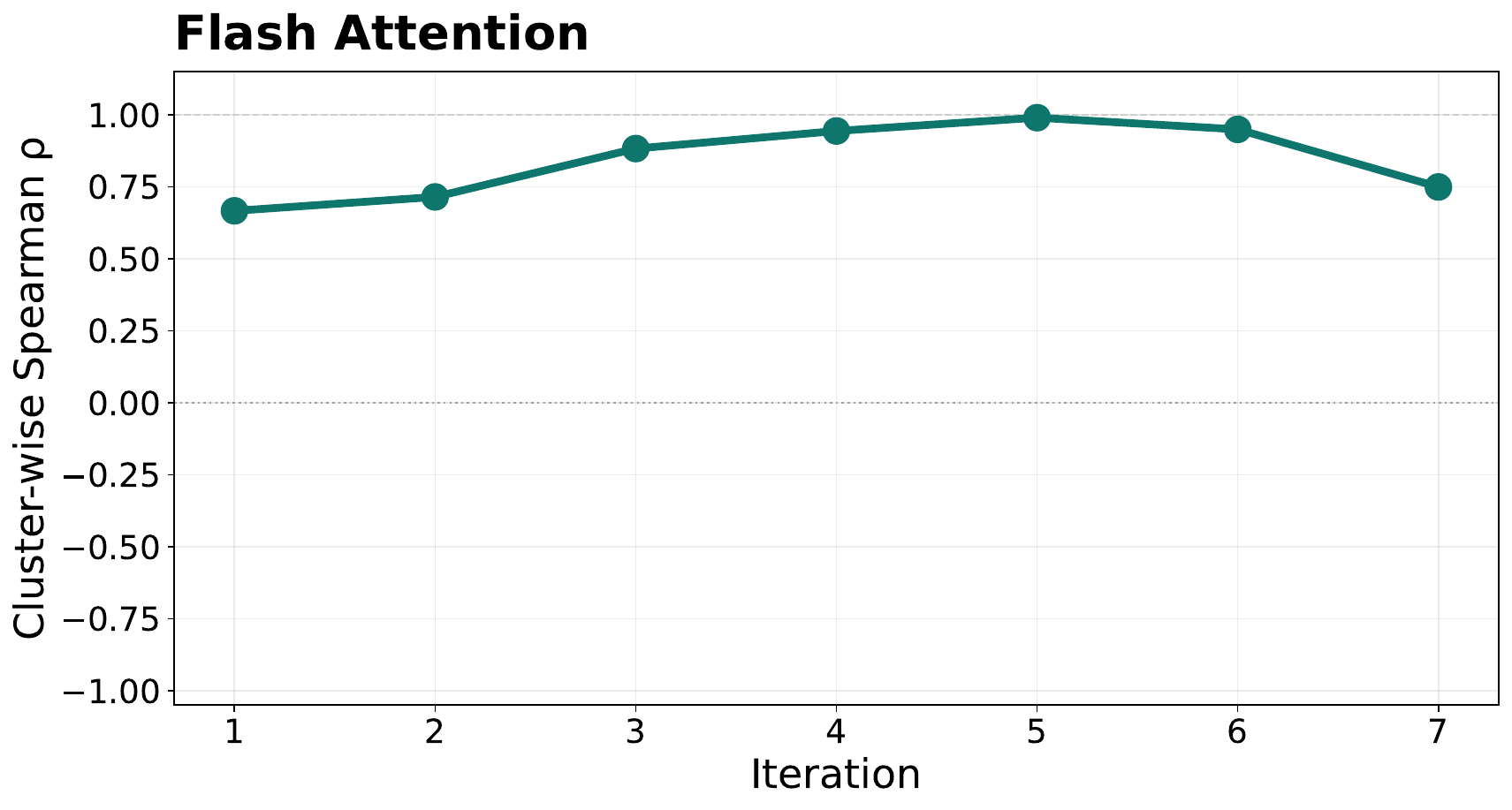}
        \caption{Flash Attention}
        \label{fig:plot1}
    \end{subfigure}
    \hfill
    \begin{subfigure}[t]{0.32\textwidth}
        \centering
        \includegraphics[width=\linewidth]{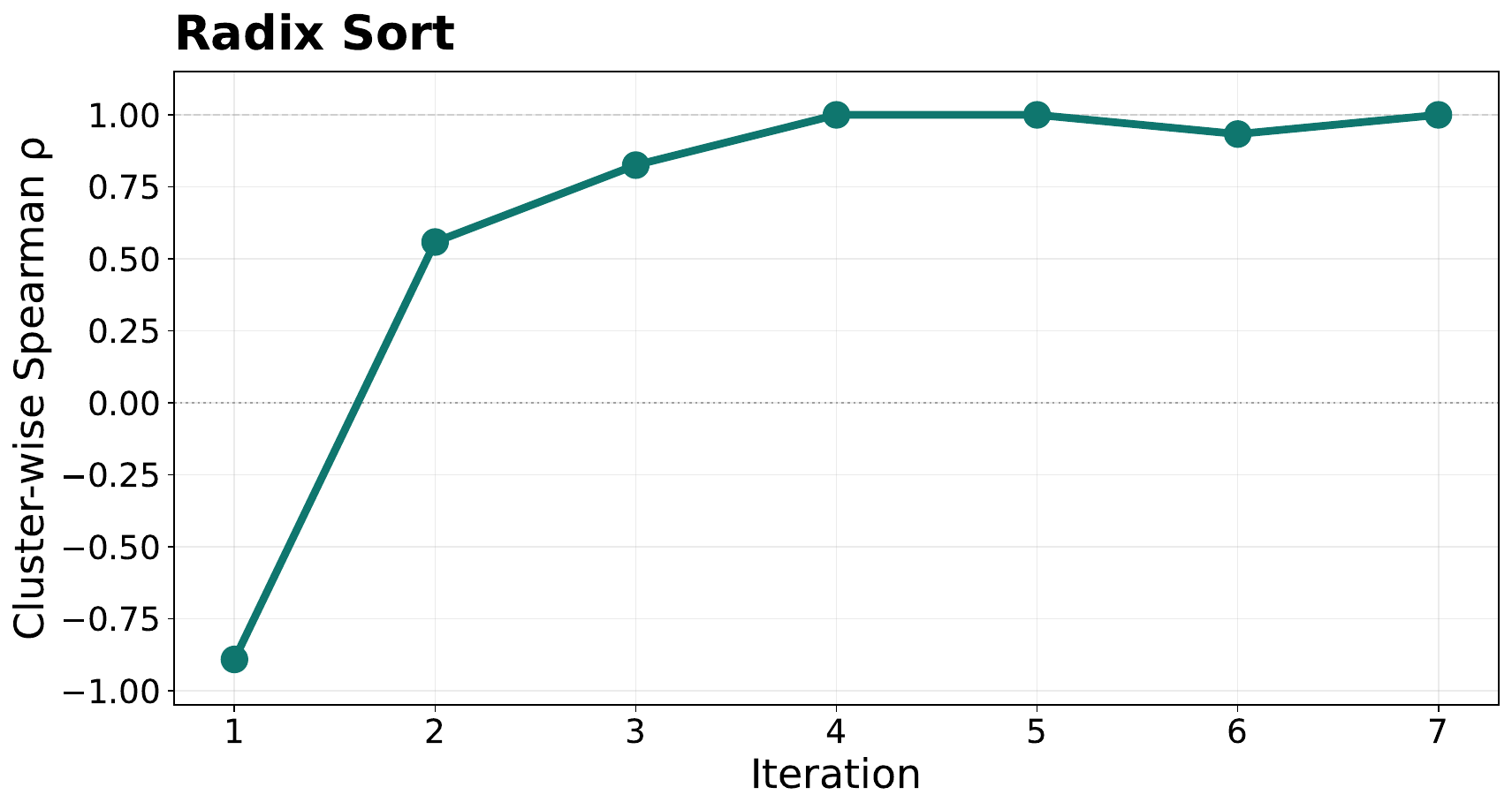}
        \caption{Radix Sort}
        \label{fig:plot2}
    \end{subfigure}
    \hfill
    \begin{subfigure}[t]{0.32\textwidth}
        \centering
        \includegraphics[width=\linewidth]{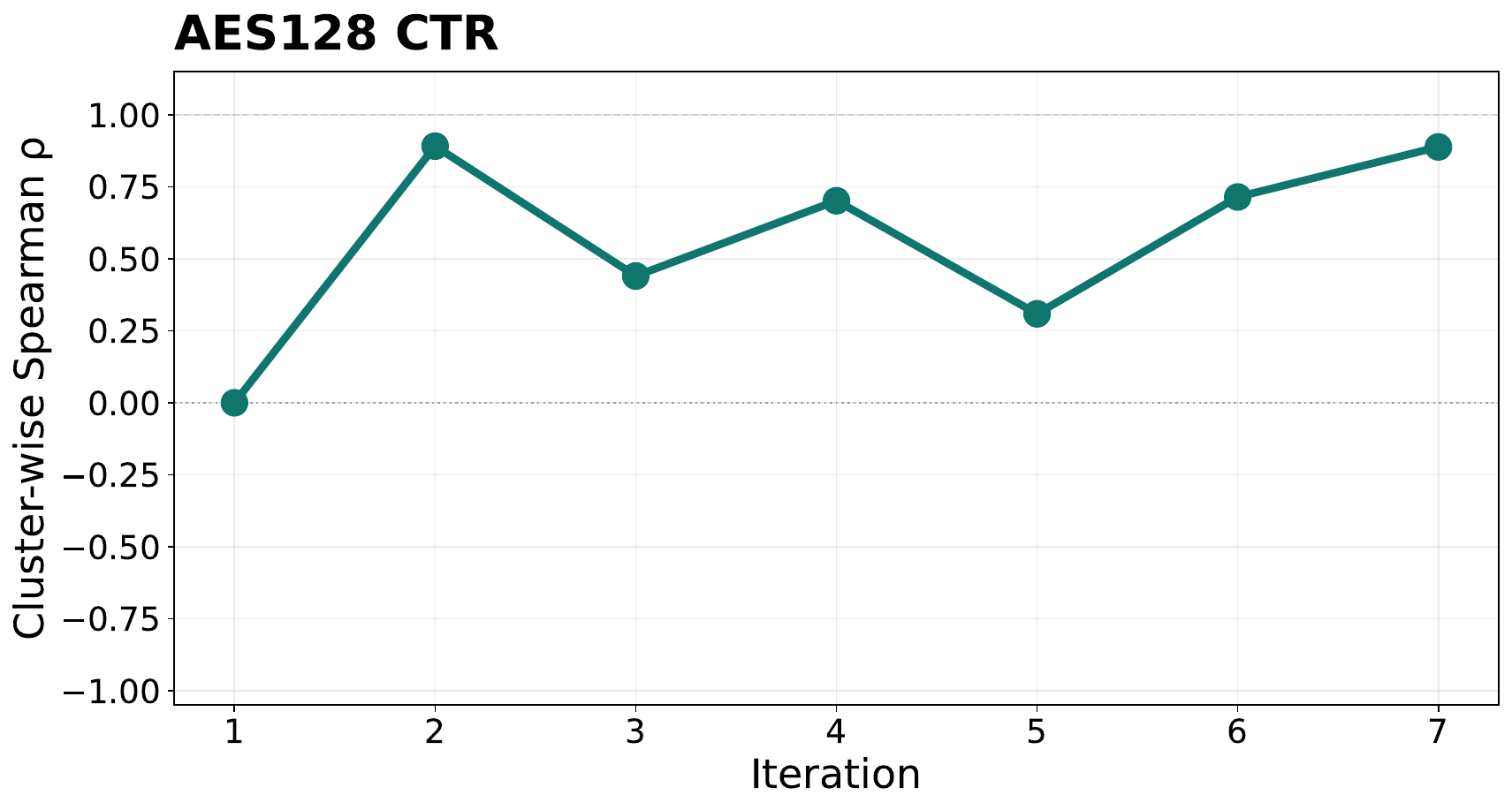}
        \caption{AES128 Ctr}
        \label{fig:plot3}
    \end{subfigure}

    \vspace{2mm}

    \begin{subfigure}[t]{0.32\textwidth}
        \centering
        \includegraphics[width=\linewidth]{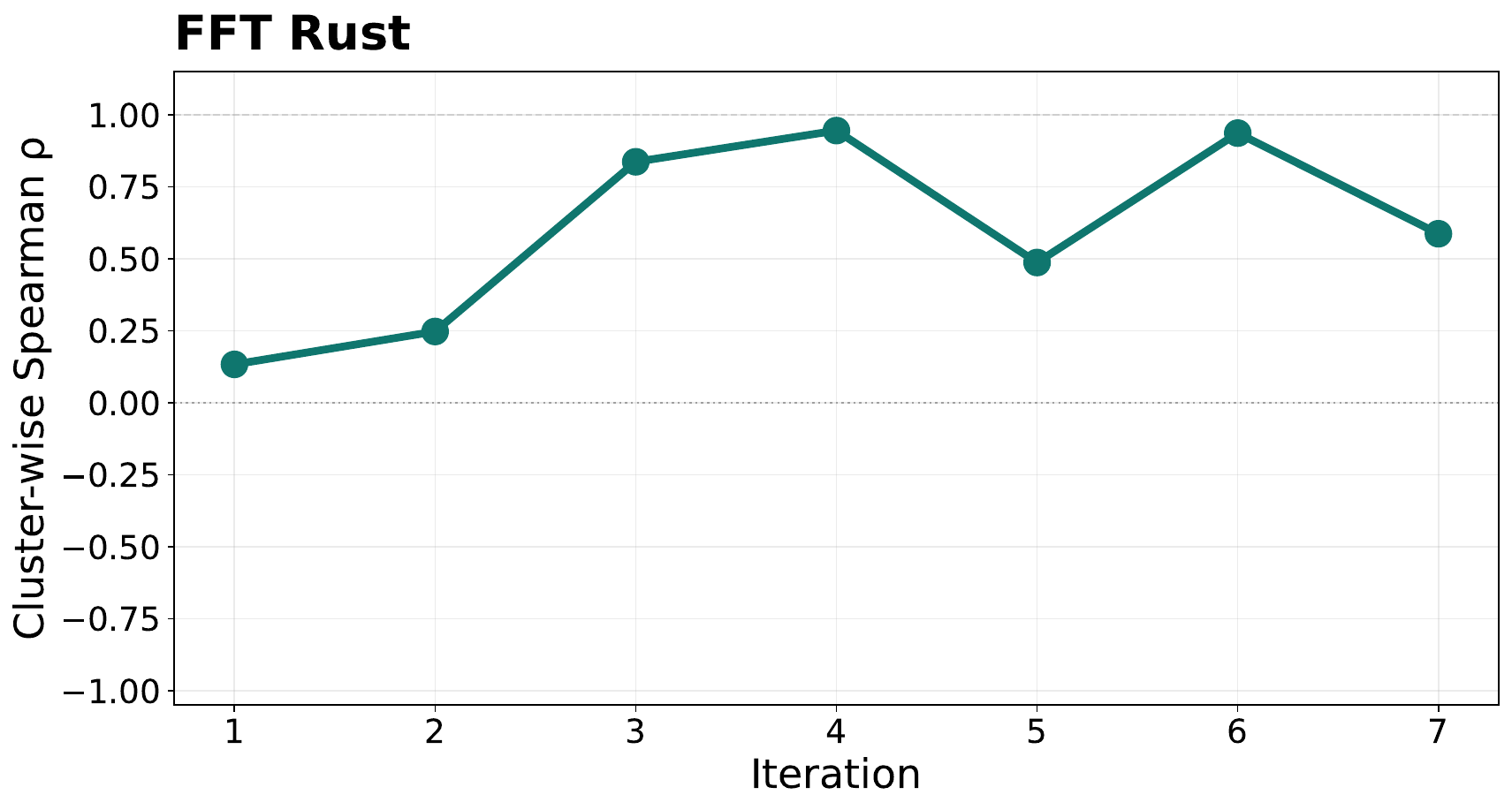}
        \caption{FFT Rust}
        \label{fig:plot4}
    \end{subfigure}
    \hspace{0.02\textwidth}
    \begin{subfigure}[t]{0.32\textwidth}
        \centering
        \includegraphics[width=\linewidth]{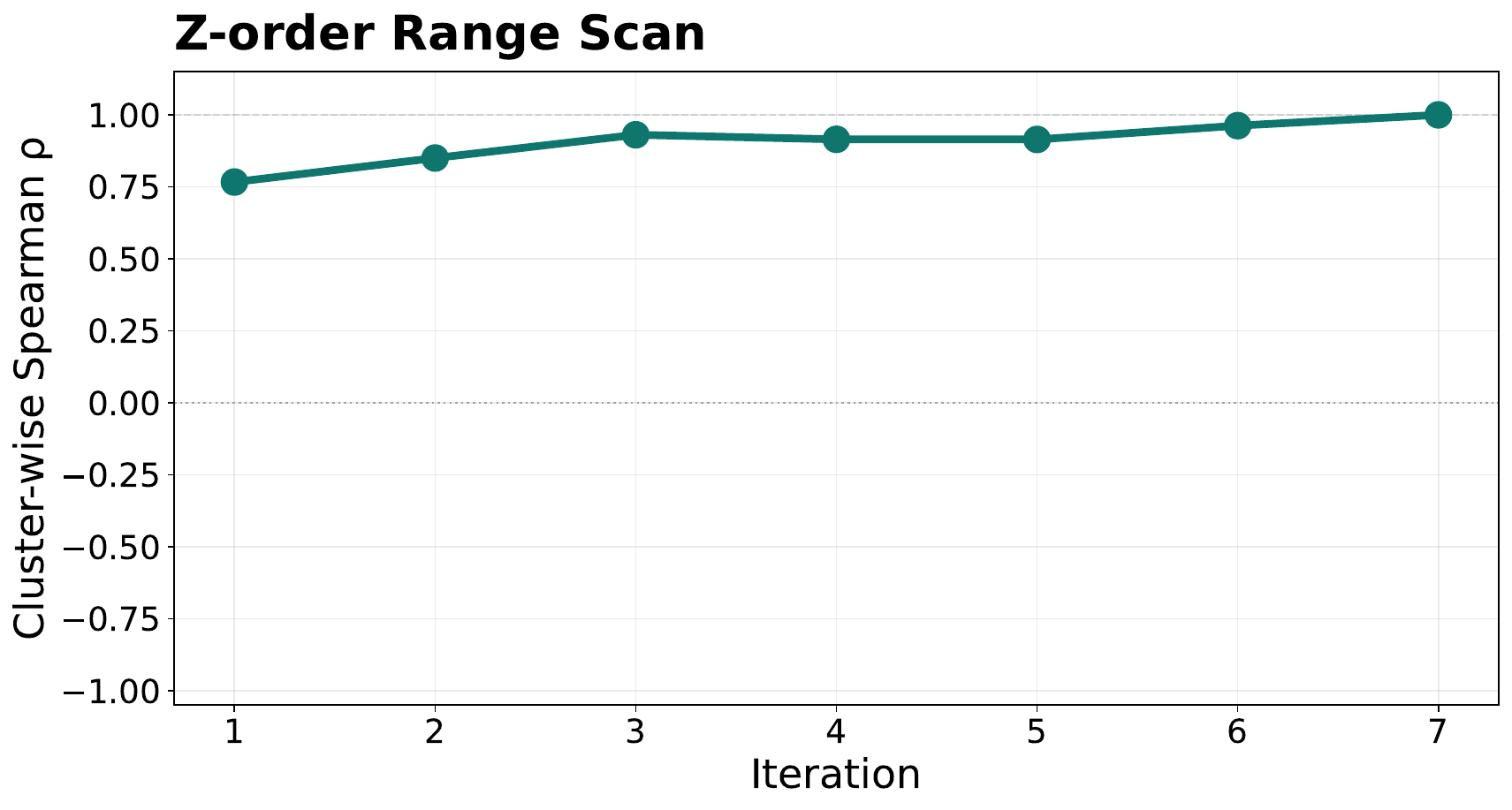}
        \caption{Z-order Range Scan}
        \label{fig:plot5}
    \end{subfigure}

    \caption{Trend of Spearman Correlation across Iterations.}
    \label{fig:estimate_trend}
\end{figure*}

\subsection{Dispatch Operator Action Analysis}
\label{apdx:dispatch}

The Dispatch operator makes idea selection decisions at two levels for each Solver branch: whether to explore or exploit across clusters, and whether to explore or exploit among ideas within the selected cluster. 
Motivated by recent findings that LLM agents often fail to explore systematically~\citep{krishnamurthy2024can,pan2026large,park2026exploration,choi2026multi}, \textsc{AIM} requires the agent to explicitly verbalize these decisions. 
Unlike approaches that rely on externally specified search heuristics~\citep{yamada2025ai,toledo2026ai} or direct LLM-based candidate selection~\citep{weng2026deepscientist}, \textsc{AIM} conditions its actions on the agent's explicit assessment of the current idea map and promisingness estimates. 
This makes the exploration--exploitation policy evidence-grounded and interpretable.

Figure~\ref{fig:dispatch_dist} shows the distribution of the resulting two-level actions, represented as (\emph{cluster-level action}, \emph{idea-level action}), across iterations of the five tasks. 
Early iterations generally allocate more branches to actions involving exploration, whereas exploitation becomes increasingly prevalent in later iterations. 
This shift is intuitive: when evidence is sparse, the agent broadens coverage across clusters and ideas; as experimental evidence accumulates, it increasingly concentrates resources on directions estimated to be promising. 
The task-dependent variation in these distributions further indicates that the search policy is adapted to research progress rather than fixed in advance.

\begin{figure*}[h!]
    \centering

    \begin{subfigure}[t]{0.48\textwidth}
        \centering
        \includegraphics[width=\linewidth]{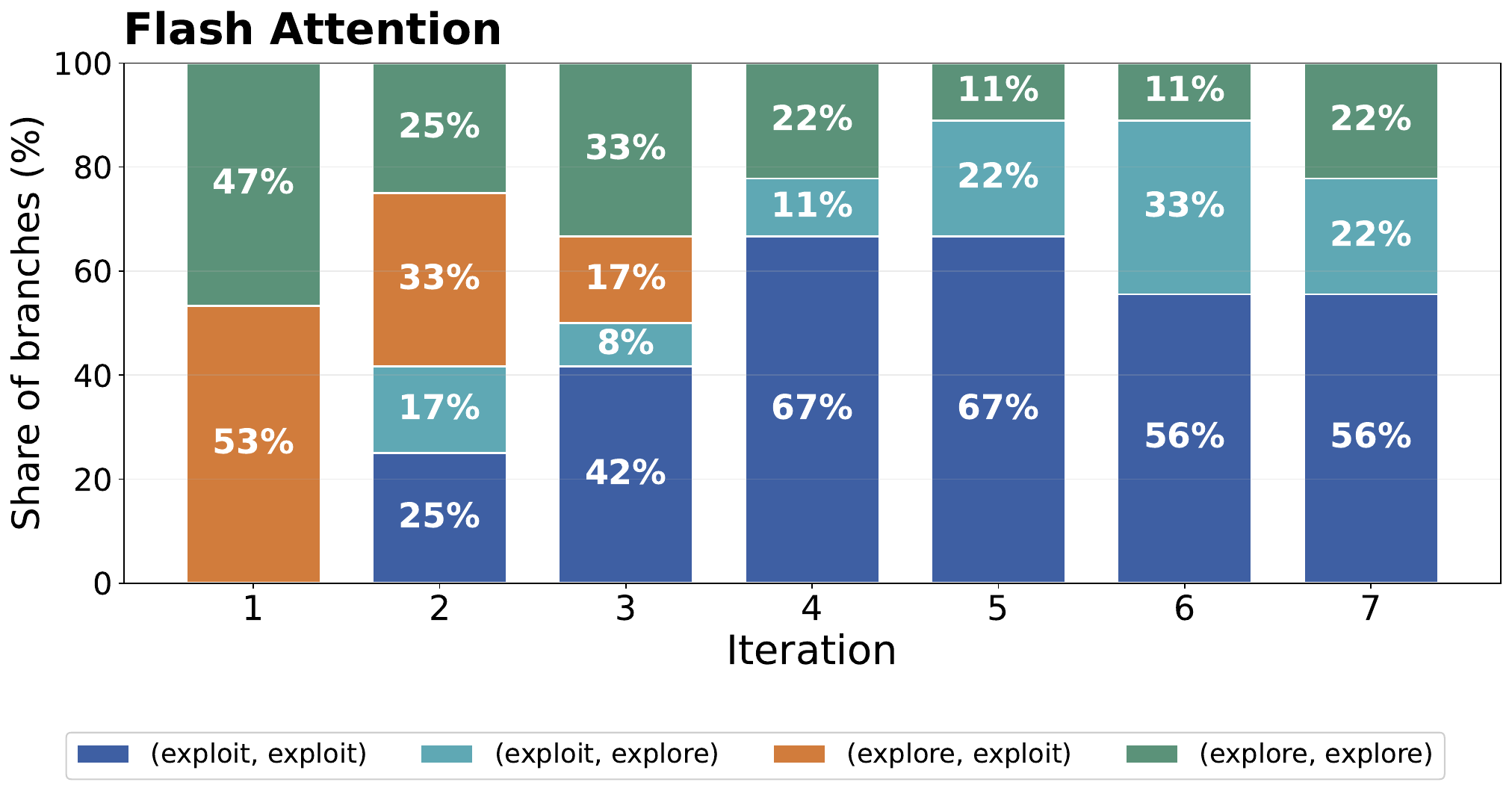}
        \caption{Flash Attention}
        \label{fig:plot1}
    \end{subfigure}
    \hfill
    \begin{subfigure}[t]{0.48\textwidth}
        \centering
        \includegraphics[width=\linewidth]{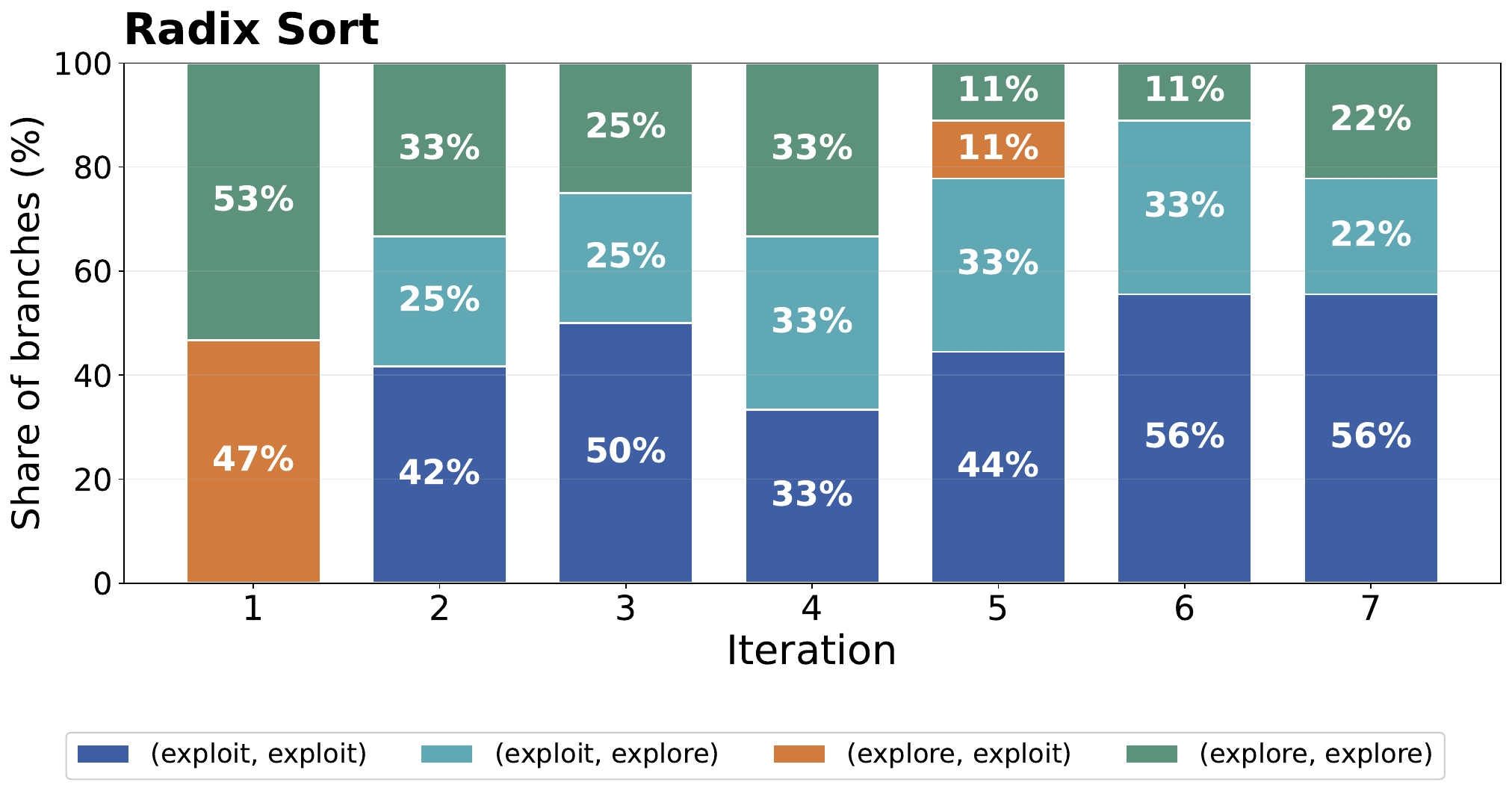}
        \caption{Radix Sort}
        \label{fig:plot2}
    \end{subfigure}
    
    \vspace{2mm}
    
    \begin{subfigure}[t]{0.48\textwidth}
        \centering
        \includegraphics[width=\linewidth]{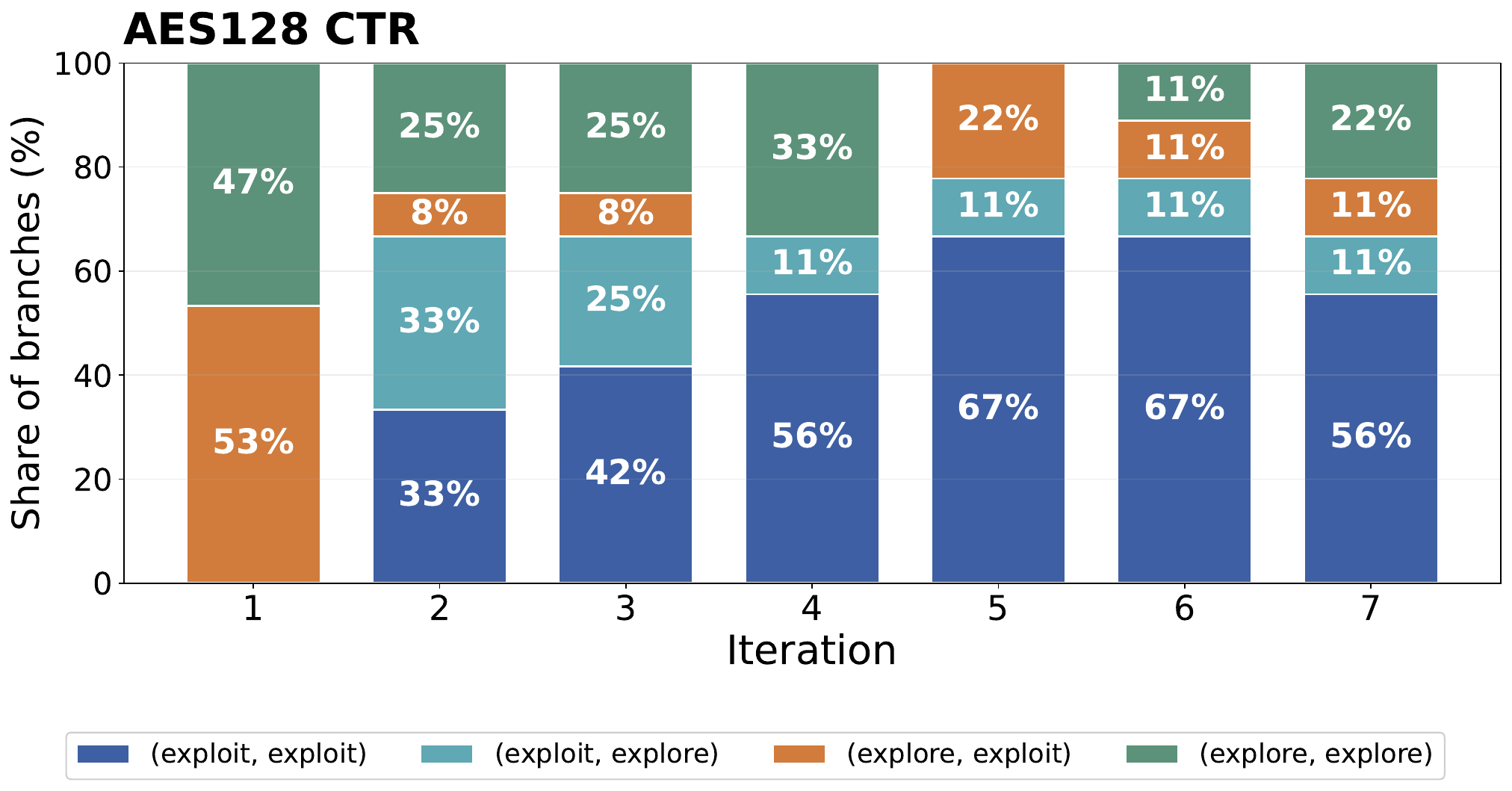}
        \caption{AES128 Ctr}
        \label{fig:plot3}
    \end{subfigure}
    \hfill
    \begin{subfigure}[t]{0.48\textwidth}
        \centering
        \includegraphics[width=\linewidth]{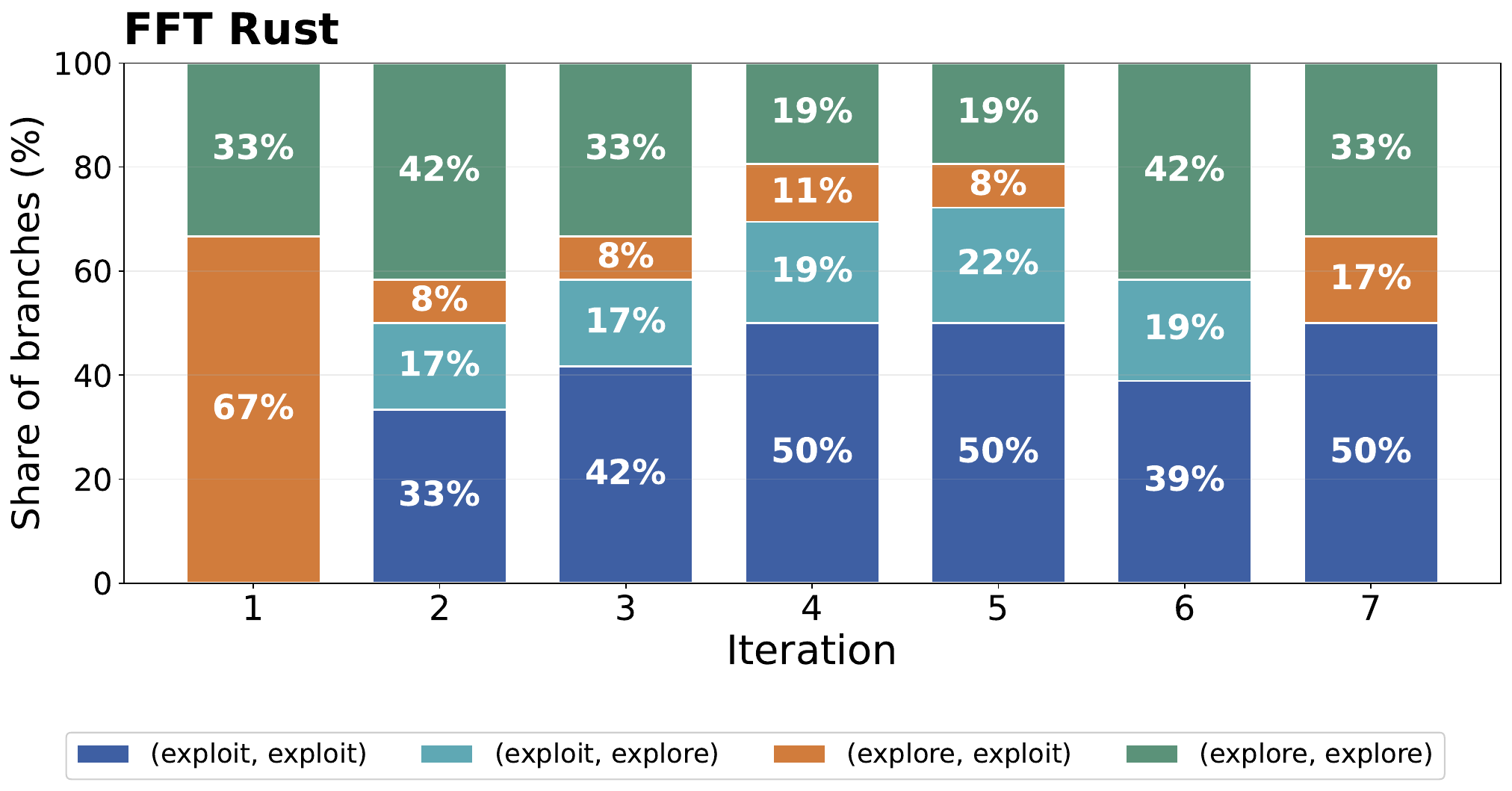}
        \caption{FFT Rust}
        \label{fig:plot4}
    \end{subfigure}

    \vspace{2mm}
    
    \begin{subfigure}[t]{0.48\textwidth}
        \centering
        \includegraphics[width=\linewidth]{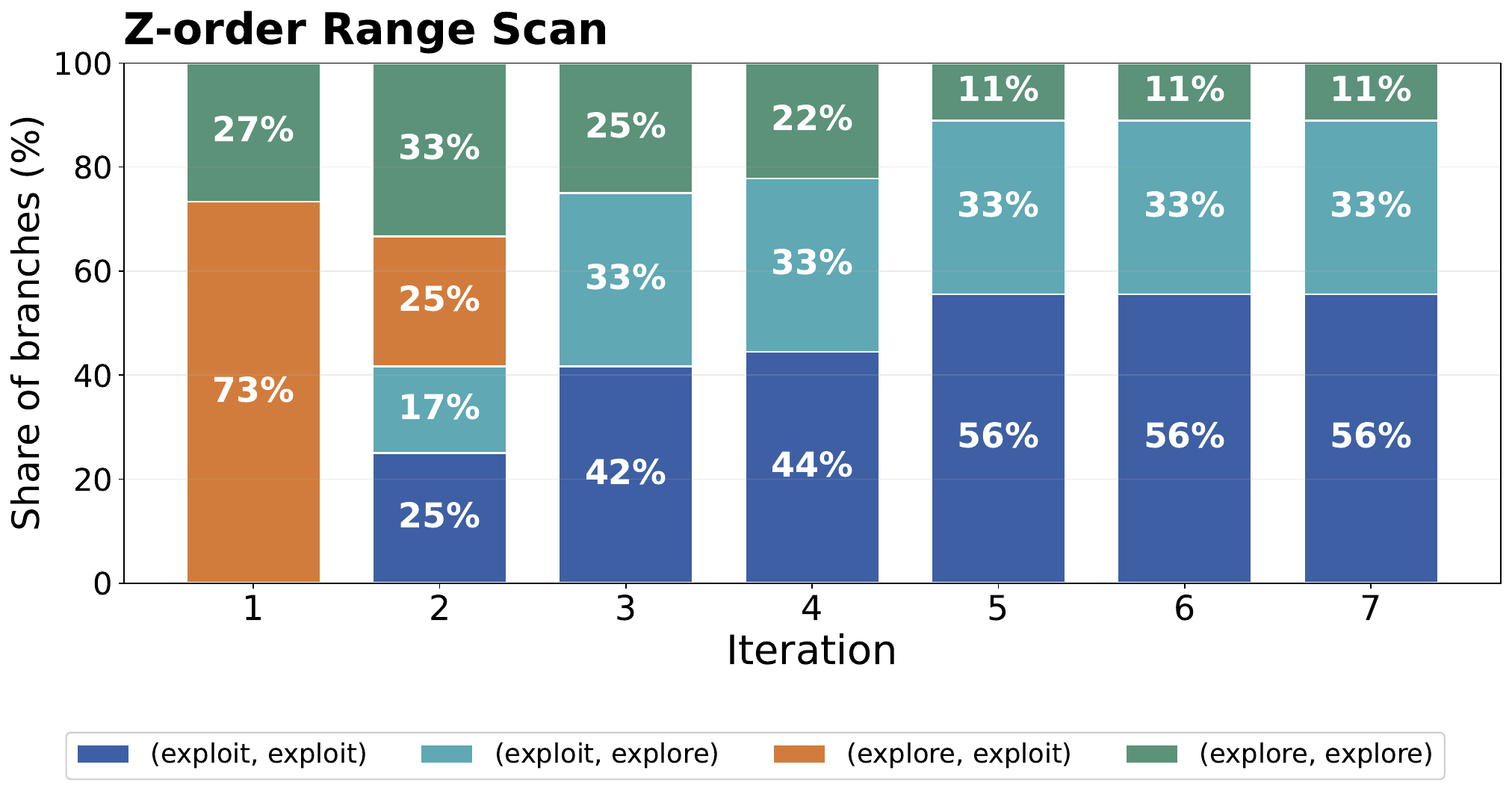}
        \caption{Z-order Range Scan}
        \label{fig:plot5}
    \end{subfigure}

    \caption{Explore/Exploit Action Distribution Across Branches.}
    \label{fig:dispatch_dist}
\end{figure*}

\clearpage
\subsection{Mode Distribution of the Expand Operator}
\label{apdx:expand}

We analyze how the Expand operator uses its four generation modes throughout the search. 
As an example, Figure~\ref{fig:expand_dist} (a) reports the number of ideas added at each Flash Attention iteration and the aggregate mode distribution across all five tasks.
Overall, the three modes excluding `fix-error' shows similar rates of usage.

As general analysis, we also provide the distribution of modes per task in Figure~\ref{fig:expand_dist} (b).
The cross-task distributions are mostly consistent across tasks.
The cross-pollination mode accounts for $35$--$41\%$ of generated ideas, novel idea generation for $29$--$32\%$, and score-guided refinement for $24$--$31\%$. 
Error-guided repair constitutes only a small remaining fraction. 
These results show that the Expand operator does not collapse onto a single generation strategy; it continually combines the refinement and recombination of existing evidence with the introduction of previously unexplored directions.

\begin{figure*}[h!]
    \centering

    \begin{subfigure}[t]{0.8\textwidth}
        \centering
        \includegraphics[width=\linewidth]{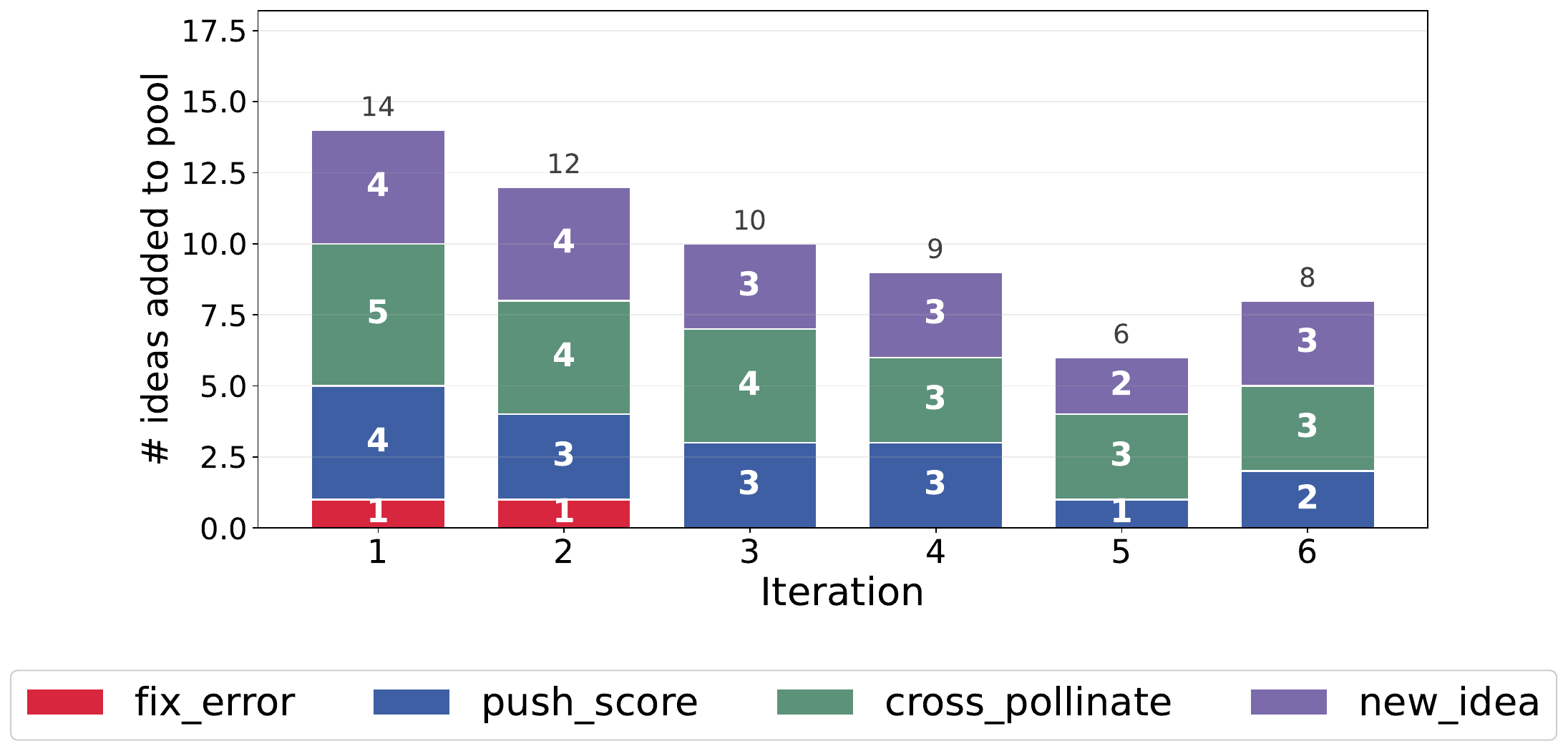}
        \caption{Flash Attention Task Mode Distribution}
        \vspace{5mm}
        \label{fig:fa_expand}
    \end{subfigure}
    \begin{subfigure}[t]{0.8\textwidth}
        \centering
        \includegraphics[width=\linewidth]{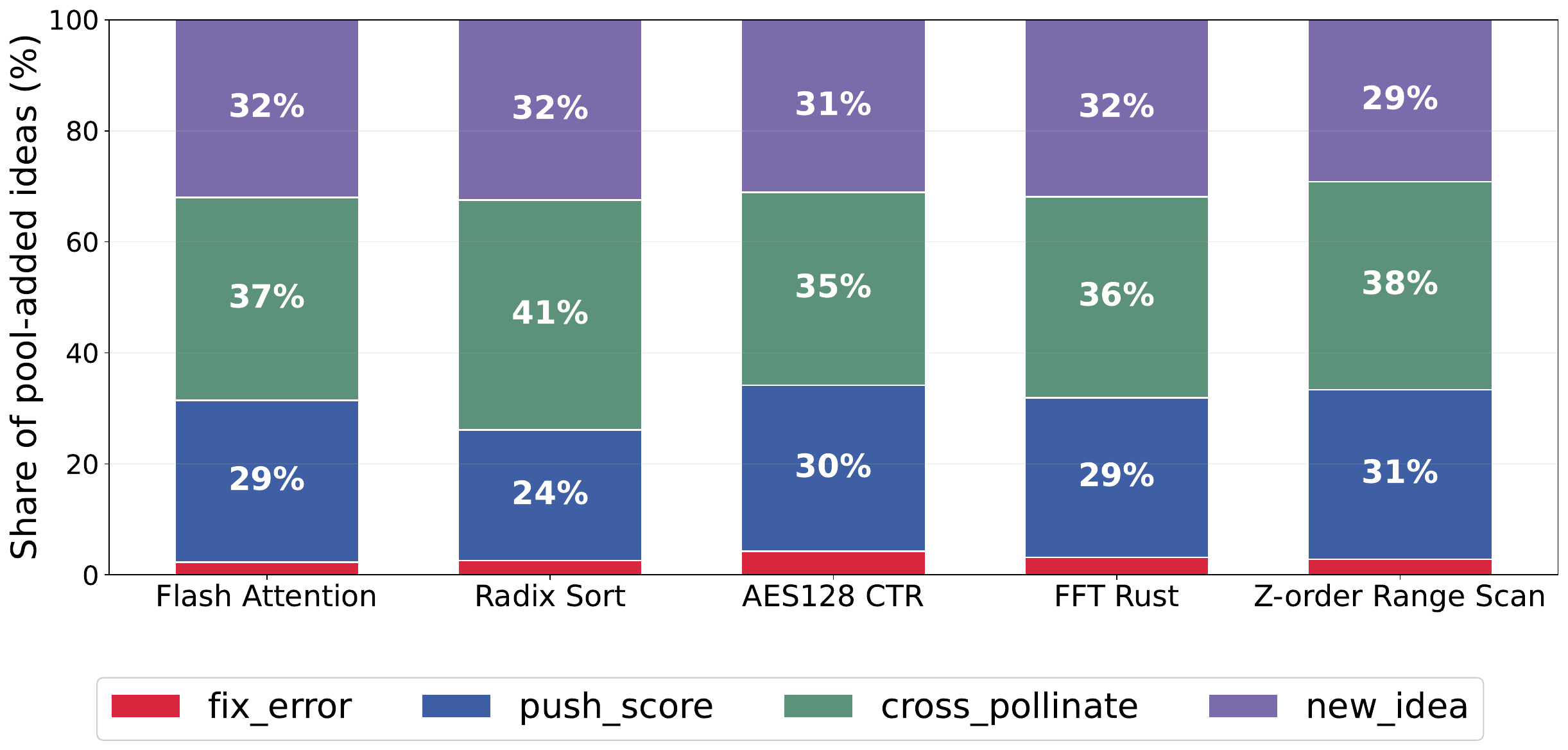}
        \caption{Cross-task Mode Distribution}
        \label{fig:all_expand}
    \end{subfigure}

    \caption{Expand operator mode distributions.}
    \label{fig:expand_dist}
\end{figure*}

\subsection{Effect of Solution Auditor Idea Reconstruction}
\label{apdx:auditor}

In the Solution Auditor component, a solution that was flagged solely with ``idea-mismatch'' has its idea reconstructed.
This reconstruction is intended to reduce the misalignment between the actual solution evaluated, and the corresponding idea that triggered the solution.
To verify if this feedback cycle is working properly, we show in Figure~\ref{fig:auditor} the ratio of \texttt{idea-mismatch} flags before and after the idea reconstruction is performed.
As intended, the ratio of the \texttt{idea-mismatch} flag is greatly reduced across the tasks, ensuring that the (idea, solution) pairs are much more aligned.
\begin{figure}[h!]
    \centering
    \includegraphics[width=0.9\linewidth]{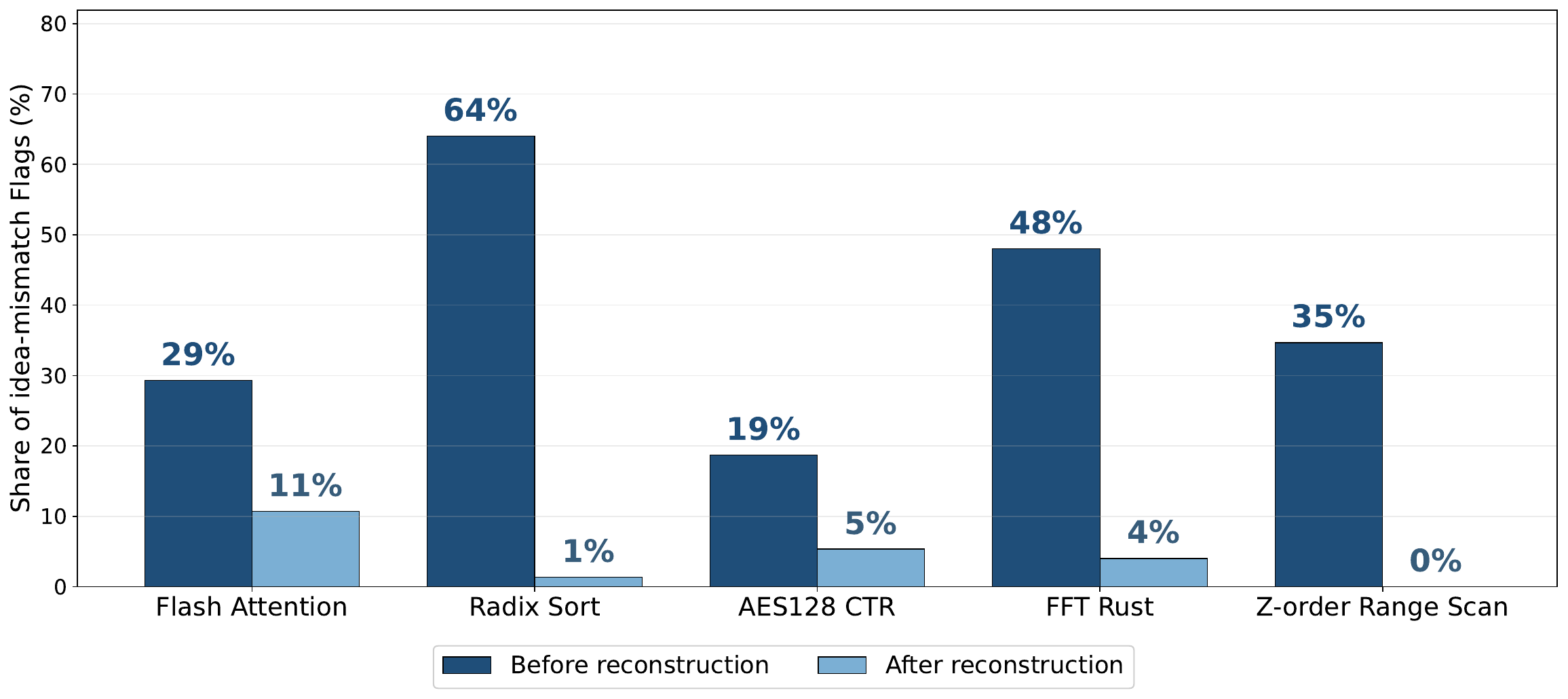}
    \caption{Decrease in \texttt{idea-mismatch} flags after Solution Auditor idea reconstruction.}
    \label{fig:auditor}
\end{figure}

\paragraph{Qualitative Analysis.}
As direct scrutiny of how idea reconstruction affects the process, we here provide couple of examples of the reconstruction process from the Flash Attention task.
For each example, we demonstrate the initially assigned idea, the auditor's verdict, and the reconstructed idea.

\subsubsection*{Example 1: the code relied on the technique the idea set out to avoid}
\label{example1}
\begin{tcolorbox}[promptbox={Example 1 --- Assigned Idea}]
\begin{Verbatim}[fontsize=\small,breaklines=true]
Title: AVX-512 Flash Attention 64x64 Tiling with Compiler Exp

Short Hypothesis: Expanding the tile size to 64x64 using AVX-512 intrinsics will improve L1 cache utilization and FLOP throughput over the C1 reference; furthermore, relying on standard exp() rather than a custom 5-term AVX-512 minimax exponential avoids FMA execution port contention in the heavily unrolled register-blocked loop, improving net throughput.

Abstract: The naive C0 baseline evaluates scaled dot-product attention in 0.75s by materializing a 128 MB FP64 score matrix. The C1 reference reduces this to ~0.10s using Flash Attention tiling (Br=Bc=32), online softmax, and AVX2 intrinsics. We propose further minimizing this runtime toward the theoretical ~0.05s floor by deploying AVX-512 intrinsics and expanding the tile size to 64x64, better utilizing L1 cache capacity and doubling SIMD lane width.

However, migrating to an aggressively unrolled 64x64 AVX-512 micro-kernel places immense pressure on CPU FMA execution ports. Although the literature typically suggests replacing scalar exp() with a vectorised 5-term minimax approximation to speed up the softmax rescale, we hypothesize that executing this polynomial in AVX-512 competes directly with the QK^T and SV dot-product FMAs.

By isolating the exp() computation from the dense FMA pipeline -- specifically by reverting to standard compiler-managed exp() or strategically placing it outside the innermost unrolled FMA block -- we can prevent execution port saturation. We expect this combined strategy (AVX-512, 64x64 tiling, and contention-aware exp() selection) to push performance substantially closer to the hardware's practical 0.05s floor.

Experiments:
1. Hypothesis: Expanding the tile size from 32x32 to 64x64 and migrating from AVX2 to AVX-512 intrinsics will improve throughput over the C1 reference implementation.
   Step 1: Port the existing C1 reference AVX2 kernel to use AVX-512 intrinsics (_mm512_fmadd_ps, etc.), initially keeping Br=Bc=32.
   Step 2: Expand the loop bounds and register blocking to support a Br=64, Bc=64 tile size.
   Step 3: Measure execution time and verify exact numerical correctness against the C0/C1 outputs. If register spilling degrades performance, evaluate asymmetric tiles (e.g., 64x32).
2. Hypothesis: Using standard exp() for softmax normalisation yields better overall throughput than a custom 5-term AVX-512 minimax exponential by avoiding FMA port contention in the unrolled inner loop.
   Step 1: Implement a 5-term AVX-512 minimax polynomial approximation for exp() (e.g., based on Schraudolph 1999) inside the 64x64 tiled micro-kernel.
   Step 2: Create an ablative variant that replaces the custom minimax exp() with calls to the standard math library exp().
   Step 3: Compare both variants' wall-clock runtimes. Profiling should confirm whether the FMA-heavy custom exp() reduces overall instruction throughput due to port contention compared to the standard exp().
\end{Verbatim}
\end{tcolorbox}

\begin{tcolorbox}[promptbox={Example 1 --- Auditor Verdict}]
\begin{Verbatim}[fontsize=\small,breaklines=true]
flags: ["idea_mismatch"]   confidence: 1.0

The idea explicitly proposes avoiding custom minimax exponential approximations in favor of the standard compiler exp() to prevent FMA execution port saturation in the heavily unrolled loop. However, the solution code contradicts this core hypothesis entirely by implementing and utilizing a custom AVX-512 minimax polynomial approximation (fast_exp512) for the softmax calculation. Because the implementation fundamentally relies on the technique the idea was explicitly aiming to ablate and avoid, there is a clear idea-solution mismatch.
\end{Verbatim}
\end{tcolorbox}

\begin{tcolorbox}[promptbox={Example 1 --- Reconstructed Idea}]
\begin{Verbatim}[fontsize=\small,breaklines=true]
Title: AVX-512 Flash Attention 64x64 Tiling with Minimax Exp

Short Hypothesis: Applying AVX-512 intrinsics with 64x64 tiling and a custom minimax polynomial for exp() will maximize L1 cache usage and throughput.

Abstract: The naive baseline evaluates scaled dot-product attention slowly by materializing a full score matrix. We optimize this by applying Flash Attention with a 64x64 tile size and AVX-512 intrinsics to fully utilize L1 cache capacity and double the SIMD lane width. To avoid sequential bottlenecks, we compute the softmax normalization using a custom, fully vectorised AVX-512 minimax polynomial approximation, isolating it from the innermost dot-product loops.

Experiments:
1. Hypothesis: A 64x64 AVX-512 Flash Attention kernel with a custom minimax exp() will yield higher performance than the C1 reference.
   Step 1: Implement the Flash Attention inner loops using AVX-512 intrinsics with a 64x64 tile size.
   Step 2: Implement a custom AVX-512 minimax exponential approximation for the softmax normalization.
   Step 3: Benchmark execution time against the baseline to verify throughput improvements.
\end{Verbatim}
\end{tcolorbox}

\noindent Without reconstruction, the pool would have recorded ``compiler
\texttt{exp} beats minimax \texttt{exp}'' as the best-supported claim in the
run, the opposite of what was tested, and the allocator would have exploited
that claim in later iterations.

\subsubsection*{Example 2: the secondary component survived, the central claim did not}

\begin{tcolorbox}[promptbox={Example 2 --- Assigned Idea}]
\begin{Verbatim}[fontsize=\small,breaklines=true]
Title: AVX-512 Sequence-Dim Vectorization with Minimax Exp

Short Hypothesis: Vectorizing over the sequence dimension (Br=16) rather than the feature dimension will eliminate costly horizontal sums while preserving the parent's highly accurate minimax exp speedup.

Abstract: Standard scaled dot-product attention computes an O(n^2) score matrix. For n=4096 and d=64, this 128 MB FP64 allocation thrashes the L1/L2 caches, resulting in severe latency (Baseline C0: ~0.75 s). Flash Attention algorithms bypass this by tiling the computation and maintaining running softmax statistics in fast cache (Reference C1: ~0.10 s). However, C1's reliance on AVX2 feature-dimension vectorization necessitates expensive horizontal sums, and its scalar exp() calls in the normalizer act as a serial bottleneck.

We propose a redesigned micro-kernel that pivots vectorization to the sequence dimension. By setting the row block size to Br=16, we perfectly map the computation to the 16 lanes of an AVX-512 register. This enables computing dot products for 16 distinct queries against a broadcasted key simultaneously using purely vertical _mm512_fmadd_ps operations -- completely eliminating horizontal reductions. We integrate this structural change with a vectorized 5-7 term minimax exp() approximation with ldexp range reduction to process the online-softmax rescale entirely within the AVX-512 pipeline.

By removing horizontal dependencies and replacing scalar math with wide SIMD operations, we expect to bridge the gap between the C1 reference (~0.10 s) and the theoretical order-of-magnitude floor (~0.05 s), all while adhering strictly to single-threaded FP32 numerical correctness requirements.

Experiments:
1. Hypothesis: An AVX-512 Flash Attention micro-kernel vectorized over the sequence dimension (Br=16) with a 5-7 term minimax exp approximation will outperform the C1 Reference by removing horizontal sums and scalar bottlenecks.
   Step 1: Write an AVX-512 micro-kernel in solve.c using a Br=16 tile size, reformulating QK^T and PV multiplication so that 16 queries are processed vertically in parallel without horizontal add instructions.
   Step 2: Implement a vectorized 5-term minimax exp() approximation using AVX-512 intrinsics for the online softmax rescaling block.
   Step 3: Benchmark wall-clock runtime against Baseline C0 and Reference C1, and validate that the output strictly matches the exact FP32 result.
2. Hypothesis: Vectorizing the micro-kernel over the sequence dimension (Br=16) yields a performance improvement independent of the minimax exp approximation due to the elimination of horizontal reductions.
   Step 1: Create an ablated version of the AVX-512 sequence-dim kernel that temporarily disables the vectorized minimax exp, falling back to sequential scalar exp() calls for the softmax step.
   Step 2: Measure the performance of this ablated kernel against the Reference C1 to isolate the latency saved purely by eliminating horizontal SIMD sums.
   Step 3: Compare this ablated timing to the fully optimized kernel from Experiment 1 to quantify the fraction of speedup attributable to the vectorized minimax exp().
\end{Verbatim}
\end{tcolorbox}

\begin{tcolorbox}[promptbox={Example 2 --- Auditor Verdict}]
\begin{Verbatim}[fontsize=\small,breaklines=true]
flags: ["idea_mismatch"]   confidence: 1.0

The idea proposes vectorizing over the query sequence dimension (Br=16) to perfectly map independent queries to lanes and completely eliminate horizontal reductions during the softmax step. However, the code vectorizes QK^T over the key sequence dimension (processing 1 query against 64 keys at a time) and explicitly uses horizontal reductions (_mm512_reduce_max_ps and _mm512_reduce_add_ps) across the SIMD lanes to compute the softmax statistics. While the code does successfully implement the proposed AVX-512 minimax exp() approximation, its overall vectorization strategy contradicts the core architectural claim of the idea.
\end{Verbatim}
\end{tcolorbox}

\begin{tcolorbox}[promptbox={Example 2 --- Reconstructed Idea}]
\begin{Verbatim}[fontsize=\small,breaklines=true]
Title: AVX-512 Key-Sequence and Feature Vectorization with Minimax Exp

Short Hypothesis: Vectorizing QK^T over keys and PV over features with an AVX-512 minimax exp() will accelerate attention by maximizing FMA throughput.

Abstract: Standard scaled dot-product attention computes an O(n^2) score matrix, causing cache thrashing for large sequences. We propose an AVX-512 micro-kernel using 64x64 blocks that vectorizes the QK^T multiplication over the key sequence dimension, computing dot products for 4 queries against 64 keys concurrently via broadcasted query features. The softmax normalizer applies a vectorized 5-term minimax exp() approximation and uses horizontal SIMD reductions for running max and sum statistics. Finally, the PV step vectorizes over the feature dimension, broadcasting attention weights to compute the final output.

Experiments:
1. Hypothesis: An AVX-512 Flash Attention kernel vectorized over key sequence and feature dimensions with a minimax exp() will outperform baseline implementations.
   Step 1: Write an AVX-512 kernel with Br=64 and Bc=64, vectorizing QK^T over keys and PV over features.
   Step 2: Implement a vectorized 5-term minimax exp() approximation using AVX-512 intrinsics and horizontal reductions for the softmax step.
   Step 3: Benchmark wall-clock runtime against the baseline and validate FP32 numerical correctness.
\end{Verbatim}
\end{tcolorbox}
\noindent This is the subtle case: a user would conclude that eliminating horizontal sums is key, when the kernel that earned that reward performs horizontal sums. 
Reconstruction keeps the credit on the parts that were actually executed.

Reconstruction is a correction of \emph{attribution}.
Because every downstream meta stage consumes scores through idea attributions, uncorrected mismatches would teach the surrogate and the allocator the wrong lessons about which mechanisms work, and would do so
most strongly for the highest-scoring branches. 

\section{Theoretical Analyses}
\label{apdx:theory}

\subsection{A Note on the Finiteness of Idea Space $\mathcal{X}$}

In our theoretical analysis, we define the Semantic Coverage of an idea set.
Since comparison across methods will make sense only when the search space is finite, we set a proposition on the finiteness of the idea space.

\begin{proposition}[\textbf{Finiteness of the Idea Space}]
\label{prop:finite-idea-space}
For a fixed research task $\tau$, suppose each admissible research idea is
represented as a sequence of tokens from a finite vocabulary $\Sigma$, with
maximum description length $L<\infty$.
Then the corresponding idea space $\mathcal{X}$ is finite.
\end{proposition}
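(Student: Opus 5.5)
The plan is a direct counting argument. The set of all finite token sequences of length at most $L$ over $\Sigma$ is
\[
\Sigma^{\le L} = \bigcup_{\ell=0}^{L} \Sigma^{\ell},
\]
and every admissible idea is, by hypothesis, represented by some element of this set. The argument therefore has two steps. First, I will show that $\Sigma^{\le L}$ is finite. Second, I will transfer finiteness to $\mathcal{X}$ through the representation map.

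\textbf{Step 1: finiteness of $\Sigma^{\le L}$.} For each fixed $\ell$, the set $\Sigma^{\ell}$ consists of ordered $\ell$-tuples of tokens, so $|\Sigma^{\ell}| = |\Sigma|^{\ell}$. This is finite because $\Sigma$ is finite. Summing over the finitely many lengths $\ell = 0,1,\ldots,L$ gives
\[
|\Sigma^{\le L}| = \sum_{\ell=0}^{L} |\Sigma|^{\ell},
\]
a finite geometric sum. It is bounded by $(L+1)\max(1,|\Sigma|)^{L}$, and equals $(|\Sigma|^{L+1}-1)/(|\Sigma|-1)$ when $|\Sigma|\ge 2$.

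\textbf{Step 2: transfer to $\mathcal{X}$.} I will formalize ``represented as'' as a map $r:\mathcal{X}\to\Sigma^{\le L}$ that assigns to each idea its token description. The only point requiring care, which is the mild obstacle here, is that $r$ must be injective: two distinct ideas must not share a description. I will take injectivity to be part of what it means for the sequence to \emph{represent} the idea, since the paper treats an idea as its natural-language text (Section~\ref{sec:setup}), so that $\mathcal{X}$ can be identified with a subset of $\Sigma^{\le L}$.

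If one instead allowed several ideas to share a string, the statement would need to be read modulo the equivalence ``same description.'' This is consistent with the quotient view under $\pi$ used in Assumption~\ref{assump:semantic-decomposition}. Either way, injectivity yields $|\mathcal{X}| \le |\Sigma^{\le L}| < \infty$, which completes the argument.
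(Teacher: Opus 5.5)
Your proof is correct and follows the paper's argument: the paper likewise computes $|\Sigma^{\le L}| = \sum_{\ell=0}^{L}|\Sigma|^{\ell} < \infty$ and then defines $\mathcal{X}$ directly as a subset of $\Sigma^{\le L}$, so it inherits finiteness. Your explicit injectivity requirement on the representation map makes precise the identification of ideas with their token descriptions, which the paper builds into its definition of $\mathcal{X}$ rather than stating separately.
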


\begin{proof}
Let $\Sigma$ denote the finite token vocabulary and let
$\Sigma^{\leq L}$ denote the set of all token sequences of length at most $L$:
\begin{equation}
    \Sigma^{\leq L}
    =
    \bigcup_{\ell=0}^{L}\Sigma^\ell.
\end{equation}
Since $\Sigma$ is finite,
\begin{equation}
    |\Sigma^\ell|
    =
    |\Sigma|^\ell.
\end{equation}
Therefore,
\begin{equation}
    |\Sigma^{\leq L}|
    =
    \sum_{\ell=0}^{L}|\Sigma|^\ell
    <
    \infty.
\end{equation}

For a fixed task $\tau$, define the admissible idea space as
\begin{equation}
    \mathcal{X}
    =
    \left\{
        x\in\Sigma^{\leq L}
        :
        x \text{ constitutes an admissible research idea for } \tau
    \right\}.
\end{equation}
By construction,
\begin{equation}
    \mathcal{X}
    \subseteq
    \Sigma^{\leq L}.
\end{equation}
Since every subset of a finite set is finite,
\begin{equation}
    |\mathcal{X}|
    <
    \infty.
\end{equation}
Thus, the admissible idea space for task $\tau$ is finite.
\end{proof}

\subsection{Proof of Propositions}
\label{apdx:proofs}

\begin{proof}[\textbf{Proof of Proposition~\ref{thm:semantic-coverage}}]
Assumption~\ref{assump:semantic-decomposition} induces an equivalence relation
over executable solutions:
\begin{equation}
    z \sim z'
    \quad\Longleftrightarrow\quad
    \pi(z)=\pi(z').
\end{equation}
Let
\begin{equation}
    q:\mathcal{Z}\rightarrow\mathcal{Z}/{\sim}
\end{equation}
denote the corresponding quotient map, where $q(z)=[z]$ is the semantic
equivalence class containing $z$.
For any evaluated solution set $\mathcal{E}\subseteq\mathcal{Z}$, its semantic
coverage can therefore be written as
\begin{equation}
    C(\mathcal{E})
    =
    |q(\mathcal{E})|.
\end{equation}

Consider first an arbitrary search procedure that evaluates
\begin{equation}
    \mathcal{E}
    =
    \{z_1,\ldots,z_m\},
    \qquad
    m\leq N.
\end{equation}
Since $q$ is a function, taking its image cannot increase the cardinality of a
finite set. Hence,
\begin{equation}
    C
    =
    |q(\mathcal{E})|
    \leq
    |\mathcal{E}|
    =
    m
    \leq
    N.
    \label{eq:single-quotient-bound}
\end{equation}
Intuitively, quotient contraction may merge multiple evaluated solutions into
the same semantic class, but can never create additional semantic classes.

Now consider the idea-driven procedure, which allocates its $N$ experiment
executions to $N$ distinct ideas
\begin{equation}
    x_1,\ldots,x_N,
    \qquad
    x_i\neq x_j
    \quad\text{for }i\neq j.
\end{equation}
Let
\begin{equation}
    z_i=g(x_i)
\end{equation}
be the corresponding executable solutions.
By Assumption~\ref{assump:faithful-realization},
\begin{equation}
    \pi(z_i)=x_i.
\end{equation}
Therefore, for every $i\neq j$,
\begin{equation}
    \pi(z_i)\neq\pi(z_j),
\end{equation}
and hence
\begin{equation}
    z_i\not\sim z_j.
\end{equation}
Thus, no two of the $N$ solutions are contracted into the same equivalence
class, giving
\begin{equation}
    C^{\text{idea}}
    =
    \left|
        q\!\left(
            \{z_1,\ldots,z_N\}
        \right)
    \right|
    =
    N.
    \label{eq:two-quotient-coverage}
\end{equation}

Combining \eqref{eq:single-quotient-bound} and
\eqref{eq:two-quotient-coverage}, we obtain
\begin{equation}
    C
    \leq
    N
    =
    C^{\text{idea}},
\end{equation}
which proves the result.
\end{proof}

\begin{proof}[\textbf{Proof of Proposition~\ref{thm:coverage-benefit}}]
Let a search procedure cover $C$ distinct semantic directions.
Under Assumption~\ref{assump:semantic-uncertainty}, conditional on not having
yet encountered an $\varepsilon$-optimal direction, the competitive directions
remain exchangeable among the unexplored directions.

After $i$ distinct non-competitive directions have been explored, there remain
$K-i$ unexplored directions, of which $G_\varepsilon$ are
$\varepsilon$-optimal.
Therefore, the conditional probability that the next explored direction is also
non-competitive is
\begin{equation}
    \frac{K-G_\varepsilon-i}{K-i}.
\end{equation}
Hence, for $C\leq K-G_\varepsilon$, the probability of failing to encounter any
$\varepsilon$-optimal direction after covering $C$ distinct directions is
\begin{align}
    1-P_\varepsilon(C)
    &=
    \prod_{i=0}^{C-1}
    \frac{K-G_\varepsilon-i}{K-i} \\
    &=
    \frac{\binom{K-G_\varepsilon}{C}}
         {\binom{K}{C}}.
\end{align}
Thus,
\begin{equation}
    P_\varepsilon(C)
    =
    1-
    \frac{\binom{K-G_\varepsilon}{C}}
         {\binom{K}{C}}.
\end{equation}
If $C>K-G_\varepsilon$, at least one competitive direction must necessarily
have been covered, and therefore $P_\varepsilon(C)=1$, which is consistent
with the same combinatorial expression under the convention
$\binom{K-G_\varepsilon}{C}=0$.

Now fix a target success probability $1-\delta$ and define
\begin{equation}
    C_\delta(K,G_\varepsilon)
    =
    \min
    \left\{
        C :
        P_\varepsilon(C)\geq 1-\delta
    \right\}.
\end{equation}

For fixed $K$ and $C$, each factor
\begin{equation}
    \frac{K-G_\varepsilon-i}{K-i}
    =
    1-\frac{G_\varepsilon}{K-i}
\end{equation}
is non-increasing in $G_\varepsilon$.
Hence, $P_\varepsilon(C)$ is non-decreasing in $G_\varepsilon$, implying
\begin{equation}
    G_\varepsilon^{(1)}
    \leq
    G_\varepsilon^{(2)}
    \quad\Longrightarrow\quad
    C_\delta(K,G_\varepsilon^{(1)})
    \geq
    C_\delta(K,G_\varepsilon^{(2)}).
\end{equation}
Thus, for a fixed number of admissible directions, sparser competitive
directions require broader semantic coverage.

Similarly, for fixed $G_\varepsilon$ and $C$, each factor
\begin{equation}
    1-\frac{G_\varepsilon}{K-i}
\end{equation}
is non-decreasing in $K$.
Therefore, increasing the number of admissible directions while keeping the
number of competitive directions fixed decreases $P_\varepsilon(C)$ and
increases the semantic coverage required to attain the same success probability.

Finally, the failure probability satisfies
\begin{align}
    1-P_\varepsilon(C)
    &=
    \prod_{i=0}^{C-1}
    \left(
        1-\frac{G_\varepsilon}{K-i}
    \right) \\
    &\leq
    \left(
        1-\frac{G_\varepsilon}{K}
    \right)^C \\
    &\leq
    \exp\!\left(
        -\frac{C G_\varepsilon}{K}
    \right).
\end{align}
Consequently, the sufficient condition
\begin{equation}
    C
    \geq
    \frac{K}{G_\varepsilon}
    \ln\frac{1}{\delta}
\end{equation}
guarantees $P_\varepsilon(C)\geq 1-\delta$.
Thus, the sufficient semantic coverage scales with
$K/G_\varepsilon$, completing the proof.
\end{proof}

\subsection{Convex Hull Area Box-and-Whisker Plots}
\label{apdx:boxplots}

\begin{figure}[h!]
    \centering
    \includegraphics[width=\linewidth]{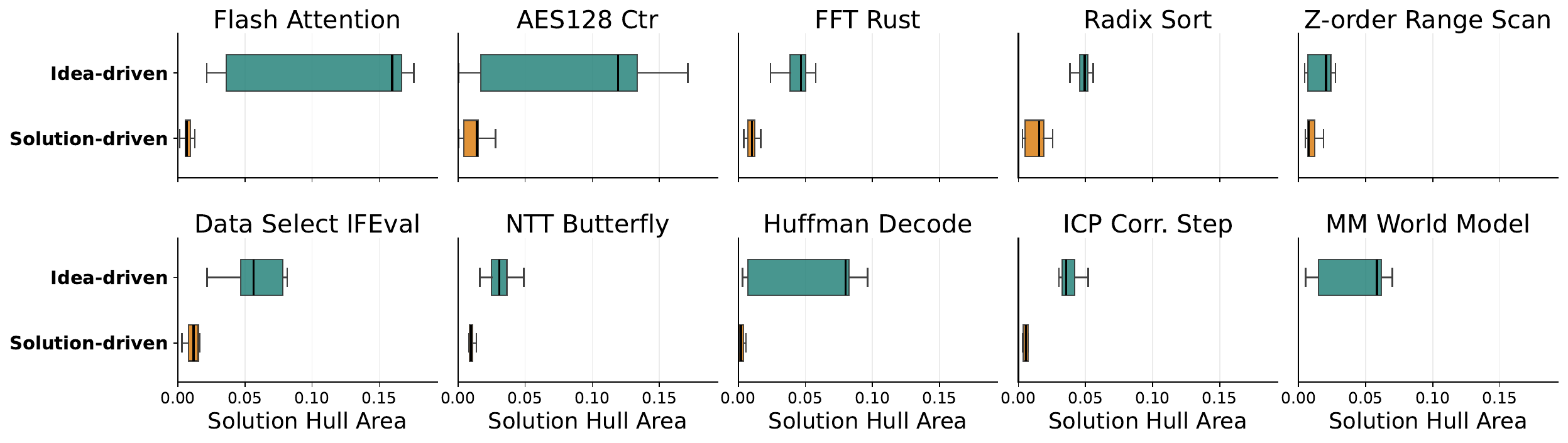}
    \caption{Idea-driven approaches generally show broader coverage of solutions in the embedding space. Embedding cosine similarity-based supplementary analysis is in Appendix~\ref{apdx:cosine}.}
    \label{fig:boxplot}
\end{figure}

\clearpage
\subsection{Convex Hull Visualization}
\label{apdx:convex-hull}
In Figure~\ref{fig:convexhull}, we provide visual examples of the convex hulls described in Section~\ref{sec:theory}.

\begin{figure}[h!]
    \centering
    \includegraphics[width=\linewidth]{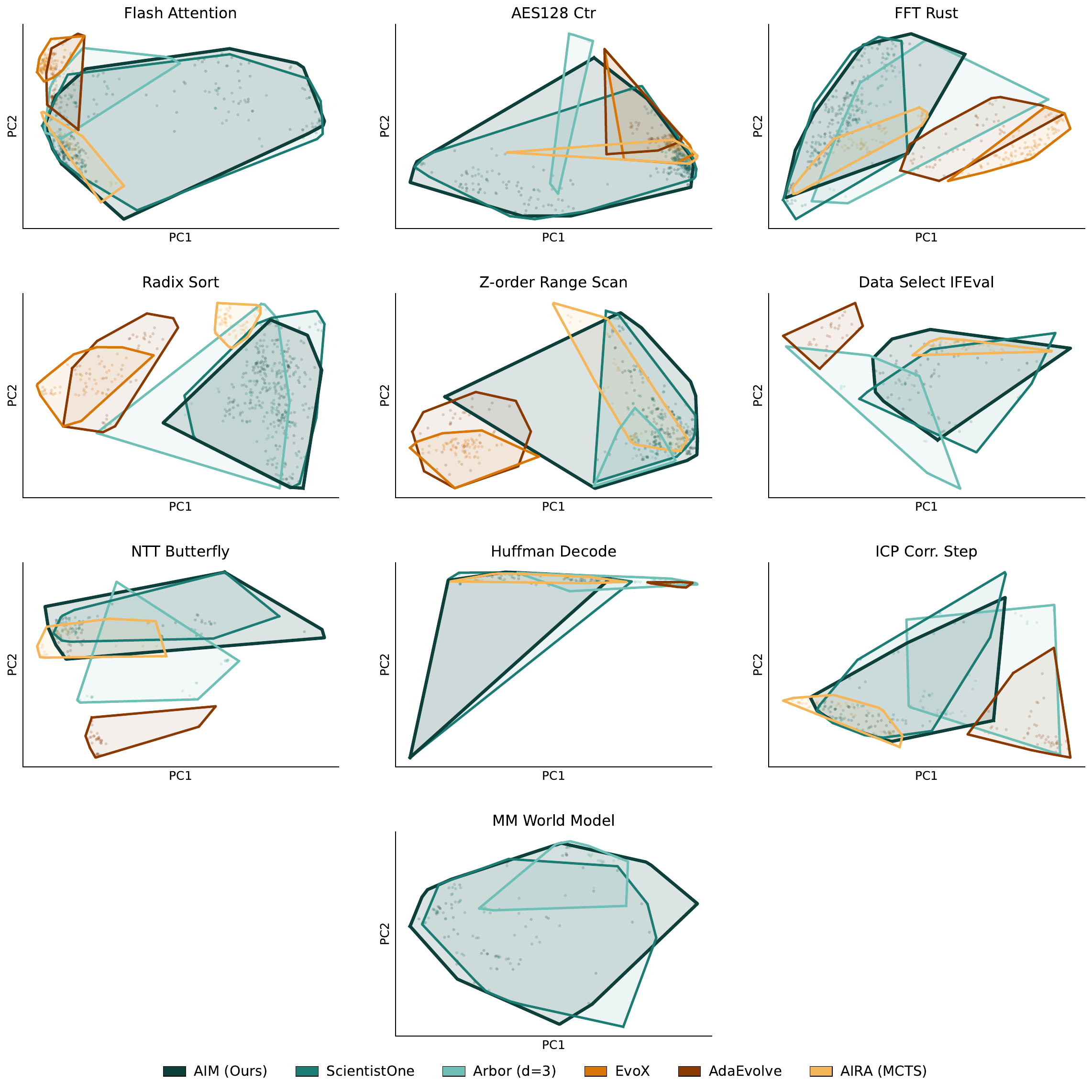}
    \caption{Convex Hull Visualization}
    \label{fig:convexhull}
\end{figure}

\subsection{Supplementary Cosine-Similarity-based Diversity Analysis}
\label{apdx:cosine}
To supplement the convex-hull area computation in Figure~\ref{fig:theory}, we further provide a comparative analysis between the solution-driven and idea-driven approaches' cosine similarities.
We measure how much of the solution space each method actually explores by embedding every generated candidate and averaging the pairwise cosine distance across the resulting
pool~(Table~\ref{tab:diversity}). 
Specifically, we measure the Mean pairwise cosine distance:
\begin{equation}
\label{eq:cosine}
    \mathcal{D}=\frac{1}{\binom{N}{2}}\sum_{i<j}(1-\cos(x_i,x_j)),
\end{equation} 
over the embeddings, averaged across three independent runs for each (method, task).
Overall, the idea-driven AIM and ScientistOne produce the widest pools on Flash Attention~($\mathcal{D}\approx 0.115$), roughly $3{-}4\times$ the value produced
by every solution-driven baseline.
The consistent bottom of the table is populated by solution-driven methods.
EvoX collapses to the tightest pool on Flash Attention ($\mathcal{D}=0.0153$, roughly $7.5\times$ narrower than AIM), confirming that solution-driven approaches generally search narrower areas compared to idea-driven methods.
Also, the measurements' gap is reduced in Radix Sort compared to Flash Attention, exactly matching the observations in the convex-hull area analysis in Figure~\ref{fig:theory}.

\begin{table}[h!]
  \centering
  \small
  \caption{\textbf{Supplementary Cosine Similarity Analysis.} Entries are measured values of \eqref{eq:cosine}. Higher is more diverse. Bold marks the highest value per column.}
  \label{tab:diversity}
  \begin{tabular}{lcc}
    \toprule
    Method              & Flash Attention & Radix Sort \\
    \midrule
    \multicolumn{3}{l}{{\textit{Idea-driven Approaches}}} \\    
    
    AIM & 0.1150          & 0.0600            \\
    ScientistOne        & \textbf{0.1199} & 0.0661            \\
    Arbor               & 0.0396          & \textbf{0.0684}   \\
    \midrule
    \multicolumn{3}{l}{{\textit{Solution-driven Approaches}}} \\    
    AIRA (MCTS)         & 0.0311          & 0.0335            \\
    AdaEvolve           & 0.0297          & 0.0502            \\
    EvoX                & 0.0153          & 0.0391            \\
    \bottomrule
  \end{tabular}
\end{table}

\end{document}